\newcommand{\quadratic}{{\cal Q}} 
\newcommand{\param}{{\boldsymbol \beta}} 
\newcommand{\dparam}{{\boldsymbol \alpha}} 
\newcommand{\para}{{\beta}} 
\renewcommand{\b}{\mathbf{b}} 
\renewcommand{\c}{\mathbf{c}} 
\newcommand{\vv}{\mathbf{v}} 
\newcommand{\uv}{\mathbf{u}} 
\newcommand{\labels}{\mathbf{y}} 
\newcommand{\x}{\mathbf{x}} 
\newcommand{\y}{\mathbf{y}} 
\newcommand{\EE}{\mathbf{E}} 
\newcommand{\glmp}{\mathbf{z}} 
\newcommand{\Z}{\mathbf{Z}} 
\renewcommand{\H}{{\mathbf H}} 
\newcommand{\G}{{\mathbf G}} 
\renewcommand{\S}{{\cal S}} 
\renewcommand{\L}{{\cal L}} 
\newcommand{\I}{\mathbf{I}} 
\newcommand{\um}{\mathbf{U}} 
\newcommand{\vm}{\mathbf{V}} 
\newcommand{\mSigma}{\mathbf{\Sigma}} 
\newcommand{\mLambda}{\mathbf{\Lambda}} 
\newcommand{\mGamma}{\mathbf{\Gamma}}
\newcommand{\xm}{\mathbf{X}} 
\newcommand{\bigo}{\mathcal{O}}
\newcommand{\reg}{\mu} 
\newcommand{\R}{\mathbb{R}} 
\newcommand{\rmat}{\Psi} 
\newcommand{\risk}{{\cal R}} 
\newcommand{\E}{\mathbf{E}} 
\renewcommand{\S}{{\cal S}} 
\newtheorem{theorem}{Theorem}
\newtheorem*{lemma*}{Lemma}
\newtheorem*{corollary*}{Corollary}
\newtheorem*{theorem*}{Theorem}
\newtheorem{lemma}[theorem]{Lemma}
\newtheorem{corollary}[theorem]{Corollary}
\newcolumntype{S}{>{\centering\arraybackslash} m{.10\linewidth} }
\newcolumntype{T}{>{\centering\arraybackslash} m{.3\linewidth} }
\newcolumntype{D}{>{\centering\arraybackslash} m{.15\linewidth} }
\newcolumntype{K}{>{\centering\arraybackslash} m{.4\linewidth} }
\title{Accelerated Dual Learning by \\ Homotopic Initialization}
\author{
  Hadi Daneshmand, Hamed Hassani, Thomas Hofmann \\
  Department of Computer Science, ETH Zurich, Switzerland \\
   \footnotesize{\texttt{\{hadi.daneshmand,hamed,thomas.hofmann\}@inf.ethz.ch}}
}
\begin{document}

\maketitle

\begin{abstract}
Gradient descent and coordinate descent are well understood in terms of their asymptotic behavior, but less so in a transient regime often used for approximations in machine learning. We investigate how proper initialization can have a profound effect on finding near-optimal solutions quickly. We show that a certain property of a data set, namely the boundedness of the correlations between eigenfeatures and the response variable, can lead to faster initial progress than expected by commonplace analysis. Convex optimization problems can tacitly benefit from that, but this automatism does not apply to their dual formulation. We analyze this phenomenon and devise provably good  initialization strategies for dual optimization as well as heuristics for the non-convex case, relevant for deep learning. We find our predictions and methods to be experimentally well-supported.
\end{abstract}

\section{Introduction}

The standard approach to supervised machine learning is to cast it as an optimization problem with a suitable loss function  and a regularizer. Learning then amounts to minimizing the regularized training risk over a chosen parametric model family. However, this view obstructs the fact that the ultimate goal is in minimizing the expected risk on unseen data and that the regularized empirical risk serves merely as a proxy for the former. Optimization for machine learning is thus by design a quest for approximate solutions. This offers learning-specific tradeoffs between statistical accuracy and computational complexity, such as early stopping \cite{yao2007early}, convex relaxation \cite{chandrasekaran2013computational}, data sub-sampling \cite{daneshmand2016starting}, or accepting locally optimal solutions as in deep learning \cite{choromanska2015loss,soudry2016no}. Whereas optimization algorithms are often analyzed in terms of their convergence to the optimum, in learning the interest should primarily be on how quickly one can find suboptimal solutions of sufficient quality (relative to the size of the data sample). 

One key question in this context is what initialization strategy to use for the weights of a model, as the initial parameter choice has a huge impact on the transient phase of iterative learning algorithms. There are many examples that clearly demonstrate that initialization matters, for instance in deep learning via weight transfer \cite{yosinski2014transferable},  data-dependent initialization \cite{krahenbuhl2015data}, or in avoidance of saddle points \cite{lee2016gradient}. The same is true for unsupervised learning problems like matrix completion \cite{keshavan2010matrix} or $K$-means \cite{arthur2007k}. Notably, for convex learning, the role of initialization has been somewhat neglected, mainly because of the guaranteed convergence to the global optimum. However, for massive data sets, initialization can have a huge practical impact on the scalability of an algorithm. Moreover, the (easier) convex setting also allows for a more rigorous analysis of the effect of initialization and the reasons for the slow-down caused by poor starting points. In this vein, the current paper provides a detailed analysis of convex learning, specifically of ridge regression and generalized linear models, that suggests to pre-train models with artificially increased regularization and to use this as an initialization in the spirit of homotopy or continuation methods \cite{allgower2012numerical,mokhtari2016adaptive}. The focus of our work is on the dual problem, because it offers more flexibility in the data representation (i.e.~through the use of kernels), allows for fast algorithms like stochastic coordinate descent \cite{shalev2013stochastic} that exhibit linear convergence and is also amenable to data sharding and communication-efficient distributed implementations \cite{jaggi2014communication}. In particular, our method offers an yet unexplored direction to get speed-ups for sequential or parallel dual algorithms. 

\section{Ridge Regression}

\subsection{Primal Formulation} 

For concreteness, we perform an in-depth analysis of ridge regression \cite{hoerl1970ridge}.  Given a training set of $n$ observations  $(\x_i,y_i)$, with inputs $\x_i \in \mathbb{R}^d$ and reponses $y_i \in \R$, we denote by $\xm \in \R^{n\times d}$ the data matrix and by $\y \in \R^n$ the response vector. Let us assume w.l.o.g that $\y$ is mean zero and unit variance and that the data is centered.The ridge regression objective with regularization $\mu>0$, can be expressed as
\begin{align} \label{eq:quadratic_fun}
	\quadratic(\param) = \frac{1}{2} \param^\top \H \param  - \param^\top \b,
	\quad \H := \frac 1n \xm^\top \xm + \mu \I \in \R^{d \times d}, 
	\quad \b := \frac 1n \xm^\top \y \in \R^d
\end{align}
The optimal solution is explicitly given by the normal equations
\begin{align}
\nabla_{\param} \quadratic \stackrel !=0 \iff \param^* = \H^{-1} \b
= \left( \xm^\top \xm + n \mu \I\right) ^{-1} \xm^\top \y \,.
\end{align}
Note that we can interpret key quantities as expectations under the training distribution, which will subsequently become relevant. In particular note that $b_j = \E[X_j Y]$ is the covariance between the $j$-th feature dimension and the response. Obviously, the elements of $\frac 1n \xm^\top \xm$ can be expressed as $\E[X_iX_j]$, i.e.~they encode the empirical variances and covariances of the features. We also denote by $\E^\mu[X_i^2] = \E[X^2_i]+\mu$ the biased varince estimated that we get through regularization. 

For the purpose of analysis and to ease the exposition, we orthogonally transform the data into an eigenfeature representation. To that extend we use the scaled SVD of $\xm= \sqrt n \um \mSigma \vm^\top$, such that $\xm^\top \xm = n \vm \mSigma^\top \mSigma \vm^\top$ and define $\Z := \mathbf X \mathbf V = \sqrt n \mathbf U \mSigma$. Correspondingly the parameters are transformed via $\param \leftarrow \vm^\top \param$, we arrive at the diagonalized objective 
 \begin{align}
Q(\param) 
& = \frac 12 \param^\top \!\! \underbrace{\left( \mSigma^\top \mSigma + \mu \I \right)}_{=:\mLambda}  \! \param - \param^\top \underbrace{\frac{1}{n}\Z^\top  \y}_{=:\c} 
\;=\; \frac 12 \sum_{j=1}^d  \beta_j^2 \underbrace{\E^\mu[Z_j^2]}_{=\lambda_j} - \beta_j \underbrace{\E[YZ_j]}_{=c_j}
\label{eq:diagonal} 
\end{align}
\subsection{Gradient Descent Analysis}

Gradient descent (GD) optimizes an objective through iterative gradient updates with step size $\gamma>0$
\begin{align}
	\param^{t+1} = \param^{t} - \gamma \nabla_\param \quadratic(\param^{t}).
\end{align}
For the diagonalized quadratic objective, the iterate sequence is explicitly given as follows. 
\begin{lemma} \label{lemma:gd_iterates} GD initialized at $\param^0$ yields the iterate sequence
	\begin{align}
		\param^{t}  & = \param^* + \left(\I - \gamma\mLambda \right)^t (\param^0 - \param^*)
	\end{align} 
\end{lemma}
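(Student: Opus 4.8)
The plan is to reduce the gradient-descent recursion to a homogeneous linear fixed-point iteration in the error variable and then unroll it by induction. First I would compute the gradient of the diagonalized quadratic from \eqref{eq:diagonal}, namely $\nabla_\param \quadratic(\param) = \mLambda \param - \c$, and substitute it into the update rule $\param^{t+1} = \param^t - \gamma \nabla_\param \quadratic(\param^t)$ to obtain the affine recursion $\param^{t+1} = (\I - \gamma\mLambda)\param^t + \gamma\c$.

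The key observation is that the optimum $\param^*$ is the fixed point of this affine map: since $\nabla_\param\quadratic(\param^*) = 0$ gives $\mLambda\param^* = \c$, we have $\param^* = (\I-\gamma\mLambda)\param^* + \gamma\c$. Subtracting this identity from the affine recursion cancels the constant term $\gamma\c$ and leaves a homogeneous recursion in the error, namely $\param^{t+1} - \param^* = (\I - \gamma\mLambda)(\param^t - \param^*)$. This is the crux of the argument, and it is where the choice to work in the diagonalized coordinates of \eqref{eq:diagonal} pays off, since $\mLambda$ is diagonal.

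I would then close the argument by induction on $t$: the base case $t=0$ is immediate, and applying the one-step error recursion to the inductive hypothesis multiplies by one additional factor of $(\I - \gamma\mLambda)$, yielding $\param^t - \param^* = (\I - \gamma\mLambda)^t(\param^0 - \param^*)$, which rearranges to the claimed form. There is no genuine obstacle here, as the statement is essentially the closed form of a linear recursion; the only point requiring minor care is that the matrix powers are unambiguous, which is immediate because $\mLambda$ is diagonal and hence everything could equivalently be verified coordinate-wise, each component obeying $\beta_j^t - \beta_j^* = (1-\gamma\lambda_j)^t(\beta_j^0 - \beta_j^*)$.
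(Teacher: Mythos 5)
Your proof is correct and is exactly the standard argument the paper implicitly relies on (the lemma is stated without proof): computing $\nabla_\param \quadratic(\param) = \mLambda\param - \c$, recognizing $\param^*$ as the fixed point of the affine map $\param \mapsto (\I - \gamma\mLambda)\param + \gamma\c$, and unrolling the resulting homogeneous error recursion $\param^{t+1} - \param^* = (\I - \gamma\mLambda)(\param^t - \param^*)$ by induction. Your coordinate-wise remark $\beta_j^t - \beta_j^* = (1-\gamma\lambda_j)^t(\beta_j^0 - \beta_j^*)$ also matches precisely how the paper later uses the lemma in its per-coordinate suboptimality analysis.
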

\begin{corollary} \label{corrolary:gd_fixpoint} For any $\gamma < 2 \lambda_1^{-1}$ and initial $\param^0$, the iterate sequence generated by GD is guaranteed to converge to $\param^*$ at a linear rate. 
%
\end{corollary}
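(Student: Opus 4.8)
The plan is to read off convergence directly from the closed-form iterates in Lemma~\ref{lemma:gd_iterates}. Subtracting $\param^*$ isolates the error, $\param^t - \param^* = (\I - \gamma \mLambda)^t (\param^0 - \param^*)$, so the entire transient behavior is governed by powers of the single matrix $\I - \gamma \mLambda$. Since we have passed to the eigenfeature basis, $\mLambda = \mSigma^\top \mSigma + \mu \I$ is diagonal with entries $\lambda_1 \ge \dots \ge \lambda_d$, and because $\mu > 0$ every $\lambda_j \ge \mu > 0$ is strictly positive. Consequently $(\I - \gamma \mLambda)^t$ is again diagonal, with entries $(1-\gamma\lambda_j)^t$, and its spectral norm is exactly $\max_j |1 - \gamma \lambda_j|^t$.

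The first substantive step is to bound $\rho := \max_j |1-\gamma\lambda_j|$ strictly below one. For a fixed coordinate, $|1 - \gamma\lambda_j| < 1$ is equivalent to $0 < \gamma\lambda_j < 2$: the left inequality holds because $\gamma>0$ and $\lambda_j>0$, while the right inequality reads $\gamma < 2\lambda_j^{-1}$. Since $\lambda_1 = \max_j \lambda_j$, the most stringent of these constraints is $\gamma < 2\lambda_1^{-1}$, which is precisely the hypothesis; hence this one condition forces $|1-\gamma\lambda_j|<1$ simultaneously for every $j$, and therefore $\rho < 1$.

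With $\rho < 1$ established, the conclusion is immediate. Taking norms in the error recursion and using that the diagonal factor has operator norm $\rho^t$ gives $\|\param^t - \param^*\| \le \rho^t\,\|\param^0 - \param^*\|$, i.e.\ geometric (linear) convergence to $\param^*$ at rate $\rho$, uniformly over all starting points $\param^0$.

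I do not expect a genuine obstacle here, as this is essentially a textbook spectral-radius argument applied to the explicit iterates. The only points meriting care are (i) using $\mu > 0$ to rule out a vanishing eigenvalue, which would otherwise leave a non-contracting coordinate and break linear convergence, and (ii) verifying that $\lambda_1$ is indeed the binding (largest) eigenvalue, so that the stated threshold $\gamma < 2\lambda_1^{-1}$ is exactly what is needed to contract in every direction at once.
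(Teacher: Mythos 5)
Your proof is correct and matches the paper's intent exactly: the paper states Corollary~\ref{corrolary:gd_fixpoint} without an explicit proof, treating it as an immediate consequence of the closed-form iterates in Lemma~\ref{lemma:gd_iterates}, and your spectral argument --- that $\gamma < 2\lambda_1^{-1}$ together with $\lambda_j \ge \mu > 0$ forces $\max_j |1-\gamma\lambda_j| < 1$, giving $\|\param^t - \param^*\| \le \rho^t \|\param^0 - \param^*\|$ --- is precisely that implied argument. Your two points of care (using $\mu>0$ to exclude a non-contracting direction, and checking that $\lambda_1$ is the binding eigenvalue) are both sound and consistent with the paper's conventions, since $\kappa = \lambda_1/\lambda_d$ confirms $\lambda_1$ is the largest eigenvalue of $\mLambda$.
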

\begin{corollary} 
\label{corrolary:slow}
Define the condition number as $\kappa := \lambda_{1}/\lambda_{d}$. The rate of parameter convergence of $\beta_d$ is lower bounded by $1-\gamma \lambda_d
 > 1 - 2/\kappa$.
\end{corollary}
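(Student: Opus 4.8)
The plan is to extract the dynamics of the single coordinate $\beta_d$ from the closed-form iterate in Lemma~\ref{lemma:gd_iterates} and then bound its per-step contraction factor using the step-size restriction that guarantees convergence. First I would use the fact that $\mLambda = \diag(\lambda_1,\dots,\lambda_d)$, so that $(\I-\gamma\mLambda)^t$ is diagonal with entries $(1-\gamma\lambda_j)^t$ and the coordinates decouple. Reading off the $d$-th component of the identity in Lemma~\ref{lemma:gd_iterates} gives
\begin{align}
\beta_d^{t} - \beta_d^{*} = (1-\gamma\lambda_d)^{t}\,(\beta_d^{0} - \beta_d^{*}),
\end{align}
so the error $|\beta_d^{t}-\beta_d^{*}|$ decays geometrically with per-iteration factor exactly $|1-\gamma\lambda_d|$; this is what I take ``rate of parameter convergence of $\beta_d$'' to mean.

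Next I would bound this factor from below. The trivial inequality $|1-\gamma\lambda_d| \ge 1-\gamma\lambda_d$, valid for every $\gamma$, already establishes the first inequality in the statement. To obtain the second inequality, I would invoke the step-size regime $\gamma < 2\lambda_1^{-1}$ under which Corollary~\ref{corrolary:gd_fixpoint} guarantees convergence. Multiplying by $\lambda_d>0$ and using $\kappa = \lambda_1/\lambda_d$ yields $\gamma\lambda_d < 2\lambda_d/\lambda_1 = 2/\kappa$, hence $1-\gamma\lambda_d > 1-2/\kappa$, which is precisely the claimed bound.

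Chaining the two steps gives $|1-\gamma\lambda_d| \ge 1-\gamma\lambda_d > 1-2/\kappa$, completing the argument. I do not expect a genuine technical obstacle, since the result is an immediate consequence of the diagonalization together with the admissible step-size range. The only points requiring care are to state precisely what \emph{rate} refers to --- namely the geometric contraction factor of the $d$-th coordinate's error --- and to note that, because $\lambda_d$ is the smallest eigenvalue, this coordinate is the bottleneck, so its slow rate of $1-2/\kappa$ (close to $1$ for ill-conditioned problems) is what limits the overall convergence speed.
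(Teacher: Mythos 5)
Your proof is correct and follows exactly the route the paper intends: reading off the decoupled $d$-th coordinate contraction factor $1-\gamma\lambda_d$ from the closed form in Lemma~\ref{lemma:gd_iterates}, then combining it with the admissible step-size range $\gamma < 2\lambda_1^{-1}$ from Corollary~\ref{corrolary:gd_fixpoint} to obtain $\gamma\lambda_d < 2/\kappa$. The paper treats this as immediate and offers no separate proof, so your write-up is if anything more careful (e.g., the explicit $|1-\gamma\lambda_d| \ge 1-\gamma\lambda_d$ step and the clarification of what ``rate'' means), with no gaps.
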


Our interest is in the convergence of the objective value. We can easily (and exactly) relate distance in parameter space to suboptimality through $\mLambda$. 
\begin{lemma} \label{lemma:subopt_quad}
Let  $\param^*$ be the minimizer of \eqref{eq:quadratic_fun} and $\quadratic^*
\triangleq Q(\param^*)$. Then for any $\param$
\begin{align} \label{eq:subopt_quad}
	\quadratic(\param) - \quadratic^* =  \frac{1}{2}	(\param - \param^*) \mLambda (\param - \param^* ) = \frac 12 \sum_{j=1}^d \E^\mu[Z_j^2]  (\beta_j- \beta^*_j) ^2 \,.
\end{align}
\end{lemma}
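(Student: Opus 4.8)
The plan is to exploit the fact that the diagonalized objective $\quadratic$ in \eqref{eq:diagonal} is an exact quadratic with constant Hessian $\mLambda$, so that its second-order Taylor expansion about the minimizer $\param^*$ is exact and carries no remainder term. First I would recall the single structural fact I need beyond the definition of $\quadratic$: since $\param^*$ minimizes $\quadratic$, the first-order optimality condition gives $\nabla_\param \quadratic(\param^*) = \mLambda \param^* - \c = 0$, equivalently $\c = \mLambda \param^*$.

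Next I would substitute $\param = \param^* + (\param - \param^*)$ into $\quadratic(\param) = \tfrac12 \param^\top \mLambda \param - \param^\top \c$ and expand. Using $\c = \mLambda \param^*$ and the symmetry of the diagonal matrix $\mLambda$ (so that $\param^\top \mLambda \param^* = (\param^*)^\top \mLambda \param$), the cross terms combine to leave exactly $\quadratic(\param) = \quadratic^* + \tfrac12 (\param - \param^*)^\top \mLambda (\param - \param^*)$, where the linear term vanishes precisely because $\nabla \quadratic(\param^*) = 0$. Rearranging yields the first claimed equality. Equivalently, one can subtract $\quadratic(\param^*) = -\tfrac12 (\param^*)^\top \mLambda \param^*$ directly from $\quadratic(\param)$ and complete the square; this is the same computation organized differently.

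Finally, the second equality is immediate from diagonality: writing the quadratic form $\tfrac12 (\param - \param^*)^\top \mLambda (\param - \param^*)$ componentwise, using $\mLambda = \diag(\lambda_1, \dots, \lambda_d)$ together with the identification $\lambda_j = \E^\mu[Z_j^2]$ established in \eqref{eq:diagonal}, gives $\tfrac12 \sum_{j=1}^d \E^\mu[Z_j^2] (\beta_j - \beta_j^*)^2$. There is no genuine obstacle here: the result is essentially a one-line consequence of the objective being quadratic, and the only points requiring care are invoking the optimality relation $\c = \mLambda \param^*$ (rather than re-deriving $\param^*$ explicitly) and bookkeeping the cross terms consistently through the symmetry of $\mLambda$.
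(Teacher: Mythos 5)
Your proof is correct and matches the argument the paper leaves implicit: the lemma is stated as an immediate consequence of the objective being an exact quadratic, and your expansion of $\quadratic$ about $\param^*$ using the optimality condition $\mLambda\param^* = \c$ (which kills the linear term, with no Taylor remainder) followed by reading off the diagonal entries $\lambda_j = \E^\mu[Z_j^2]$ is precisely that routine computation. No gaps; nothing further is needed.
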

Let us compare the worst case rate we would expect based on $\kappa$ in Corollary \ref{corrolary:slow} with the empirical suboptimality of the GD iterate sequence on some (randomly selected) data sets. The plots in Figure \ref{fig:worst-better} show that initially (and for the relevant transient phase) the observed reduction of suboptimality is typically much better than what may be expected based on the convergence rates in parameter space. This is a striking behavior that our work aims to explain and to better exploit. 

\begin{figure}
\begin{footnotesize}
\begin{sc}
\begin{tabular}{@{\hspace{0.1cm}}c@{\hspace{0.1cm}}c@{\hspace{0.1cm}}c@{\hspace{0.1cm}}c}
\includegraphics[width=0.2\linewidth]{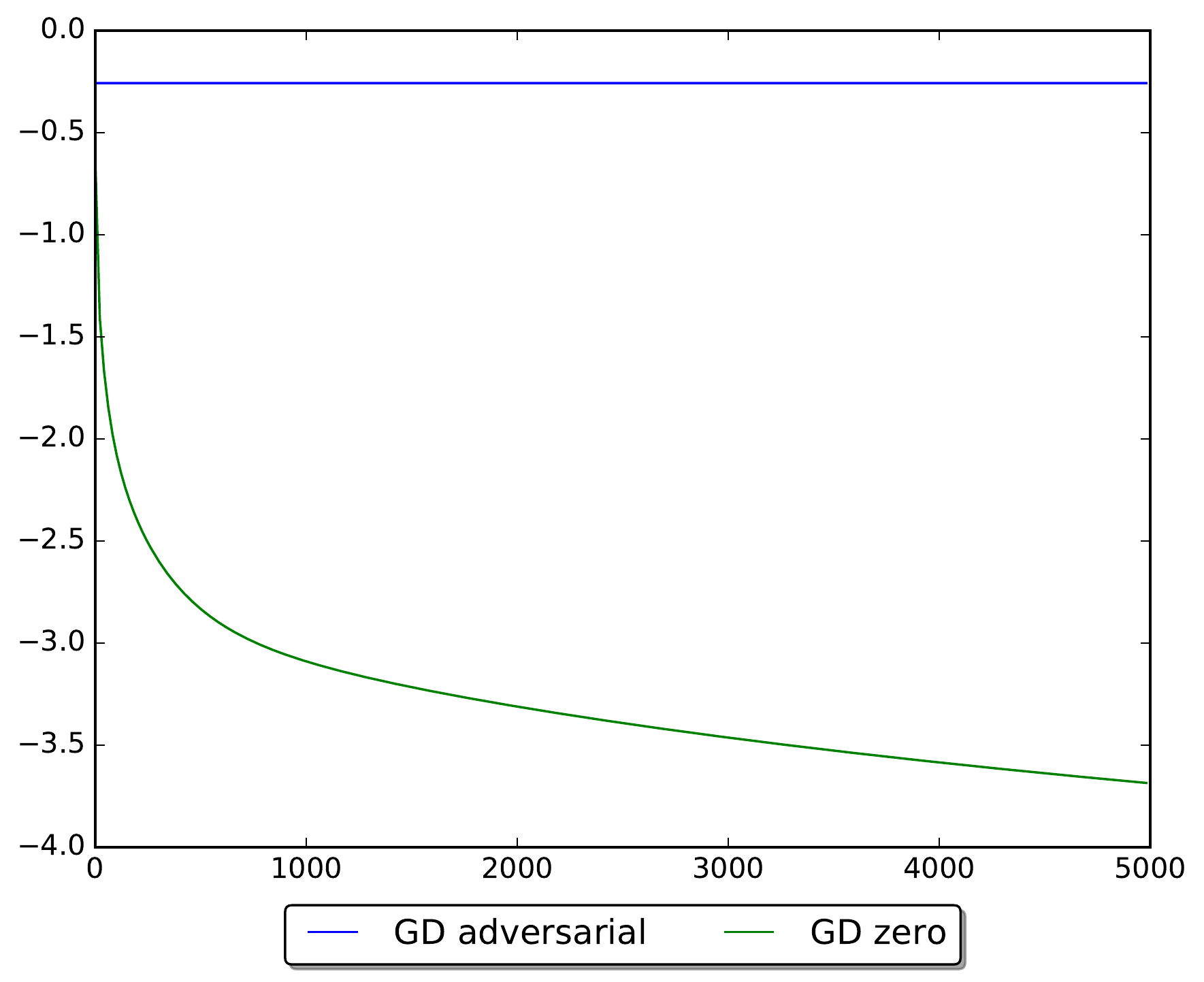}  & 
  \includegraphics[width=0.2\linewidth]{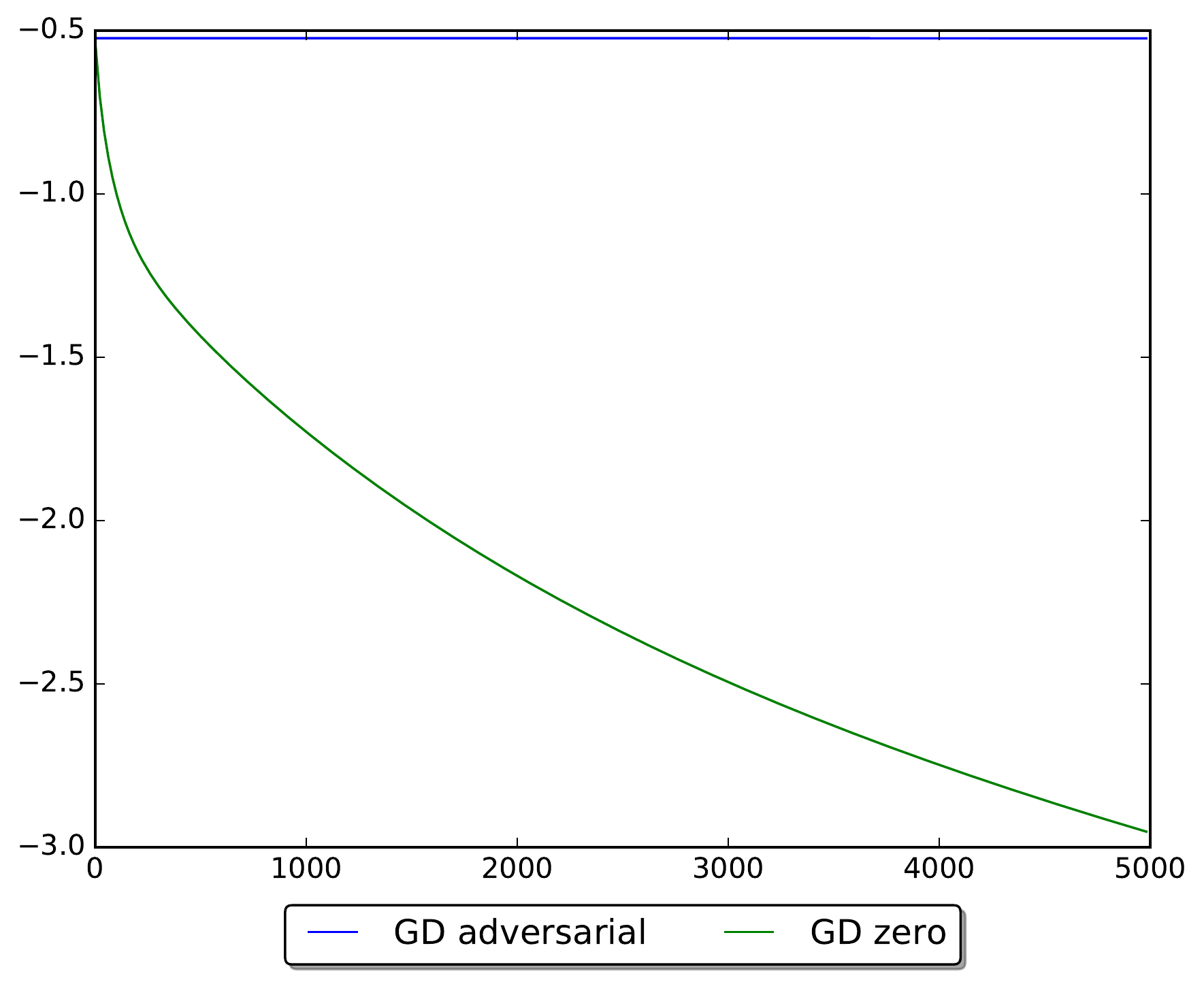} &
  \includegraphics[width=0.2\linewidth]{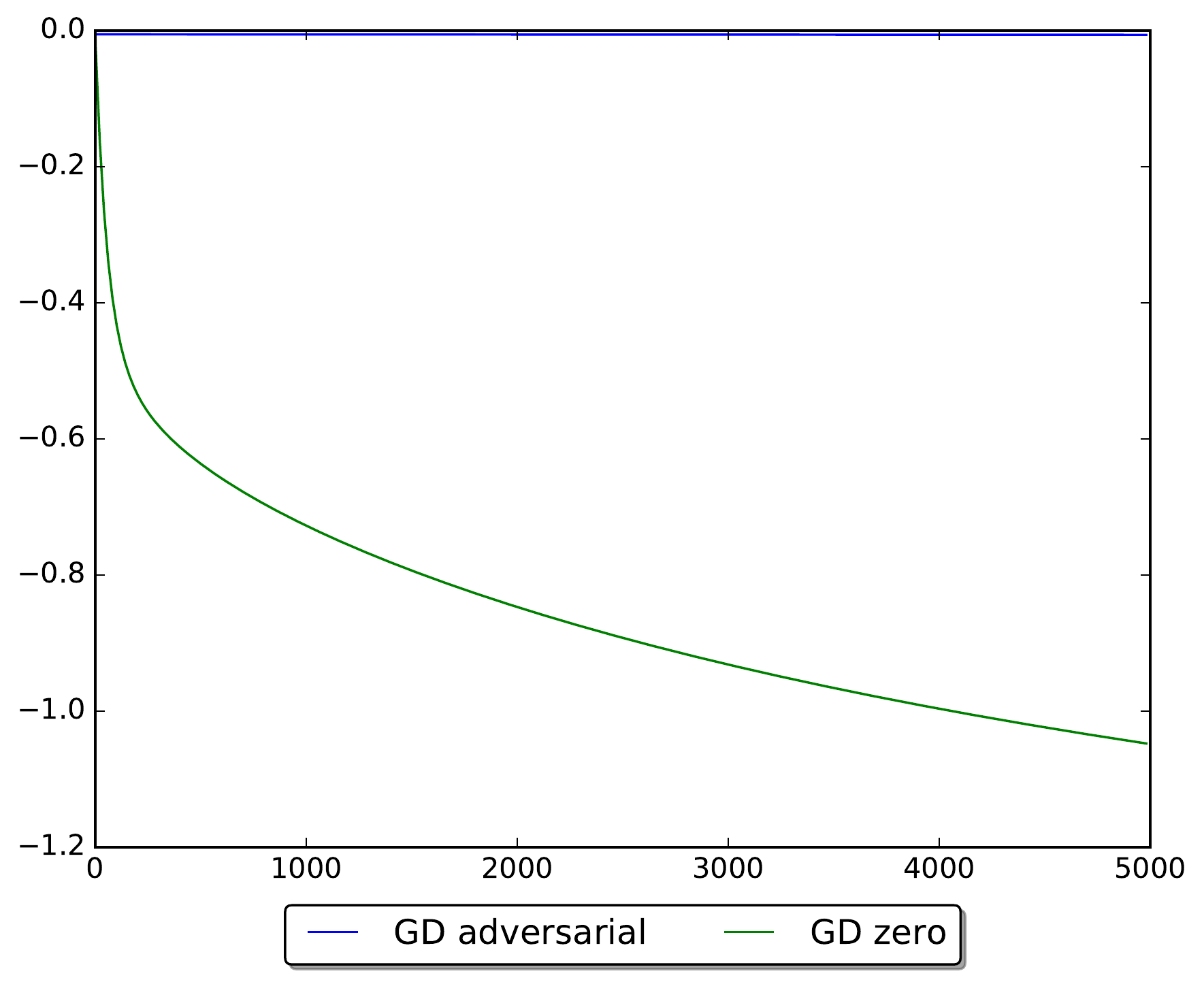}  & 
  \includegraphics[width=0.2\linewidth]{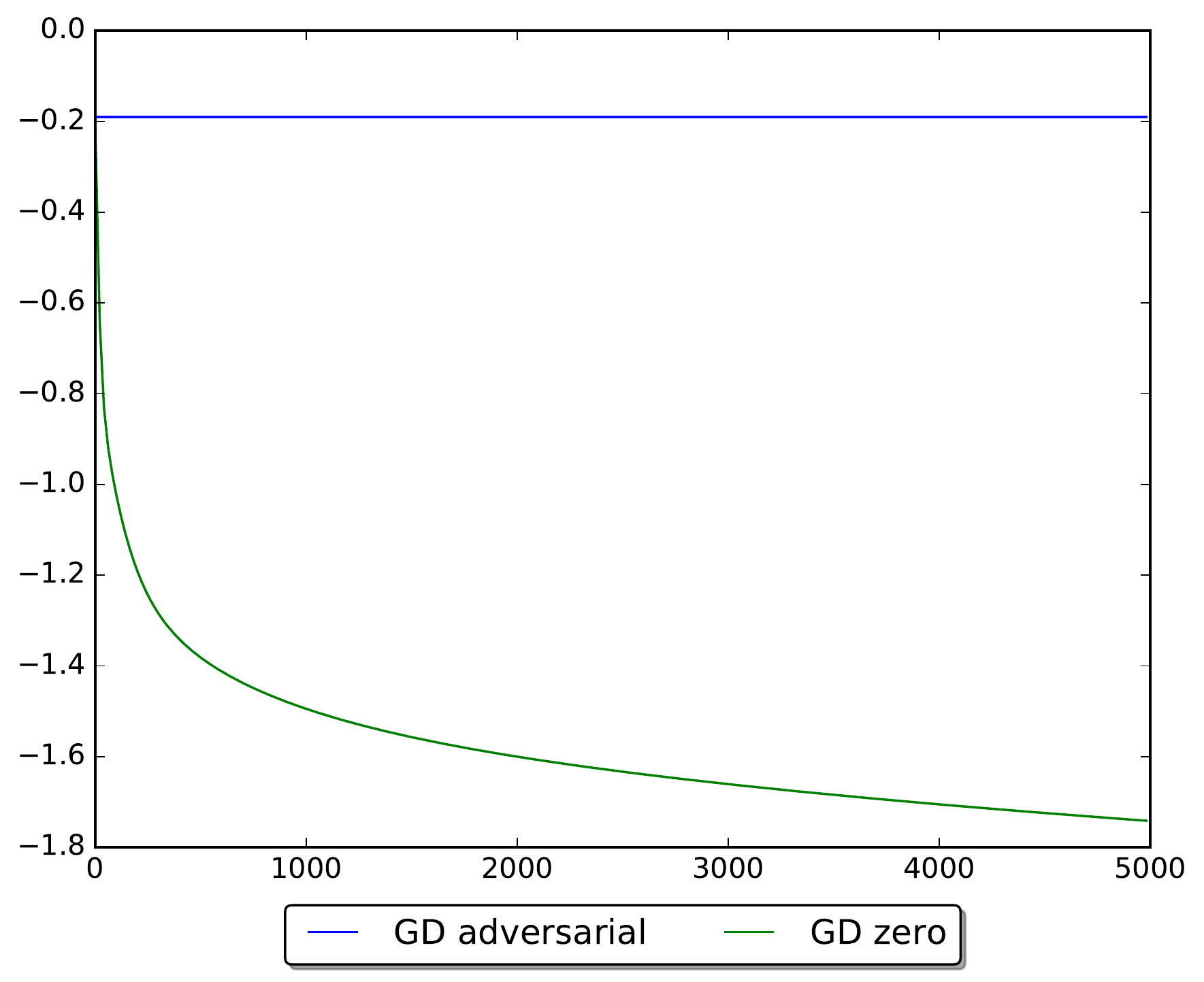} \\
  a9a & covtype &
  gisette & w8a  \vspace{-0.2cm}\\
\end{tabular}
\end{sc}
\end{footnotesize}
\caption{\label{fig:worst-better} Suboptimality of gradient descent iterates for different data sets with initialization $\param^0 = 0$.}
\end{figure}

\subsection{Analysis of Initialization}

We study a single coordinate in the diagonalized problem. 
\begin{align}
\quadratic(\para_j^{t}) - \quadratic^* =  (1-\gamma \lambda_j)^{2t} \; \cdot \; \frac {\lambda_j} 2 \left(\para^0_j - \para_j^* \right)^2
\end{align}
There are two factors here, one that decreases exponentially with $t$ and a constant that depends on the initialization. In an asymptotic setting, only the first would matter, leading to the well-known slow-down in convergence of GD for ill-conditioned problems, in directions where $\gamma \lambda_j$ becomes very small. But what if we can tolerate some suboptimality $\epsilon$?  Then we could try to find an initialization such that the second term would be less than $\epsilon$. There  could be two cases: (i) we devise a smart strategy to find a good $\param^0$ that is sufficiently close to $\param^*$, (ii) we set $\param^0=0$, but can reasonably assume that  $\param^*$ is small. We come back to (i) and first investigate (ii) by noting that 
\begin{align}
\quadratic(0) - \quadratic^* = \frac{1}{2} \frac{\E[YZ_j]^2}{\E^\mu[Z_i^2]}  \le 
 \frac 12 \frac{\EE[YZ_j]^2}{\E[Z_j^2]} = \frac 12 \rho_j^2.
\end{align}
Here $\rho_j$ is the correlation  coefficient between $Y$ and the eigenfeature $Z_j$. We now formulate the following hypothesis: the observed empirical sub-optimality of GD iterates is due to the fact that eigenfeatures $Z_j$ with small variance also have a small correlation with the response $Y$. This seems to be related to a similar assumption made in early stopping \cite{yao2007early} and in general when using norm-based regularization: we do not want to trust features that need to be amplified a lot (low variance, but high output covariance). Hence, a reasonable feature representation should avoid encoding relevant information in such a manner. What we show here though is that such considerations not only avoid overfitting, but also accelerate GD  training.  

In order to subject this idea to a more formal treatment, let us introduce a suitable regularity concept. A dataset exhibits   \textit{$\tau$-bounded response correlation}, if for any eigen-feature $Z_j$ with non-zero variance
\begin{align}
\label{eq:eigen-reg}
\rho_j^2 = \frac{\EE[Y Z_j]^2}{\E[Z_j^2]} \le \tau \E[Z_j^2] = \tau \sigma_j^2
\end{align} 
With this definition, we can immediately see that we can upper bound the suboptimality of GD iterates by making use of the regularity parameter for all eigenfeatures with $\E[Z_j^2] \le \zeta$, where $\zeta$ is an arbitrary threshold. Effectively, $\tau$-boundedness allows us to pay a constant approximation cost for eigenfeatures with small variance and to trade that off with the faster rates obtained for the  remaining eigenfeatures. We capture the gist of this effect in the following lemma. 
\begin{lemma}
\label{lemma:primal-sub}
Assume that the data set has $\tau$-bounded response correlation. Then for any $\zeta \ge 0$, the suboptimality of the GD iterate sequence can be upper bounded as
\begin{align}
\quadratic(\param^t) - \quadratic^* \le 
\frac 12 \left[ r(\zeta) (1-\gamma \zeta)^{2t} + (d-r(\zeta)) \right] \tau \zeta, \quad r(\zeta) := \{j: \E[Z_j^2] > \zeta\}
\end{align}
\begin{proof} Follows directly from the boundedness assumptions. 
\end{proof}
\end{lemma}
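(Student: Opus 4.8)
The plan is to begin from the exact, coordinate-separable form of the suboptimality and then treat the large- and small-variance eigenfeatures separately. Specializing Lemma~\ref{lemma:gd_iterates} to the initialization $\param^0=0$ used throughout this section gives the coordinatewise recursion $\beta_j^t-\beta_j^*=(1-\gamma\lambda_j)^t(0-\beta_j^*)$, and inserting this into the diagonal identity of Lemma~\ref{lemma:subopt_quad}, together with $\beta_j^*=c_j/\lambda_j$, $c_j=\E[YZ_j]$ and $\lambda_j=\E^\mu[Z_j^2]$, yields
\begin{align}
\quadratic(\param^t)-\quadratic^*
&= \tfrac12\sum_{j=1}^d \lambda_j(1-\gamma\lambda_j)^{2t}(\beta_j^*)^2 \nonumber \\
&= \tfrac12\sum_{j=1}^d (1-\gamma\lambda_j)^{2t}\,\frac{\E[YZ_j]^2}{\E^\mu[Z_j^2]}.
\end{align}
Each summand is the per-coordinate quantity already computed at $t=0$ earlier in the section, now carrying the geometric decay factor $(1-\gamma\lambda_j)^{2t}$.

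Next I would convert the regularity assumption into a bound on each constant. Since $\E^\mu[Z_j^2]=\E[Z_j^2]+\mu\ge\E[Z_j^2]$, the $j$-th constant is at most $\rho_j^2$, and $\tau$-boundedness~\eqref{eq:eigen-reg} gives $\frac{\E[YZ_j]^2}{\E^\mu[Z_j^2]}\le\rho_j^2\le\tau\,\E[Z_j^2]$. I would then split the index set at the threshold $\zeta$ into the $r(\zeta)$ large-variance coordinates ($\E[Z_j^2]>\zeta$) and the $d-r(\zeta)$ small-variance ones ($\E[Z_j^2]\le\zeta$). For the small-variance group I discard the decay factor ($(1-\gamma\lambda_j)^{2t}\le1$) and use $\E[Z_j^2]\le\zeta$, so each term is at most $\tfrac12\tau\zeta$ and the group contributes at most $\tfrac12(d-r(\zeta))\tau\zeta$; this is the non-decaying part of the claim and makes precise the ``constant approximation cost'' paid for weak, low-variance directions.

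For the large-variance group I keep the decay factor and observe that, for step sizes in the standard regime (say $\gamma\le\lambda_1^{-1}$, within the convergence range of Corollary~\ref{corrolary:gd_fixpoint}), the map $\lambda\mapsto(1-\gamma\lambda)^{2t}$ is nonincreasing, so $\lambda_j>\zeta$ forces $(1-\gamma\lambda_j)^{2t}\le(1-\gamma\zeta)^{2t}$. Factoring this common rate out bounds the head by $\tfrac12(1-\gamma\zeta)^{2t}\,\tau\!\!\sum_{j:\E[Z_j^2]>\zeta}\!\!\E[Z_j^2]$, which matches the stated head term $\tfrac12 r(\zeta)(1-\gamma\zeta)^{2t}\tau\zeta$ once the head variances are summarized by their count times $\zeta$. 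I expect this final collapse to be the main obstacle: the clean estimate naturally produces $\sum_{j:\E[Z_j^2]>\zeta}\E[Z_j^2]$, whereas each head variance strictly exceeds $\zeta$, so equality with $r(\zeta)\zeta$ requires either replacing $\zeta$ by a representative head scale or reading the head term as capturing the dominant $(1-\gamma\zeta)^{2t}$ \emph{rate} rather than a tight prefactor. I would therefore present the decomposition and the small-variance bound as fully rigorous and flag the head prefactor as the place where the phrase ``follows directly from the boundedness assumptions'' is doing the work, checking whether a normalization of the total head variance is intended.
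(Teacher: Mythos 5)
Your reconstruction is exactly the chain that the paper's one-line proof leaves implicit: zero initialization in Lemma~\ref{lemma:gd_iterates}, the diagonal identity of Lemma~\ref{lemma:subopt_quad}, the per-coordinate constant $\E[YZ_j]^2/\E^\mu[Z_j^2]\le\rho_j^2\le\tau\sigma_j^2$, and the split of coordinates at the threshold $\zeta$. Your tail bound (the $(d-r(\zeta))\tau\zeta$ part) is complete and rigorous, so on the intended route you and the paper agree.

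The discrepancy you flag in the head term is a genuine gap in the lemma as stated, not in your argument. $\tau$-boundedness gives each head summand at most $\tfrac{\tau}{2}\,g(\lambda_j)$ with $g(\lambda):=\lambda(1-\gamma\lambda)^{2t}$ (using $\E[YZ_j]^2\le\tau\sigma_j^4$ and $\sigma_j^2\le\lambda_j$), and since $g'(\lambda)=(1-\gamma\lambda)^{2t-1}\bigl(1-(2t+1)\gamma\lambda\bigr)$, the collapse $g(\lambda_j)\le g(\zeta)=\zeta(1-\gamma\zeta)^{2t}$ that the lemma asserts is valid only on the decreasing branch of $g$, i.e.\ when $\zeta\ge\frac{1}{\gamma(2t+1)}$, equivalently $t\ge\frac{1-\gamma\zeta}{2\gamma\zeta}$ (together with $\gamma\le\lambda_1^{-1}$, as you impose). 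For small $t$ it can fail outright: take $d=1$, $t=0$, $\zeta<\sigma_1^2\le 1/\tau$, and $\E[YZ_1]^2=\tau\sigma_1^4$ (admissible, since $\rho_1^2\le 1$); the true suboptimality is $\approx\tfrac{\tau}{2}\sigma_1^2$, while the stated bound is $\tfrac{\tau}{2}\zeta$. So the correct reading is either (i) the bound holds after a burn-in of order $1/(\gamma\zeta)$ iterations, which is harmless for the paper's transient-regime message since the head term only matters once $(1-\gamma\zeta)^{2t}$ has decayed, or (ii) the head prefactor should be $\tau\sum_{j:\,\sigma_j^2>\zeta}\sigma_j^2$ — your ``total head variance,'' which is what the clean estimate actually produces. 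Your instinct to present the decomposition and tail as fully rigorous while isolating the prefactor collapse as the unproved step hiding behind ``follows directly'' is exactly right; note that the same overstated prefactor recurs verbatim in the dual analogue and in Lemma~\ref{lemma:glm_zero_init} (where $1-r(\zeta)$ in place of $d-r(\zeta)$ is a further typo), so this is systematic imprecision in the paper rather than a hidden argument you failed to find.
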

Figure \ref{fig:better-better} shows that our notion of regularity seems to agree well with the empirical observations that we pointed out before. Note that regularization increases the variance of eigenfeatures and thus leads to more favorable $\tau$-bounds. However, for clarity, we wanted  to capture this notion for the worst case of $\mu=0$.

\begin{figure}
\begin{center}
\begin{footnotesize}
\begin{sc}
\begin{tabular}{@{\hspace{0.1cm}}c@{\hspace{-0.1cm}}c@{\hspace{-0.1cm}}c@{\hspace{-0.1cm}}c}
\includegraphics[width=0.25\linewidth]{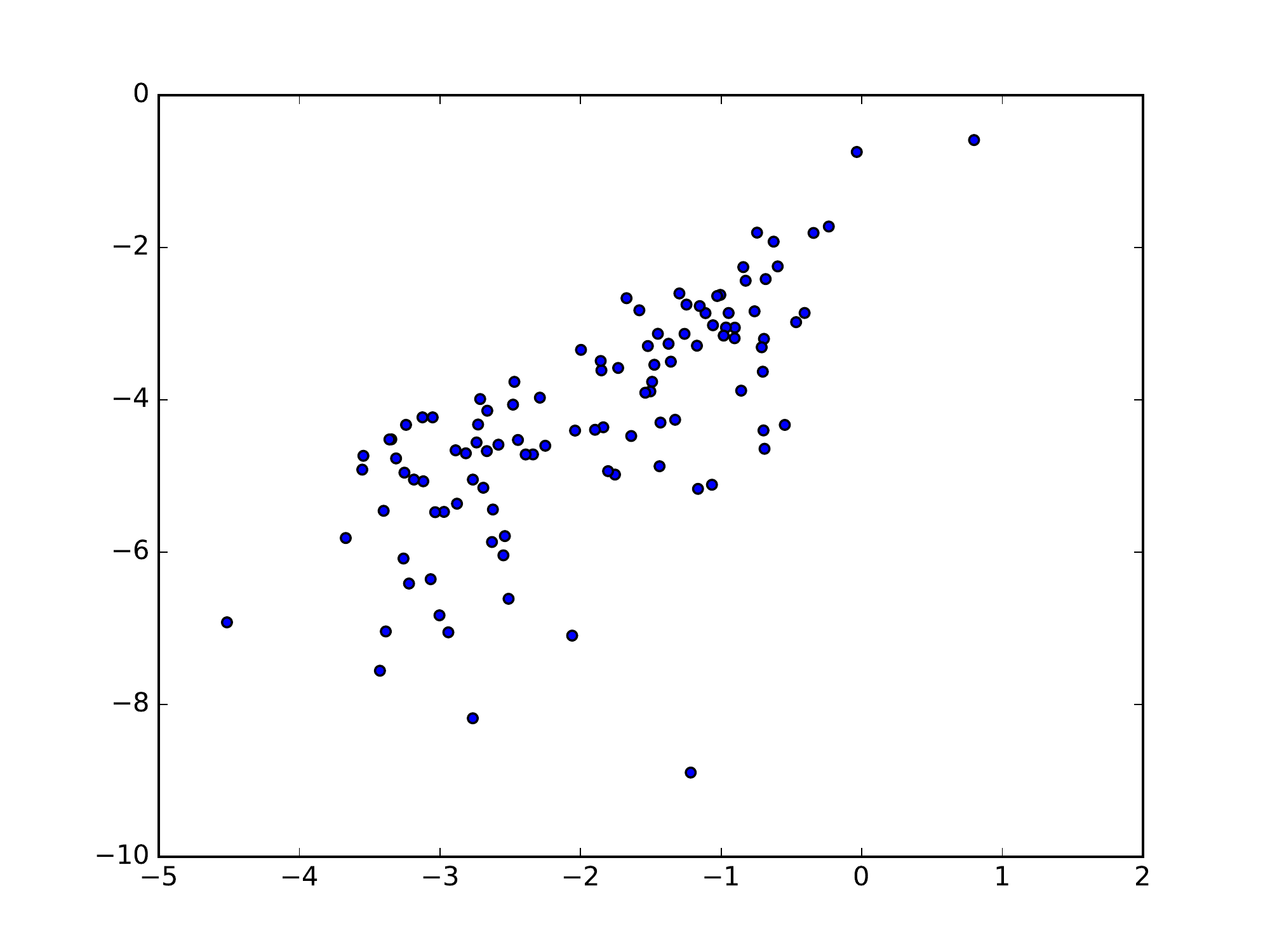}  & 
  \includegraphics[width=0.25\linewidth]{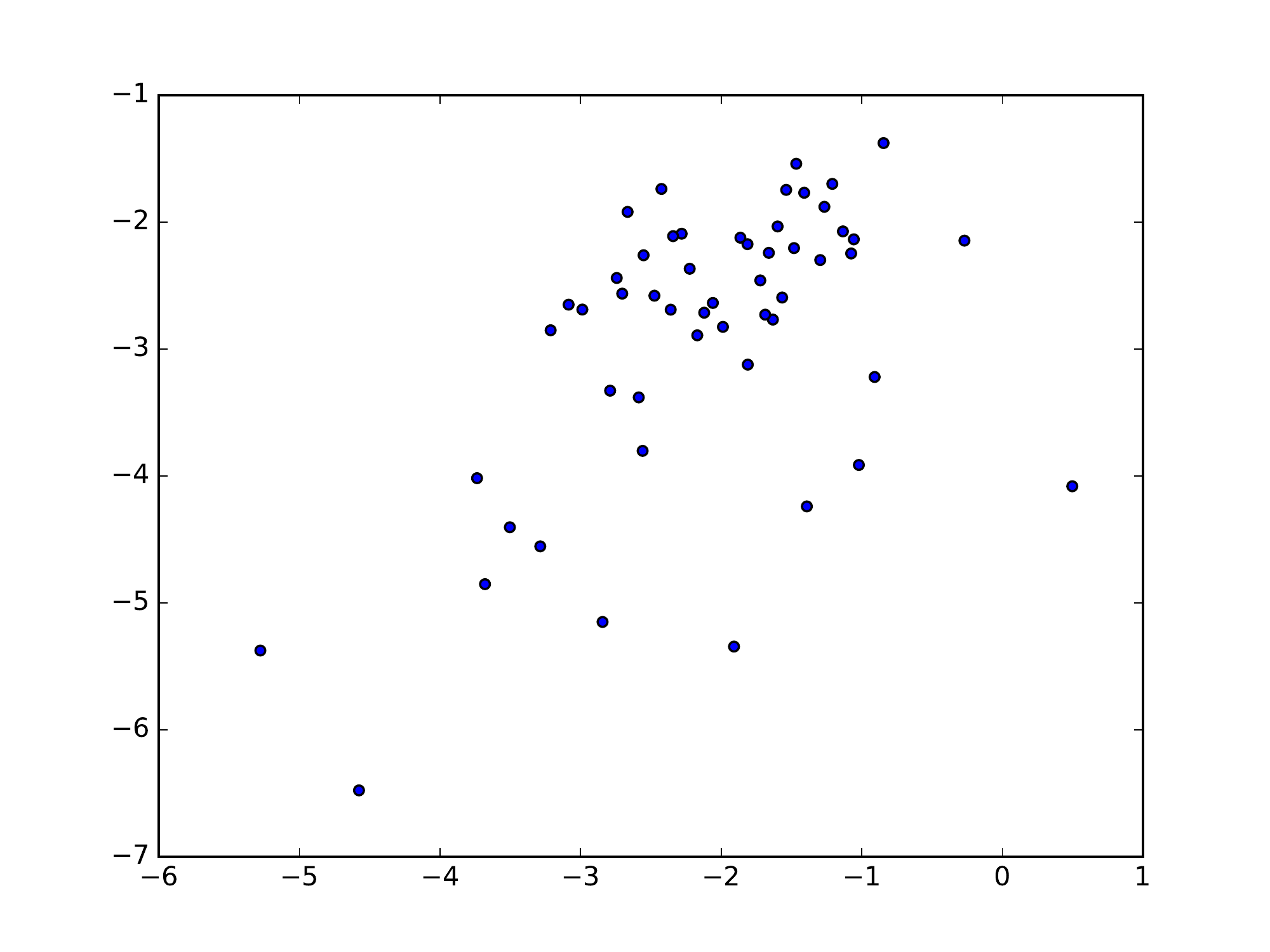} &
  \includegraphics[width=0.25\linewidth]{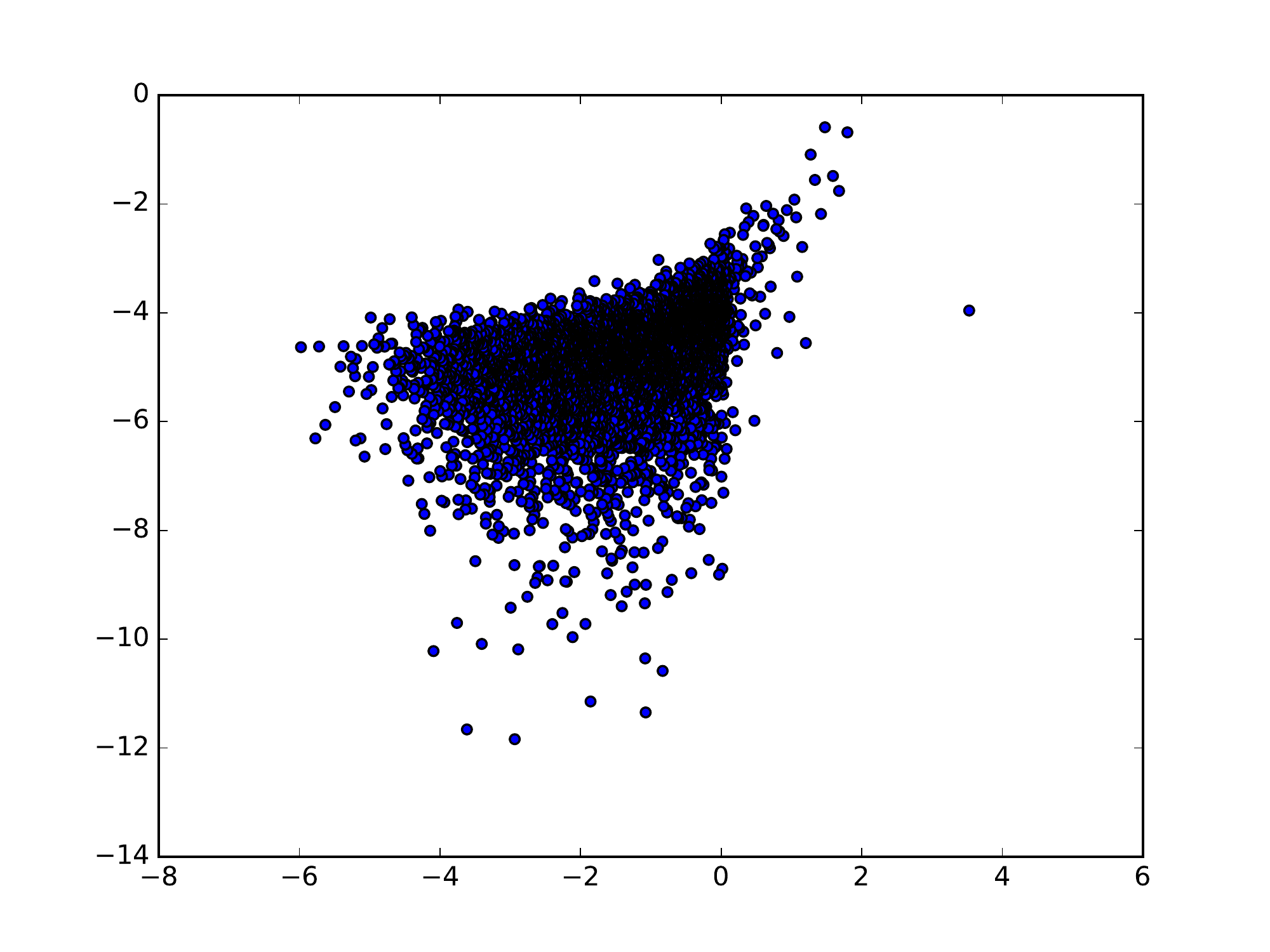}  & 
  \includegraphics[width=0.25\linewidth]{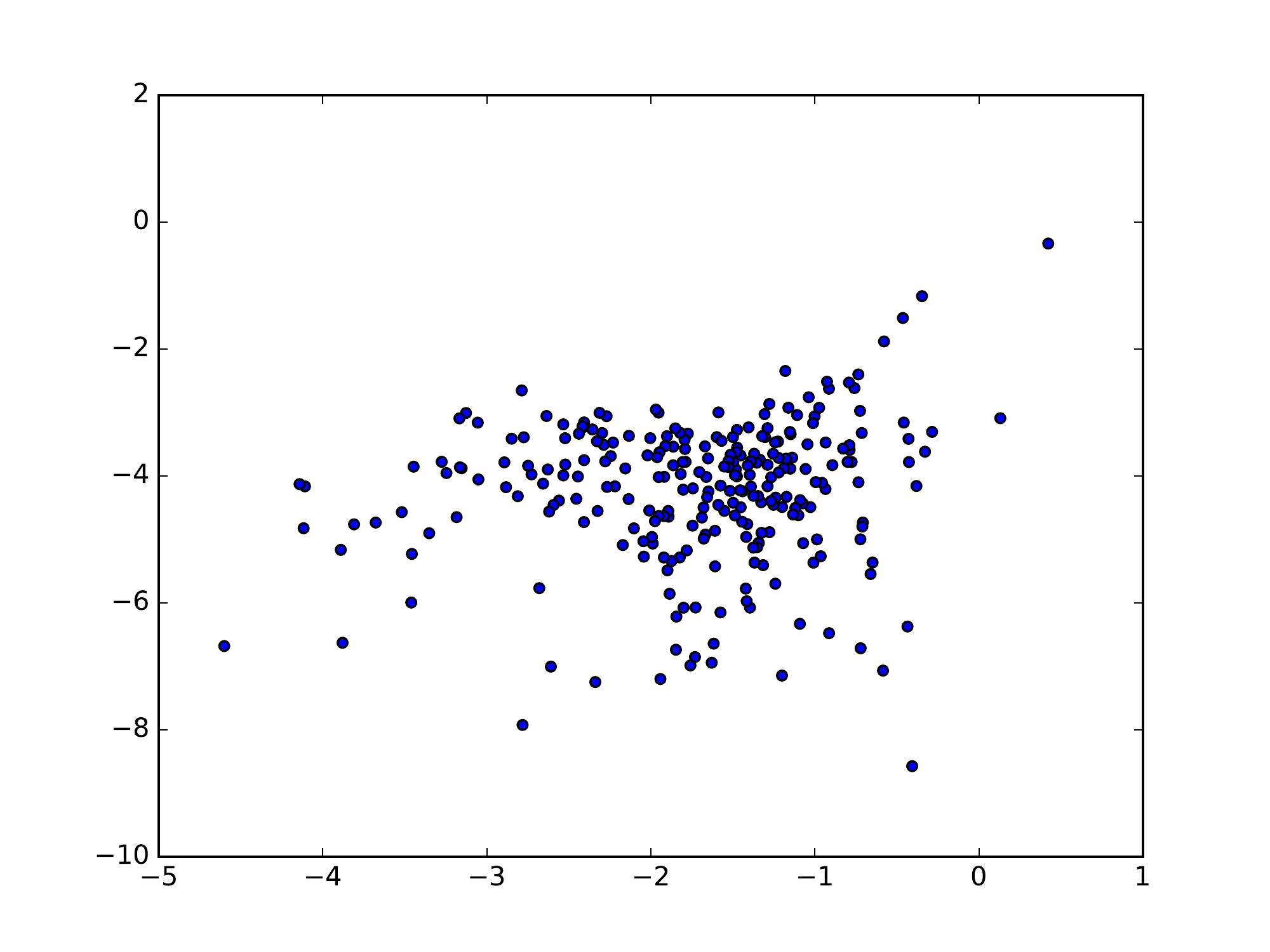} \\
  a9a & covtype &
  gisette & w8a  \vspace{-0.2cm}\\
\end{tabular}
\end{sc}
\end{footnotesize}
\end{center} 
\caption{\label{fig:better-better}  Histogram of $\tau$-boundeness. Each point the scatter plot corresponds to an eigenfeature for which the vertical coordinate is $\log_{10}\left(\E[Y Z_j]^2/(\E[Z_j^2]^2)\right)$ and the horizontal coordinate is the $\log_{10}(\E[Z_j^2]^2)$.}
\end{figure}

\section{Dual Ridge Regression} 
We will now consider the dual problem for ridge regression. It turns out that the  initialization  $\param^0 = 0$, which we analyzed for the primal in the previous section, is not advantageous for the dual. However, we can use the insight about data $\tau$-boundedness to derive an effective initialization for the dual problem.

\subsection{Dual Objective} 
It is straightforward to derive dual objectives for convex optimization problems using Fenchel duality, e.g.~see \cite{dunner2016primal}. For ridge regression, we obtain the following dual objective  with dual parameters $\dparam$
\begin{align}
\quadratic_\mu(\dparam) =  \frac 12 \dparam^\top \G \dparam - \b^\top \dparam, \quad 
n\G = \frac{1}{\mu n} \xm \xm^\top + \I, \quad n \b =  \y
\end{align}
In the dual space, where typically $n \gg d$,  there is an $n-d$ dimensional subspace  $\text{ker}(\mathbf X^\top)$, such that for the corresponding dual eigen-vectors $\vv$, $\vv^\top \G \vv = \frac 1n$. This means that we have at least $(n-d)$ orthogonal directions with a variance of $1/n$ accounting for a total variance of $(n-d)/n$. While in the primal case, there are at most $d$ terms to bound and typically $\mu \propto 1/n$, in the dual we may get a suboptimality contribution that is $\mathbf O(1)$, even under a $\tau$-boundedness assumption. 

Similar to the primal objective, we use (to carry out the analysis, not the optimization!) the following change of variables $\dparam \leftarrow \um^\top \dparam$. This change of variables leads to the following diagonalization of the dual objective: 
\begin{align} 
	\quadratic_\mu(\dparam) 
	&= \frac{1}{2} \dparam^\top \mGamma \dparam  -  \c^\top \dparam, \; \; n\mGamma := \frac{1}{ \mu} \mSigma^\top \mSigma + \I, \quad \c := \frac{1}{n} \um^\top \y, \quad  
	\dparam^{*\mu} = \frac 1n \mGamma^{-1} \c
\label{eq:dual-eigen}
\end{align}
and suboptimality can be written as 
\begin{align} \label{eq:suboptimality_dual}
  \quadratic_\mu(\dparam) - \quadratic_\mu(\dparam^*_\mu) = 
  \frac{1}{2n}\sum_{i=1}^d \frac{\lambda_i}{\mu} \left( \dparam_i - \dparam^*_i\right)^2 
  + \frac{1}{2n}\sum_{i=d+1}^n \left( \dparam_i - \dparam^*_i\right)^2  
\end{align}
Indeed, suboptimality can be decomposed to two terms: the suboptimality in image of the data matrix (the first sum), and the suboptimality in the kernel of the data matrix (the second sum). 
\begin{lemma} \label{lemma:dual_init}
	Suppose that $\dparam^{*\nu}$ is the minimizer of $\quadratic_\nu$, then for all $\nu, \mu>0$:
	\begin{align}
		 \dparam^{*\nu}_j = \dparam^{*\mu}_{j} \;   (\forall j>d) \quad \text{and} \quad
		 (\dparam^{*\nu}_i - \dparam^{*\mu}_i)^2 = n \left( \frac{(\mu-\nu) \sigma_i^2}{(\sigma_i^2 +\mu)(\sigma_i^2 +\nu)}\right)^2\EE[Y Z_i]^2
	\end{align} 
\end{lemma}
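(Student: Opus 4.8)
The plan is to exploit the fact that the diagonalized dual objective in \eqref{eq:dual-eigen} decouples across coordinates, so that its minimizer has a closed form obtained coordinate-by-coordinate. Since $\mGamma$ is diagonal with entries $n\gamma_i = (\sigma_i^2+\mu)/\mu$ for $i\le d$ and $n\gamma_i = 1$ for $i>d$, the minimizer $\dparam^{*\mu} = \frac1n\mGamma^{-1}\c$ reads componentwise as $\dparam^{*\mu}_i = c_i/(n\gamma_i)$. First I would record these two blocks explicitly, which immediately separates the two claims of the lemma: the kernel directions $j>d$ give the first identity and the image directions $i\le d$ give the second.

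For the kernel block $j>d$, I would simply observe that $n\gamma_j = 1$ carries no dependence on the regularization parameter, so $\dparam^{*\mu}_j = c_j$ for every $\mu>0$. This yields $\dparam^{*\nu}_j = c_j = \dparam^{*\mu}_j$, establishing the first identity by just reading off the diagonal, with no further computation.

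For the image block $i\le d$, I would substitute $n\gamma_i = (\sigma_i^2+\mu)/\mu$ to get $\dparam^{*\mu}_i = \mu c_i/(\sigma_i^2+\mu)$, then form the difference $\dparam^{*\nu}_i - \dparam^{*\mu}_i = c_i\big(\tfrac{\nu}{\sigma_i^2+\nu} - \tfrac{\mu}{\sigma_i^2+\mu}\big)$ and combine the two fractions over the common denominator $(\sigma_i^2+\nu)(\sigma_i^2+\mu)$; the numerator collapses to $(\nu-\mu)\sigma_i^2$. The last step is to re-express the dual eigen-coordinate $c_i = \frac1n(\um^\top\y)_i$ in terms of the response correlation $\EE[YZ_i]$: using the scaled SVD $\Z = \sqrt n\,\um\mSigma$, the $i$-th eigenfeature column equals $\sqrt n\,\sigma_i\,\um_{:,i}$, whence $\EE[YZ_i] = \sigma_i\sqrt n\,c_i$, i.e. $c_i$ is proportional to $\EE[YZ_i]/\sigma_i$. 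Squaring the resulting difference then produces the claimed closed form.

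The routine part is combining the fractions; the step that needs care is the final change of variables between $c_i$ and $\EE[YZ_i]$, since the factors of $n$ and the $\sigma_i$ introduced by the scaled SVD must be tracked precisely — the $1/\sigma_i$ coming from $c_i$ partially cancels the $\sigma_i^2$ produced by combining the fractions, and an off-by-$\sqrt n$ or off-by-$\sigma_i$ slip here is exactly what would corrupt the prefactor in the stated expression. I would therefore cross-check this normalization against the definition of $\c$ in \eqref{eq:dual-eigen} and the suboptimality decomposition in \eqref{eq:suboptimality_dual} before trusting the final constant.
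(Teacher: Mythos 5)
Your route is the same as the paper's: the paper's entire proof is the one-liner ``follows directly from the closed form solution of minimizers,'' and you flesh out exactly that computation --- the diagonalized objective \eqref{eq:dual-eigen} decouples, the kernel coordinates $j>d$ have a regularization-independent diagonal entry (giving the first identity for free), and the image coordinates are compared by combining fractions and converting $c_i$ to $\EE[YZ_i]$ via $\Z=\sqrt n\,\um\mSigma$. Your relation $\EE[YZ_i]=\sqrt n\,\sigma_i c_i$ is correct, and the step you flag as delicate is indeed the one that matters.

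However, your final claim that the algebra ``produces the claimed closed form'' does not hold as written, for two reasons you would have caught in the cross-check you defer. First, the formula $\dparam^{*\mu}=\frac1n\mGamma^{-1}\c$ you read off from \eqref{eq:dual-eigen} is a misprint in the paper: the minimizer of $\frac12\dparam^\top\mGamma\dparam-\c^\top\dparam$ is $\mGamma^{-1}\c$, with coordinates $\dparam^{*\mu}_i=n c_i\,\mu/(\sigma_i^2+\mu)$ for $i\le d$ and $\dparam^{*\mu}_j=n c_j=\uv_j^\top\y$ for $j>d$; the latter matches the paper's own later display $\alpha_j^*=\uv_j^\top\y$, whereas your convention gives $\dparam^{*}_j=c_j$ and would make your squared difference carry a prefactor $1/n$ rather than the stated $n$. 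Second, with the correct minimizer, $\dparam^{*\nu}_i-\dparam^{*\mu}_i=\sqrt n\,\EE[YZ_i]\,(\nu-\mu)\sigma_i/\bigl((\sigma_i^2+\nu)(\sigma_i^2+\mu)\bigr)$, so
\begin{align*}
(\dparam^{*\nu}_i-\dparam^{*\mu}_i)^2 \;=\; n\left(\frac{(\mu-\nu)\,\sigma_i}{(\sigma_i^2+\mu)(\sigma_i^2+\nu)}\right)^2\EE[YZ_i]^2,
\end{align*}
i.e.\ the first power of $\sigma_i$, not $\sigma_i^2$ as printed in the lemma --- precisely the ``off-by-$\sigma_i$'' slip you warned about, except it sits in the stated lemma, not in your algebra. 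The corrected form is the one consistent with downstream use: plugging it into \eqref{eq:suboptimality_dual} with $\lambda_i=\sigma_i^2+\mu$, the $\tau$-bound $\EE[YZ_i]^2\le\tau\sigma_i^4$, and $(\sigma_i^2+\nu)^2\ge\nu\sigma_i^2$ yields a per-coordinate cost of at most $\tau\zeta(\nu-\mu)^2/(2\nu\mu)$, matching the subsequent GD lemma, whereas the printed $\sigma_i^2$ would produce a $\zeta^2$ there. So: right method, identical to the paper's intended argument, but you should carry out the normalization check rather than asserting agreement with a statement that is itself misprinted.
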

\begin{proof} 
	Follows directly  from the closed form solution of minimizers.	
\end{proof}
The above lemma shows that $(n-d)$ coefficients of two dual solutions with respect to different regularizer are exactly the same and the squared difference between other $d$ coefficients is propotional to  $(\mu-\nu)^2$. Computing the minimizer $\dparam^{*\nu}$ is relatively cheaper than $\dparam^{*\mu}$ for $\nu \ll \mu$ as rates often depend on the strong convexity parameter. Hence we suggest initializing gradient descent for $\quadratic_\mu$  with $\dparam^{*\nu}$ for a suitably chosen $\nu$. 
\begin{lemma} 
	Suppose that the data set has $\tau$-bounded response correlations. Then for any $\zeta \ge \frac 1n$, the suboptimality of the GD iterates starting from $\dparam^{*\nu}$ with step size $(n \mu)\gamma$ can be upper bounded as
\begin{align}
\quadratic(\dparam^t) - \quadratic^* \le  
  \frac{\tau \zeta (\nu-\mu)^2}{2 \nu \mu }
  \left[ 
     r(\zeta) (1- \gamma \zeta)^{2t} + (d-r(\zeta)) 
   \right], \; r(\zeta) := | \{ j: \E[Z_j^2] > \zeta\}|
\end{align}
\end{lemma}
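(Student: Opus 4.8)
The plan is to reduce everything to the $d$ decoupled image coordinates of the diagonalized dual and to import the primal argument of Lemma~\ref{lemma:primal-sub} there. First I would record the GD dynamics on $\quadratic_\mu$: it is a quadratic whose diagonalized Hessian $\mGamma$ has image eigenvalues $\lambda_i/(n\mu)$ for $i\le d$ and kernel eigenvalues $1/n$ for $i>d$. Hence by Lemma~\ref{lemma:gd_iterates}, running GD from $\dparam^0$ with the prescribed step size $(n\mu)\gamma$ contracts the $i$-th image coordinate by $1-(n\mu)\gamma\,\lambda_i/(n\mu)=1-\gamma\lambda_i$ and each kernel coordinate by $1-\gamma\mu$; the step size is chosen precisely so that the image rate matches the primal rate $1-\gamma\lambda_i$.

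The decisive step is to dispose of the kernel. By Lemma~\ref{lemma:dual_init}, $\dparam^{*\nu}_j=\dparam^{*\mu}_j$ for every $j>d$, so initializing at $\dparam^0=\dparam^{*\nu}$ makes the initialization gap identically zero in all $n-d$ kernel directions. Substituting into the decomposition \eqref{eq:suboptimality_dual}, the entire second sum vanishes for all $t$. This is exactly the payoff of homotopic initialization: it annihilates the $\bigo(1)$ kernel contribution that the naive choice $\dparam^0=0$ leaves untouched, so only the image sum survives.

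It remains to bound the image sum. Feeding the gap from Lemma~\ref{lemma:dual_init} and the contraction $1-\gamma\lambda_i$ into the first sum of \eqref{eq:suboptimality_dual}, the factor $n$ cancels and, using $\lambda_i=\sigma_i^2+\mu$, the suboptimality becomes
\[
\quadratic_\mu(\dparam^t)-\quadratic_\mu(\dparam^*_\mu)=\frac{(\nu-\mu)^2}{2\mu}\sum_{i=1}^d\frac{\sigma_i^4}{(\sigma_i^2+\mu)(\sigma_i^2+\nu)^2}\,\EE[Y Z_i]^2\,(1-\gamma\lambda_i)^{2t}.
\]
I would then peel off the homotopy prefactor $\tfrac{(\nu-\mu)^2}{\nu\mu}$ by bounding the rational factor (e.g.\ $\tfrac{\sigma_i^2}{\sigma_i^2+\mu}\le1$ and $\tfrac{\sigma_i^2}{(\sigma_i^2+\nu)^2}\le\tfrac1\nu$), apply $\tau$-boundedness \eqref{eq:eigen-reg} in the form $\EE[Y Z_i]^2\le\tau\sigma_i^4$, and split the coordinates at $\zeta$: the $r(\zeta)$ directions with $\sigma_i^2>\zeta$ are absorbed by $(1-\gamma\lambda_i)^{2t}\le(1-\gamma\zeta)^{2t}$, while the remaining $d-r(\zeta)$ directions use $(1-\gamma\lambda_i)^{2t}\le1$ together with $\sigma_i^2\le\zeta$. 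This reproduces the two-term bracket of the primal Lemma~\ref{lemma:primal-sub}, now scaled by the homotopy factor.

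The hard part will be the final bookkeeping rather than any conceptual difficulty: one must bound $\frac{\sigma_i^4}{(\sigma_i^2+\mu)(\sigma_i^2+\nu)^2}$ so that the factor $\frac{(\nu-\mu)^2}{\nu\mu}$ and the regularity cost $\tau\zeta$ emerge with the claimed constants, and arrange the split so the high-variance directions ride the decaying $(1-\gamma\zeta)^{2t}$ while the low-variance directions pay only the flat cost. As in the primal lemma, this is where the approximation trade-off is actually cashed out, and the constant-chasing — in particular how tightly the $\sigma_i$-dependent factor can be replaced by $\zeta$ — is the only genuinely delicate point.
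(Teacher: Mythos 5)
Your proposal is correct and takes exactly the paper's route: the paper's entire proof consists of the single instruction you carry out, namely plugging Lemma~\ref{lemma:dual_init} into the decomposition \eqref{eq:suboptimality_dual}, whereupon the kernel sum vanishes identically (the payoff of homotopic initialization) and the image sum contracts at rate $1-\gamma\lambda_i$ under the step size $(n\mu)\gamma$, and your displayed intermediate identity is the correct result of that substitution. The constant bookkeeping you flag as the one delicate point (extracting the prefactor $(\nu-\mu)^2/(\nu\mu)$ and the flat $\tau\zeta$ cost from the rational factor $\sigma_i^4/\bigl((\sigma_i^2+\mu)(\sigma_i^2+\nu)^2\bigr)$, in particular on the high-variance block where $\sigma_i$-dependent factors cannot simply be replaced by $\zeta$) is left entirely implicit by the paper's one-sentence proof, so on that score your account is, if anything, more careful than the original.
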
 
\begin{proof} 
Plugging the result of lemma~\ref{lemma:dual_init} into suboptimality of the dual objective in Eq.~\eqref{eq:suboptimality_dual} concludes the proof.
\end{proof}
The above result closely mirrors Lemma \ref{lemma:primal-sub} with the addition of a factor that depends on the chosen $\nu$ (in relation to $\mu$). A smaller homotopic parameter $\nu$ -- closer to $\reg$ -- enjoys
the convergence with a smaller scaling factor $(\nu-\mu)^2/(\nu\mu)$. However, computing the initial vector $\param_{\nu}^*$ is more expensive for a small $\nu$. Setting that aside, the significance of the homotopic initialization is the fact that the $(n-d)$ directions of eigenvalue $n^{-1}$ do no longer have en effect on the suboptimality. Note that in the eigensystem of Eq.~\eqref{eq:dual-eigen}, we can easily solve (irrespective of $\mu$)
\begin{align}
\alpha^*_j = \uv_j^\top \y  \stackrel{\min}\longleftarrow \frac 1{2n} \alpha_j^2 - c_j \alpha_j, \quad (\forall j>d)
\end{align}
However, this is not a practical computation as the basis vectors $\uv_j$ of the kernel of $\xm$ are not known and the eigen-parameterization is not the one accessible to the algorithm. The trick of the homotopic initialization is that effectively we set  $\alpha_j^*$, without having to perform  the diagonalization of $\xm^\top  \xm$. 

\subsection{Accelerated RCDM by Homotopic Initialization}

In the large scale setting (when the sample size $n$ is large), the gradient step is computationally expensive. Random coordinate descent method (RCDM) \cite{nesterov2012efficiency,shalev2013stochastic}
is computationally more attractive than GD and offers a competitive convergence.  A RCD-step is obtained by a coordinate-wise approximation of the gradient. More precisely, it randomly picks a random coordinate $r \in \{1,\ldots,n\}$ and updates it using the corresponding coordinate of the gradient (denoted by $\quadratic'_{r}(\cdot)$) as
\begin{align}
 \dparam^{+}_{r}  = \dparam_{r} - \gamma_{r}
 \quadratic'_{r}(\dparam),
\end{align}
where $\gamma_r$ is the coordinate-wise step size of RCDM. We ask if homotopic initialization  also accelerates the convergence -- up to a suboptimal solution. The convergence dependency on initialization is more subtle in a stochastic setting, where each optimization step is perturbed by the noise of the gradient approximation. Here, we theoretically prove that  homotopic initialization accelerates RCDM. To this end, we provide a different convergence analysis for RCDM on the dual objective of ridge regression. Later, we will show how this result provides a better convergence rate in the objective using  homotopic initialization.  
\begin{theorem} \label{the:RCDM_convergence}
 Let $\G_{i,j}$ denote the $(i,j)$-th element of the Hessian matrix $\G$. Let the
 parameter vector $\dparam^{(t)}$ be obtained by $t$ RCDM-steps on
 $\quadratic$, starting from $\dparam_0$ with coordinate-wise step sizes
	\begin{align} \label{eq:step-size-lossbound-main}
			\gamma_r^{-1} = \G_{r,r} +\sum_{j} |\G_{r,j}|, \quad \gamma_{\min} := \min_r \gamma_r, \; \; \gamma_{\max} := \max_{r} \gamma_r.
	\end{align}
	For the above  parameters, either the norm of gradient is bounded as
	\begin{align}
		\E \|\quadratic'_\mu(\dparam^{(t)})\|^2 \leq 2 \rho^2
		\left(\frac{\gamma_{\max}}{\gamma_{\min}}\right) \| \dparam_0 -
		\dparam^*\|^2
	\end{align} 
	or suboptimality is bounded as  
	\begin{equation}
		\E\left[\quadratic_\mu(\dparam^{(t)}) - \quadratic^*_\mu\right] \leq
		\frac{1}{2}\left(1 - \frac{\rho \gamma_{\min}
		}{n}\right)^t\left(\quadratic_\mu(\dparam_0) - \quadratic^*_\mu\right) +
		\frac{1}{2} \rho \left(\frac{\gamma_{\max}}{\gamma_{\min}}\right)\| \dparam_0 - \dparam^*\|^2
	\end{equation}
	for every $\frac{1}{n} \leq \rho \leq \| \G \| $ (expectation is
	over the random choice of coordinates).
\end{theorem}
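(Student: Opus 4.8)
The plan is to track the expected suboptimality $D_t:=\E[\quadratic_\mu(\dparam^{(t)})-\quadratic^*_\mu]$ with a one-step descent inequality, to separately control the expected distance $\E\|\dparam^{(t)}-\dparam^*\|^2$, and to let the free parameter $\rho$ enter by splitting the spectrum of $\G$ at $\rho$. Throughout I use that for the quadratic $\quadratic_\mu$ one has $\quadratic'_\mu(\dparam)=\G(\dparam-\dparam^*)$ and $\quadratic_\mu(\dparam)-\quadratic^*_\mu=\tfrac12(\dparam-\dparam^*)^\top\G(\dparam-\dparam^*)$. First I would derive the descent lemma: since $\quadratic_\mu$ is quadratic, a single RCDM step on coordinate $r$ decreases the objective by exactly $\gamma_r\,\quadratic'_r(\dparam)^2(1-\tfrac12\gamma_r\G_{r,r})$, and the prescribed $\gamma_r^{-1}=\G_{r,r}+\sum_j|\G_{r,j}|\ge\G_{r,r}$ forces $\gamma_r\G_{r,r}\le1$, so each coordinate contributes at least $\tfrac12\gamma_r\,\quadratic'_r(\dparam)^2$; averaging over the uniform coordinate gives $\E[\quadratic_\mu(\dparam^{+})\mid\dparam]\le\quadratic_\mu(\dparam)-\tfrac{\gamma_{\min}}{2n}\|\quadratic'_\mu(\dparam)\|^2$.

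The technical heart is a non-expansion estimate for the distance. The same step size makes the symmetric matrix $\mathbf{D}^{-1}-\G$, with $\mathbf{D}:=\diag(\gamma_1,\dots,\gamma_n)$, diagonally dominant with nonnegative diagonal — its $r$-th row has diagonal $\sum_j|\G_{r,j}|$ and off-diagonal mass $\sum_{j\ne r}|\G_{r,j}|$ — hence positive semidefinite, i.e.\ $\mathbf{D}^{-1}\succeq\G$ and so $\mathbf{D}\preceq\G^{-1}$. Using the weighted potential $V(\dparam):=(\dparam-\dparam^*)^\top\mathbf{D}^{-1}(\dparam-\dparam^*)$, a direct computation of the expected one-step change gives $\E[V(\dparam^{+})\mid\dparam]-V(\dparam)=-\tfrac1n(\dparam-\dparam^*)^\top(2\G-\G\mathbf{D}\G)(\dparam-\dparam^*)$, and $\mathbf{D}\preceq\G^{-1}$ yields $\G\mathbf{D}\G\preceq\G$, so $2\G-\G\mathbf{D}\G\succeq\G\succeq0$ and $V$ is a supermartingale. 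Unrolling $\E[V(\dparam^{(t)})]\le V(\dparam_0)$ and converting norms through $\gamma_{\max}^{-1}\I\preceq\mathbf{D}^{-1}\preceq\gamma_{\min}^{-1}\I$ gives $\E\|\dparam^{(t)}-\dparam^*\|^2\le\tfrac{\gamma_{\max}}{\gamma_{\min}}\|\dparam_0-\dparam^*\|^2$.

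To introduce $\rho$ I would diagonalize $\G$ and split the eigen-coordinates into $S=\{i:\lambda_i\ge\rho\}$ and its complement; writing $\h=\dparam-\dparam^*$ in the eigenbasis with components $\hat h_i$, the low-curvature gradient energy obeys $\sum_{i\notin S}\lambda_i^2\hat h_i^2\le\rho^2\|\h\|^2$, while the full gradient satisfies $\|\quadratic'_\mu(\dparam)\|^2=\sum_i\lambda_i^2\hat h_i^2\ge 2\rho\big(\quadratic_\mu(\dparam)-\quadratic^*_\mu\big)-\rho^2\|\h\|^2$ because $\lambda_i(\lambda_i-\rho)\ge-\rho^2$ on the low-curvature block. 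The dichotomy is then read off the split: either the high-curvature energy $\sum_{i\in S}\lambda_i^2\hat h_i^2$ has already fallen below $\rho^2\tfrac{\gamma_{\max}}{\gamma_{\min}}\|\dparam_0-\dparam^*\|^2$, in which case the low-curvature bound together with the distance estimate make the total expected gradient satisfy the first (gradient-norm) conclusion; or it has not, and substituting $\E\|\quadratic'_\mu(\dparam^{(t)})\|^2\ge 2\rho D_t-\rho^2\E\|\h\|^2$ into the descent inequality and invoking the distance estimate yields the recursion $D_{t+1}\le(1-\tfrac{\rho\gamma_{\min}}{n})D_t+\tfrac{\rho^2\gamma_{\max}}{2n}\|\dparam_0-\dparam^*\|^2$, whose geometric unrolling with fixed point $\tfrac12\rho\tfrac{\gamma_{\max}}{\gamma_{\min}}\|\dparam_0-\dparam^*\|^2$ is exactly the second conclusion (the recursion in fact holds regardless of the case, so the dichotomy simply records that the gradient-norm bound is the sharper statement once the high-curvature directions have converged).

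I expect the distance non-expansion to be the main obstacle: one has to recognize that the prescribed step size is precisely what makes $\mathbf{D}^{-1}-\G$ diagonally dominant, choose the $\mathbf{D}^{-1}$-weighted potential rather than the Euclidean one, and handle the product $\G\mathbf{D}\G$ correctly when showing $2\G-\G\mathbf{D}\G\succeq0$. The rest is bookkeeping — carrying $\E\|\h\|^2$ (not merely $D_t$) through the iterations, and choosing the loose spectral bounds $\lambda_i^2\le\rho^2$ and $\lambda_i(\rho-\lambda_i)\le\rho^2$ on the low-curvature block so that the stated constants $2\rho^2$ and $\tfrac12\rho$, both scaled by $\gamma_{\max}/\gamma_{\min}$, come out as written.
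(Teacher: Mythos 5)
Your proposal is correct, and while it shares the paper's overall architecture --- split the spectrum of $\G$ at $\rho$, establish the non-expansion bound $\E\|\dparam^{(t)}-\dparam^*\|^2\le(\gamma_{\max}/\gamma_{\min})\|\dparam_0-\dparam^*\|^2$, and combine into the rate $(1-\rho\gamma_{\min}/n)$ with a $\rho$-scaled floor --- the mechanism for the fast-rate term is genuinely different. The paper (Lemma~\ref{lem:rcdm-fast-convergence}) tracks the $\L_\rho$-weighted squared distance through the update matrix $(\I-\gamma_r\rmat_r\G)$ via a delicate computation in which the term $T_2$ is nonpositive \emph{only} under the gradient-norm condition $\|\quadratic'(\dparam)\|^2\ge 2\|\S_\rho(\dparam-\dparam^*)\|^2$; this conditionality, together with the orthogonality bookkeeping $\L_\rho\S_\rho=0$, is precisely what generates the ``either/or'' in the statement. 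You instead combine the standard expected-descent inequality $\E[\quadratic(\dparam^{+})\mid\dparam]\le\quadratic(\dparam)-\tfrac{\gamma_{\min}}{2n}\|\quadratic'(\dparam)\|^2$ with the Polyak--Lojasiewicz-type lower bound $\|\quadratic'(\dparam)\|^2\ge 2\rho\bigl(\quadratic(\dparam)-\quadratic^*\bigr)-\rho^2\|\dparam-\dparam^*\|^2$ (valid since $\lambda^2\ge\rho\lambda-\rho^2$ for \emph{every} eigenvalue $\lambda$), which makes the suboptimality recursion hold unconditionally, so that, as you observe, the dichotomy becomes automatic --- a strictly cleaner and in fact slightly stronger statement, with no case analysis inside the induction. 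Your distance estimate, by contrast, is essentially the paper's Lemma~\ref{lem:distance_bound_rcdm}, but in a corrected form: the paper asserts diagonal dominance of $\Lambda-\G$ with $\Lambda=\diag(\gamma_r)$, whereas the right object --- the one you use --- is $\mathbf{D}^{-1}-\G\succeq 0$, and your exact one-step identity $\E[V(\dparam^{+})\mid\dparam]-V(\dparam)=-\tfrac1n(\dparam-\dparam^*)^\top(2\G-\G\mathbf{D}\G)(\dparam-\dparam^*)$ together with $\G\mathbf{D}\G\preceq\G$ is the clean version of the paper's steps [1]--[3], which also drop a factor of $2$ on the cross term. Two accounting points to tidy up: your unrolled recursion yields coefficient $1$ rather than $\tfrac12$ on the decaying term $(1-\rho\gamma_{\min}/n)^t\bigl(\quadratic_\mu(\dparam_0)-\quadratic^*_\mu\bigr)$ --- though the paper's own step [2] silently uses $\|\dparam_0-\dparam^*\|^2_{\L_\rho}\le\quadratic(\dparam_0)-\quadratic^*$ where only the bound with $2\bigl(\quadratic(\dparam_0)-\quadratic^*\bigr)$ holds, so its stated $\tfrac12$ is not justified there either; and for the geometric unrolling you should note explicitly that $\rho\gamma_{\min}\le 1$, which follows from $\gamma_{\min}^{-1}\ge\max_r\sum_j|\G_{r,j}|\ge\|\G\|\ge\rho$, so the contraction factor indeed lies in $[0,1)$.
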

Suppose that $\sum_{j} |\x_i^\top \x_j | \leq B$ for all
$i$. Using coordinate-wise step sizes of Eq.~\eqref{eq:step-size-lossbound-main},
the classical analysis of RCDM \cite{nesterov2012efficiency} suggests
the rate $1 - (n + 2B/\mu)^{-1}$, while our analysis improves the rate by a factor of $\rho/n>1$ to
$1- (\rho/n)(n + 2 B/\mu)^{-1}$. This improvement is not only with
respect to the strong-convexity factor $\mu$ (e.g., as in the  catalyst method \cite{lin2015universal}), but also with respect to
the sample size~$n$.
\\
The acceleration of Theorem
\ref{the:RCDM_convergence} is up to a suboptimal solution: either the norm of the 
gradient is bounded by $\bigo(\rho^2 \| \dparam_0 - \dparam^*\|^2)$ or the
suboptimality is at most $\bigo(\rho \| \dparam_0 - \dparam^*\|^2)$. Both the 
suboptimality bounds highly depend on the initial Euclidean distance to the
minimizer. In fact, one can take advantage of the accelerated rate only if the initial distance $\| \dparam_0 - \dparam^*\|^2$ is
small. We observed that this
distance is significantly large for initialization with the all-zero vector.
Nonetheless, we theoretically bound this distance for homotopic
initialization. 
\begin{lemma} \label{lem:homotopic_initial_path_bound}
Assume that the data set has $\tau$-bounded response correlation.  Then for any $\zeta$ the distance between two minimizers of the dual objective with different regularizers is bounded as
		\begin{align}
			\| \dparam^*_{\nu} - \dparam^*_{\mu} \|^2  \leq  \left((\mu-\nu)^2/\nu\right) (d-r(\zeta)) 
			  n  \tau \zeta.
		\end{align}
\end{lemma}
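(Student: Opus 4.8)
The plan is to convert the squared distance into a coordinate sum via the closed forms of Lemma~\ref{lemma:dual_init} and then bound each surviving coordinate with the $\tau$-boundedness property; no optimization dynamics enter, since the claim concerns only the two minimizers. Concretely, I would first apply Lemma~\ref{lemma:dual_init}: because $\dparam^{*\nu}_j = \dparam^{*\mu}_j$ for every $j>d$, all $n-d$ kernel coordinates cancel from the Euclidean distance and only the $d$ image coordinates remain, giving
\begin{align}
\|\dparam^*_\nu - \dparam^*_\mu\|^2 = n\sum_{i=1}^d \left(\frac{(\mu-\nu)\sigma_i^2}{(\sigma_i^2+\mu)(\sigma_i^2+\nu)}\right)^2 \EE[YZ_i]^2 .
\end{align}
This exact identity is the backbone of the argument, and it already encodes why homotopic initialization removes the $(n-d)$ kernel directions from the distance.

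Next I would invoke $\tau$-boundedness \eqref{eq:eigen-reg} in the form $\EE[YZ_i]^2 \le \tau\,\sigma_i^4$ and strip the regularizer coupling from the prefactor with elementary inequalities. Writing the prefactor as $(\sigma_i^2/(\sigma_i^2+\mu))^2\,(\sigma_i^2/(\sigma_i^2+\nu))^2$, I would bound the first factor by $1$ and the second by $\sigma_i^2/\nu$ (via $(\sigma_i^2+\nu)^2 \ge 4\sigma_i^2\nu$), which yields the clean per-coordinate estimate
\begin{align}
(\dparam^{*\nu}_i - \dparam^{*\mu}_i)^2 \le \frac{(\mu-\nu)^2}{\nu}\, n\,\tau\, \sigma_i^2 .
\end{align}

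Finally I would split the sum at the threshold $\zeta$. For the $d-r(\zeta)$ coordinates with $\sigma_i^2 \le \zeta$, each term is at most $\frac{(\mu-\nu)^2}{\nu}\, n\tau\zeta$, and summing over them reproduces the stated right-hand side. The step I expect to be the main obstacle is the $r(\zeta)$ large-variance coordinates: there $\sigma_i^2 > \zeta$ is uncapped, and the $\tau$-bound (which lets $\EE[YZ_i]^2$ grow like $\sigma_i^4$) does not by itself suppress those terms. To handle them I would instead use the Cauchy--Schwarz ceiling $\EE[YZ_i]^2 \le \sigma_i^2$, valid because $Y$ has unit variance; under it the prefactor decays like $\sigma_i^{-4}$ and each such term is $\bigo((\mu-\nu)^2 n/\sigma_i^2)$, so the delicate point is to verify that this residual is dominated by the small-variance $(d-r(\zeta))$ contribution retained on the right-hand side.
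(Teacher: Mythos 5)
Your first two steps reproduce exactly what the paper intends: its entire proof is the one-liner ``Lemma~\ref{lemma:dual_init} with some straightforward algebra,'' and that algebra is precisely your reduction --- the kernel coordinates $j>d$ cancel, \eqref{eq:eigen-reg} gives $\EE[YZ_i]^2 \le \tau\sigma_i^4$, and the factorization $\sigma_i^4/(\sigma_i^2+\mu)^2 \le 1$, $\sigma_i^4/(\sigma_i^2+\nu)^2 \le \sigma_i^2/\nu$ yields the per-coordinate estimate $(\dparam^{*\nu}_i-\dparam^{*\mu}_i)^2 \le \left((\mu-\nu)^2/\nu\right) n\tau\sigma_i^2$, which summed over the $d-r(\zeta)$ coordinates with $\sigma_i^2\le\zeta$ gives the stated right-hand side.

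The obstacle you flag in your last paragraph is, however, genuine and cannot be closed by your Cauchy--Schwarz patch --- because the lemma as written (``for any $\zeta$'') is actually too strong. If you choose $\zeta$ below every nonzero $\sigma_i^2$, then $r(\zeta)=d$ and the claimed bound is zero, while the exact identity from Lemma~\ref{lemma:dual_init} makes the left side strictly positive whenever $\mu\neq\nu$ and some $\EE[YZ_i]\neq 0$; so no argument can dominate the large-variance residual by the stated right-hand side. Quantitatively, your Cauchy--Schwarz ceiling gives each term with $\sigma_i^2>\zeta$ a bound of order $n(\mu-\nu)^2/\sigma_i^2 \le n(\mu-\nu)^2/\zeta$, and the comparison against the retained term $\left((\mu-\nu)^2/\nu\right)n\tau\zeta$ only goes through when $\zeta^2 \ge \nu/\tau$ --- not for all $\zeta$. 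The clean repair is to stay with the $\tau$-bound on the large coordinates as well: since $\sigma_i^8/\bigl((\sigma_i^2+\mu)^2(\sigma_i^2+\nu)^2\bigr)\le 1$, each such coordinate contributes at most $n\tau(\mu-\nu)^2$, giving
\begin{align}
\| \dparam^*_{\nu} - \dparam^*_{\mu} \|^2 \;\le\; \left((\mu-\nu)^2/\nu\right)(d-r(\zeta))\, n\tau\zeta \;+\; r(\zeta)\, n\tau(\mu-\nu)^2,
\end{align}
which mirrors the $r(\zeta)$ versus $d-r(\zeta)$ split in the companion dual-GD lemma, reduces to the stated bound when $r(\zeta)=0$ (i.e.\ $\zeta\ge\sigma_1^2$), and for $\zeta\ge\nu$ recovers the stated form with $d$ in place of $d-r(\zeta)$. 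In short: your derivation is the paper's intended one, and your instinct that the $r(\zeta)$ large-variance coordinates are the sticking point identifies a term the paper's proof silently drops, rather than a defect in your approach.
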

Lemma \ref{lem:homotopic_initial_path_bound} implies that for a smaller $\nu$, closer to $\mu$, the
initial distance is smaller and hence the acceleration of homotopic
initialization is up to a better suboptimality. However,  convergence to
$\dparam^*_{\nu}$ is slower as it directly relates to
the $\nu$. In our experiments, we observed that setting $\nu = \frac{1}{4}
\sqrt{\mu}$ -- which is considerably cheaper than computing the minimizer
$\dparam_{\mu}^*$-- provides a significant acceleration.
The coordinate-wise step sizes of Eq.~\eqref{eq:step-size-lossbound-main} are quite
pessimistic. In our experiments, we used larger step sizes $\gamma_r =
\G_{r,r}^{-1}$, which is equal to the coordinate-wise Lipschitz constants
\cite{nesterov2012efficiency} of the dual ridge objective. 
\section{Generalized Linear Model }
The accelerated convergence for an approximate solution also applies to generalised linear models. Given a convex differentiable smooth function $\varphi$, a generalized linear model (GLM) aims at minimizing the non-quadratic objective \cite{erdogdu2016scaled}
\begin{align}
 \risk(\glmp) = \E \left[ \varphi(\x_i^\top \glmp) - \labels_i
 \left( \x_i^\top \glmp \right)\right],
\end{align} 
where the
expectation is over the population distribution of the data. Let $\varphi^{(k)}(\cdot)$ denote the $k$-th derivative of $\varphi(\cdot)$. The above formulation obtains logistic regression, with   
${\varphi(a) = \log(1+\exp(a))}$, and ridge regression, with $\varphi(a) = a^2$.  To take advantage of the boundedness property of a dataset, we modify the gradient descent step as 
\begin{align} 
\glmp^{+} = \glmp - \gamma_t \risk'(\glmp) -\eta_t \E\left[ y \x \right]
\end{align}
 where $\gamma_t$ and $\eta_t$ are two step-sizes of the modified gradient step. Indeed, we suggest to use a biased gradient step to accelerate the initial convergence. If the input vectors $\x_i$ are drawn i.i.d from a
gaussian distribution, then we can obtain an accelerated convergence of the modified gradient descent on the generalized linear model.
 This is made precise in the next lemma.
 \begin{lemma} \label{lemma:glm_zero_init}
 Suppose that inputs are drawn i.i.d from a multivariate normal
	distribution with mean $0$ and covariance matrix $\Sigma$, i.e. $\x_i \sim
	N(\vec{0},\Sigma)$ where $\| \Sigma \| = L$.   Assume that $|\varphi^{(2)}(a)|\leq \phi$. If the dataset is $\tau$-bounded, then there is a step size schedule for modified gradient descent such that iterates of GD starting from zero obtains the following suboptimality bound for all $0<\zeta<L$:
	\begin{align}
		& \risk(\glmp^{(t)	}) - \risk^* \leq c_{\glmp^*}\tau \phi \left(  (1-\zeta/L)^{2t} \left(
		1 - r(\zeta)\right) + r(\zeta) \zeta \right), \\ 
		& r(\zeta) := | \{ j: \E[Z_j^2] > \zeta\}|,  \; c_{\glmp^*} := ( \E_{\x} \left[ \varphi^{(2)}(\x^\top \glmp^*)\right])^{-1}.
	\end{align}
\end{lemma}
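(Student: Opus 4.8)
The plan is to exploit the Gaussian assumption to linearize the gradient and reduce the GLM dynamics to a decoupled, essentially quadratic, system in the eigenbasis of $\Sigma$. The crucial device is Gaussian integration by parts (Stein's identity): for $\x \sim N(\vec 0,\Sigma)$ and smooth $h$, one has $\E[\x\, h(\x)] = \Sigma\,\E[\nabla_\x h(\x)]$. Applying this with $h(\x)=\varphi^{(1)}(\x^\top\glmp)$ gives $\E[\varphi^{(1)}(\x^\top\glmp)\x] = m(\glmp)\,\Sigma\glmp$, where $m(\glmp):=\E_\x[\varphi^{(2)}(\x^\top\glmp)]$ is a scalar, so the full gradient collapses to $\risk'(\glmp) = m(\glmp)\,\Sigma\glmp - \E[y\x]$. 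First I would record the consequence at the optimum: diagonalizing $\Sigma=\sum_j \sigma_j^2\vv_j\vv_j^\top$ and writing $\beta_j=\vv_j^\top\glmp$, $b_j=\E[yZ_j]$, $m^*:=m(\glmp^*)=c_{\glmp^*}^{-1}$, stationarity $\risk'(\glmp^*)=0$ forces $\beta_j^* = b_j/(m^*\sigma_j^2)$.

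Next I would analyze the modified iteration coordinatewise, which reads $\beta_j^{+} = (1-\gamma_t m_t\sigma_j^2)\beta_j + (\gamma_t-\eta_t)b_j$ with $m_t:=m(\glmp^{(t)})$. The purpose of the bias term $-\eta_t\E[y\x]$ is exactly to neutralize the drift of $m_t$ away from $m^*$ along the trajectory. Choosing $\eta_t = \gamma_t(1-m_t/m^*)$ aligns the $b_j$-contribution with $\beta_j^*$, and choosing $\gamma_t = 1/(m_t L)$ turns the recursion into the clean, fully decoupled form $\beta_j^{(t)}-\beta_j^* = (1-\sigma_j^2/L)^{t}(\beta_j^{(0)}-\beta_j^*)$, valid simultaneously for all $j$ since $0\le\sigma_j^2/L\le 1$ (as $\sigma_j^2\le\|\Sigma\|=L$). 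With $\glmp^{(0)}=0$ this yields $(\beta_j^{(t)}-\beta_j^*)^2 = (1-\sigma_j^2/L)^{2t}(\beta_j^*)^2$.

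It then remains to convert parameter error into objective suboptimality and to invoke $\tau$-boundedness. From $|\varphi^{(2)}|\le\phi$ we get $\risk''(\glmp)=\E[\varphi^{(2)}(\x^\top\glmp)\x\x^\top]\preceq\phi\Sigma$, so a second-order expansion gives $\risk(\glmp^{(t)})-\risk^*\le\tfrac{\phi}{2}\sum_j \sigma_j^2(\beta_j^{(t)}-\beta_j^*)^2$. Substituting the decoupled dynamics and using $\tau$-boundedness in the form $(\beta_j^*)^2 = b_j^2/(m^{*2}\sigma_j^4)\le \tau/m^{*2}=\tau c_{\glmp^*}^2$ (since $b_j^2/\sigma_j^2=\rho_j^2\le\tau\sigma_j^2$) bounds each summand by $\tau c_{\glmp^*}^2\,\sigma_j^2(1-\sigma_j^2/L)^{2t}$. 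Finally I would split the sum at the threshold $\zeta$: for the $r(\zeta)$ directions with $\sigma_j^2>\zeta$ the decay factor is at most $(1-\zeta/L)^{2t}$, while for $\sigma_j^2\le\zeta$ the prefactor $\sigma_j^2$ is at most $\zeta$; this produces exactly the claimed two-term bound, with the geometrically decaying piece over the high-variance coordinates and a constant $\bigo(\zeta)$ piece over the remainder.

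The main obstacle is the iterate-dependent scalar $m_t$, which makes the raw gradient flow genuinely nonlinear and, without correction, ties the contraction rate to an uncontrolled quantity. The technical heart of the argument is verifying that the bias schedule $\eta_t=\gamma_t(1-m_t/m^*)$ cancels this drift precisely, converting the nonlinear flow into the diagonal linear recursion above, and checking that this is a legitimate step-size schedule (it is computable up to the single scalar $m^*$, which is all the lemma asserts to exist). A secondary, purely bookkeeping, matter is tracking the powers of $c_{\glmp^*}$ and the factor $L$ through the smoothness-to-objective conversion when collecting constants in the final bound.
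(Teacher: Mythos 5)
Your proposal is correct and follows essentially the same route as the paper's appendix proof: Stein's lemma collapses the gradient to $m(\glmp)\Sigma\glmp - \E[y\x]$, the bias schedule cancels the drift of $m_t$ (your $\gamma_t = 1/(m_t L)$, $\eta_t = \gamma_t(1-m_t/m^*)$ is exactly the paper's $\gamma_t = (L\xi_1)^{-1}$, $\eta_t = -\gamma_t \xi_2$ with its mean-value constants $\xi_1,\xi_2$ unpacked explicitly, bypassing Lemma~\ref{lem:gd_glm_alt}), yielding the same linear recursion $\glmp^{(t+1)}-\glmp^* = (\I - \Sigma/L)(\glmp^{(t)}-\glmp^*)$, followed by the same Taylor-plus-$\tau$-boundedness conversion and split at $\zeta$. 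If anything, your writeup is more careful at the two places the paper is loose: your direct bound $\risk''(\glmp) = \E[\varphi^{(2)}(\x^\top\glmp)\,\x\x^\top] \preceq \phi\,\Sigma$ is rigorous where the paper silently drops the $\varphi^{(4)}$ term of its Stein-form Hessian (and a factor $\tfrac12$), and your bookkeeping produces the sensible form of the final bound --- geometric decay over the $r(\zeta)$ high-variance directions and an $\mathcal{O}(\zeta)$ remainder, with $c_{\glmp^*}^2$ rather than $c_{\glmp^*}$ --- of which the lemma's stated inequality (with its $(1-r(\zeta))$ factor) appears to be a garbled rendering.
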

 Although the data is assumed to be generated from a normal distribution in the last lemma,  we believe that
this result can be extended to an arbitrary distribution using  zero-bias
transformations (see \cite{erdogdu2016scaled} for more details). Furthermore, we believe that homotopic initialization can also obtain a similar acceleration, up to a suboptimal solution, on the dual objective of GLM. 
\section{Initialization for Deep Neural Networks}
 So far we have seen that if a dataset has the boundedness property, then 
 initialization with the all-zero vector will accelerate optimization, on the primal objective, up to a
 sub-optimal solution. We proved this for ridge regression as well as for generalized linear models in a simplified setting. 
Our experiments show that non-linear features of a trained neural network provide such a representation of the data (see figure \ref{fig:neural_net}).
 This observation justifies the effectiveness of one of the most common
 initialization schemes of deep neural networks: Use resulting weights of a trained
 shallow network to initialize a deep neural network
 \cite{Goodfellow2016deepbook}. We attribute the gain of such
 initialization to the boundedness of features obtained by a smaller network.
 Suppose that we used a trained network with $N-1$  layer to initialize the first $N-1$
 layers of a larger network (assume that the number of hidden units is the same as the number of output unites).
  If we freeze weights of the first $N-1$ layers, then optimization with respect to 
 weights of the last layer is a convex programming task using a convex loss function, which can be modelled by a generalized linear model. Since features obtained by a trained network are $\tau$-bounded, we expect an initial acceleration of GD
 by initialising the last layer to all-zero. In
 practice, we do not need to freeze weights of early layers because back propagation
 naturally causes small changes in early layers due to the vanishing gradient phenomenon
 \cite{hochreiter2001gradient}. Furthermore, we do not need to increase the number of layers by one.
 In our experiments, we have observed that even a network with $N/2$ layers yields a good
 initialization for a network with $N$ layers (see figure~\ref{fig:neural_net}). Since
 optimization of shallow networks is relatively cheaper than a deep neural network, this initialization is relatively cheap to
 compute. Our argument can also be extended to the layer-wise training for deep
 neural networks \cite{bengio2007greedy}. Indeed, the boundedness property is a
 statistical property of a representation that plays an important role in
 optimization for machine learning.

\section{Experimental Results}  \label{sec:experiments}
\subsection{Datasets and Protocol}
\begin{wraptable}{r}{8cm}
\label{table:datasets}
\caption{{\it Details of the datasets.}}
\label{table:datasets}
\vskip 0.15in
\begin{center}
\begin{footnotesize}
\begin{sc}
\begin{tabular}{lccc} 
\hline
Dataset & Size & Dimension & Boundedness ($\tau$) 
\\
\hline
a9a & 32561 & 123 & 0.2\\
covtype & 581012 & 54 & 5.5 \\
gisette & 6000 & 5000 & 100\\
ijcnn1 & 49990 & 22 & 6 \\
w8a & 49749 & 300 & 1.4 \\
SUSY & 5000000 & 18 & 1.2 \\ 
\hline
\end{tabular}
\end{sc}
\end{footnotesize}
\end{center}
\vskip -0.1in
\end{wraptable}
In this section, we present our empirical results 
 on real datasets, selected
from LIBSVM library \cite{libsvm} (see Table~\ref{table:datasets} for more
details). We measured the boundedness constant $\tau$ for these datasets, which
is used in our analysis. The regularizer is $\mu = 10^{-6}$ for all
 datasets except for {\sc Gisette}, which has relatively less
 samples, and hence we used regularizer $\mu = 10^{-3}$.

\subsection{Initialization for Dual Objective}
We ran an experiment to assess the advantage of the homotopic initialization on dual
programming. Our experiment is on initialization of Random Coordinate
Descent Method (RCDM) for optimizing the dual objective of ridge regression.
Throughout all the experiments, we used the homotopic parameter $\nu = 0.25\sqrt{\mu}$, which is computationally a favourable choice.  We used coordinate-wise step sizes
$\gamma_r = \G_{r,r}^{-1}$, which are equal coordinate-wise Lipchitz constants and its a common choice for RCDM
~\cite{nesterov2012efficiency}.
Our sampling scheme of coordinates is random permutation in each epoch.
Figure~\ref{fig:homotopic_rcdm_ridge} shows the dual suboptimality, primal
suboptimality, and test error through optimization. To compute the primal suboptimality, we mapped the dual parameter to the primal one using the
mapping $\param^{(t)}= (n \mu)^{-1} \xm\dparam^{(t)}$ (used in
\cite{shalev2013stochastic}). This mapping obtains the primal minimizer given
the dual minimizer, i.e. $\param^{*} = (n \mu)^{-1}
\xm\dparam^{*}$ holds. For the test error, we computed the average of
the squared loss on the test data.
Overall, we observe that the homotopic initialization causes a worse
initial primal and dual suboptimality compared to the initialization with
all-zero vector. Nonetheless, the suboptimality decays quickly and reachs a better suboptimality compared to the initialization with zero. Indeed, the homotopic initial vector lies in the space of coarser eigen-features (features associated with larger eigenvalues)  that accelerates optimization.  Although the accelerated convergence is up to a suboptimal solution,  this suboptimal solution achieves a test error that is comparable to 
the test error of the empirical minimizer on most of datasets. The acceleration, obtained by
the homotopic initialization, is related to the boundedness factor $\tau$ of
the dataset reported in table~\ref{table:datasets}. For example, the
homotopic initialization on {\sc gisette}, which has a large 
$\tau$, obtains relatively less gain. 

Although our analysis of homotopic initialization was limited to the ridge regression problem,
our experiments on dual SVM show the same behaviour for this initialization.  Our results on dual SVM are included in the appendix. 
\subsection{Initialization for neural networks}
We ran an experiment to highlight the role of boundedness of a representation in the convergence speed of gradient descent on neural networks. In this experiment, we train a multilayer perceptron (MLP) with 10 hidden layers and 100 hidden units in each layer. We used two datasets {\sc MNIST} and {\sc CIFAR} with 50000, and 20000 samples, respectively. Here, we compared the boundedness of features obtained from the last layer of the network before and after training. Our observations show that a trained neural network provides boundedness (see figures \ref{fig:neural_net}.a and \ref{fig:neural_net}.b).  Based on this observation, we use a layer-wise initialization strategy. We trained a MLP with 5 layers and the same number of hidden units. Then we initialized the first 5 layers of the main network (with 10 layers) by the trained weights of the smaller network and we set the rest of weights to zero.We compared the convergence of GD with step size 0.1 using these two different initialisation schemes. The layer-wise initialization, which yields a representation with the boundedness property, significantly accelerates the initial convergence of GD. 
\subsubsection*{Acknowledgments} 
We would like to thank Aurelien Lucchi, Octavian Ganea, and Dünner Celestine for helpful discussions. 
\begin{figure}
\begin{footnotesize}
\begin{sc}
\begin{tabular}{@{}S@{\hspace{0.075cm}}D@{\hspace{0.075cm}}D@{\hspace{0.075cm}}D@{\hspace{0.075cm}}D@{\hspace{0.075cm}}D@{\hspace{0.075cm}}D}
& a9a & covtype & gisette & ijcnn1& w8a & SUSY
\\ 
 Dual \ & \includegraphics[width=0.9\linewidth]{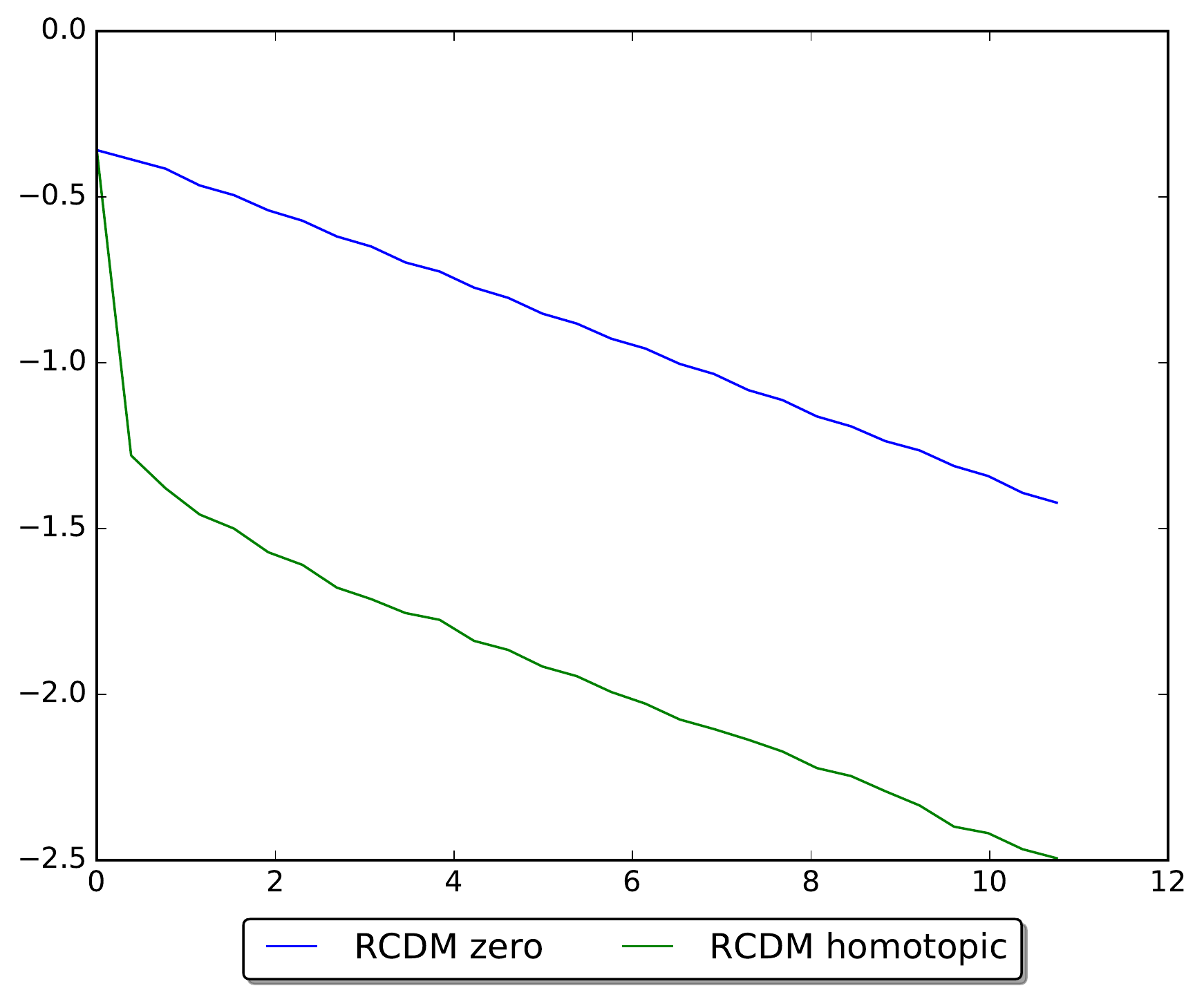}& 
 \includegraphics[width=0.9\linewidth]{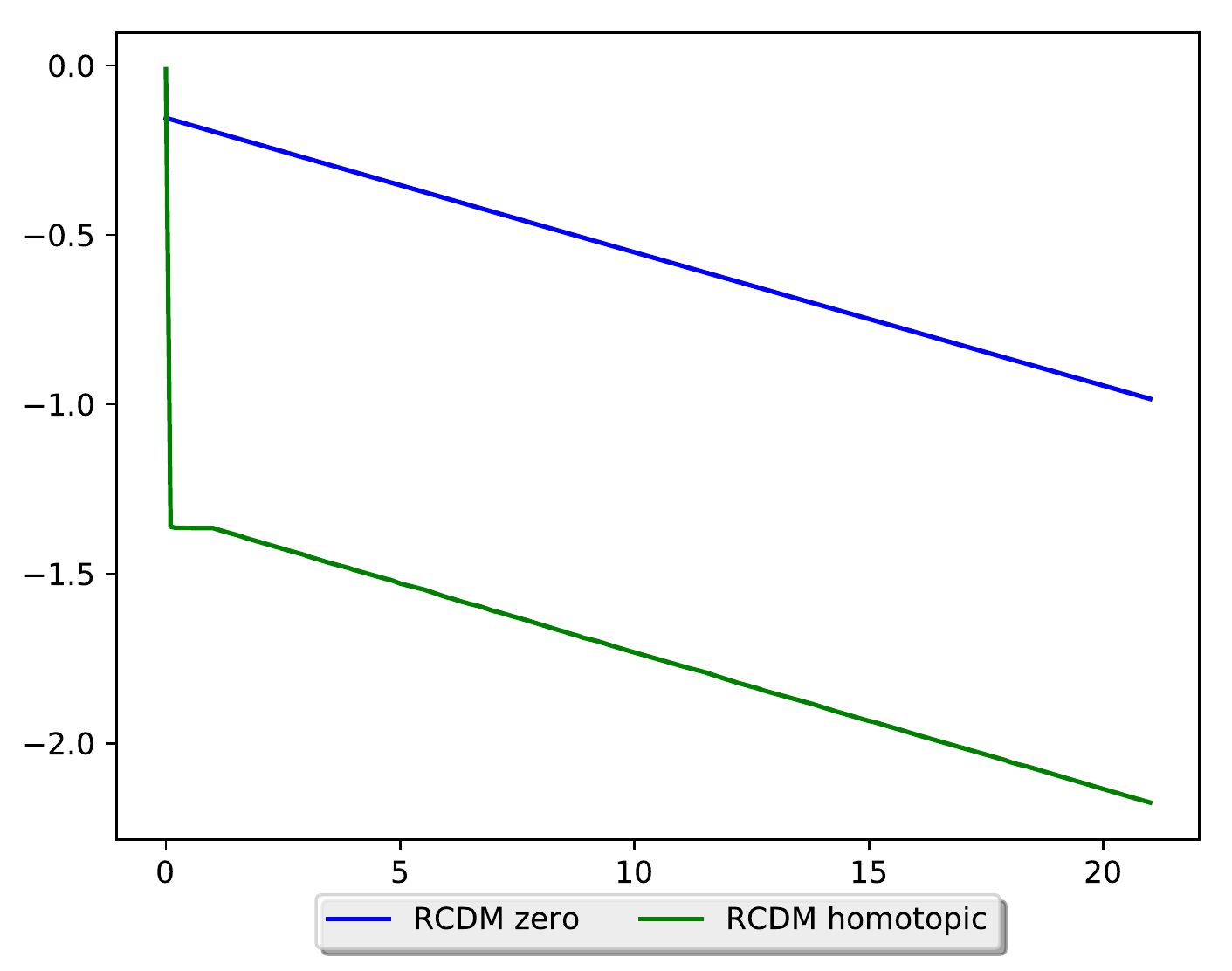} & 
  \includegraphics[width=0.9\linewidth]{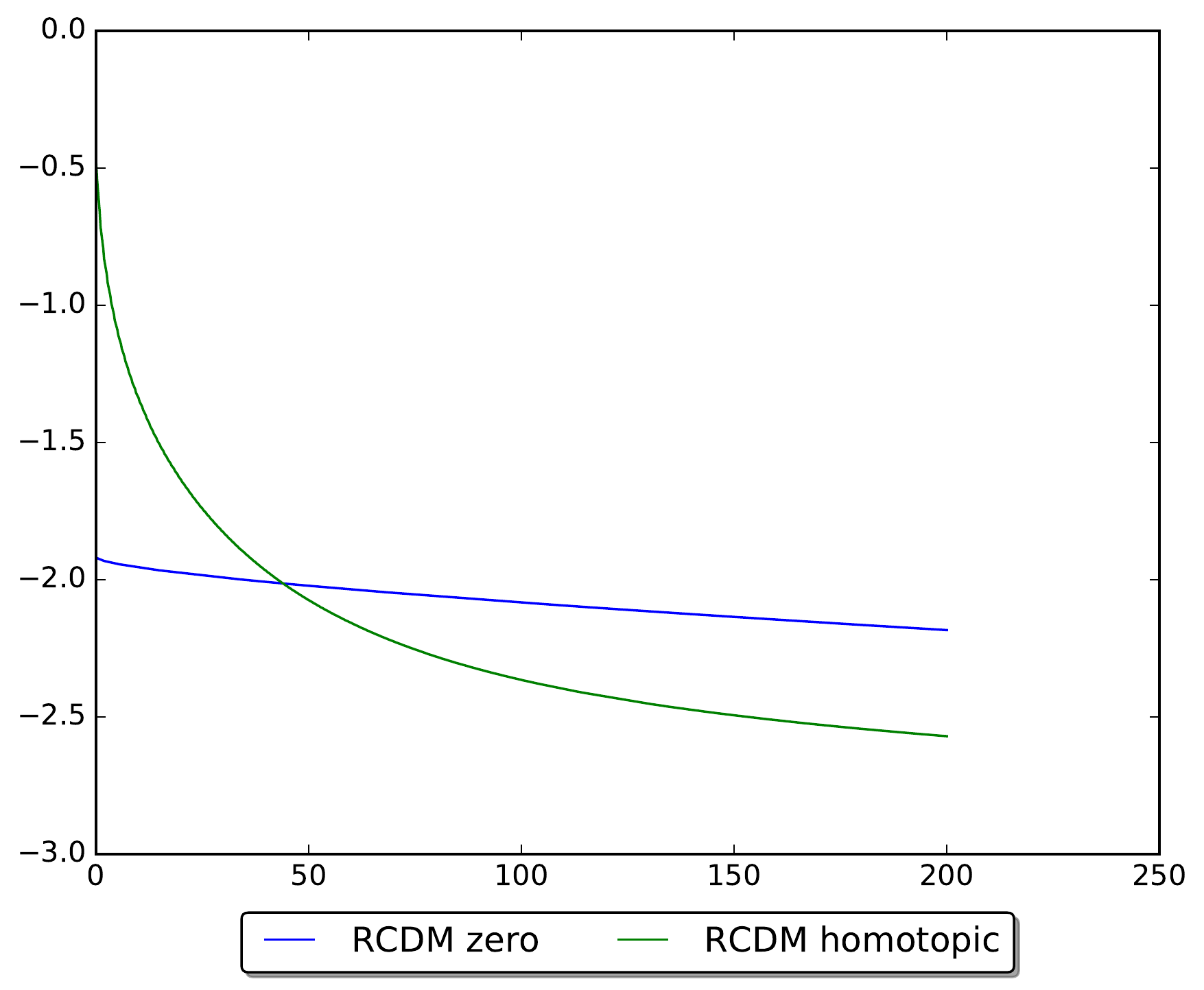}&
   \includegraphics[width=0.9\linewidth]{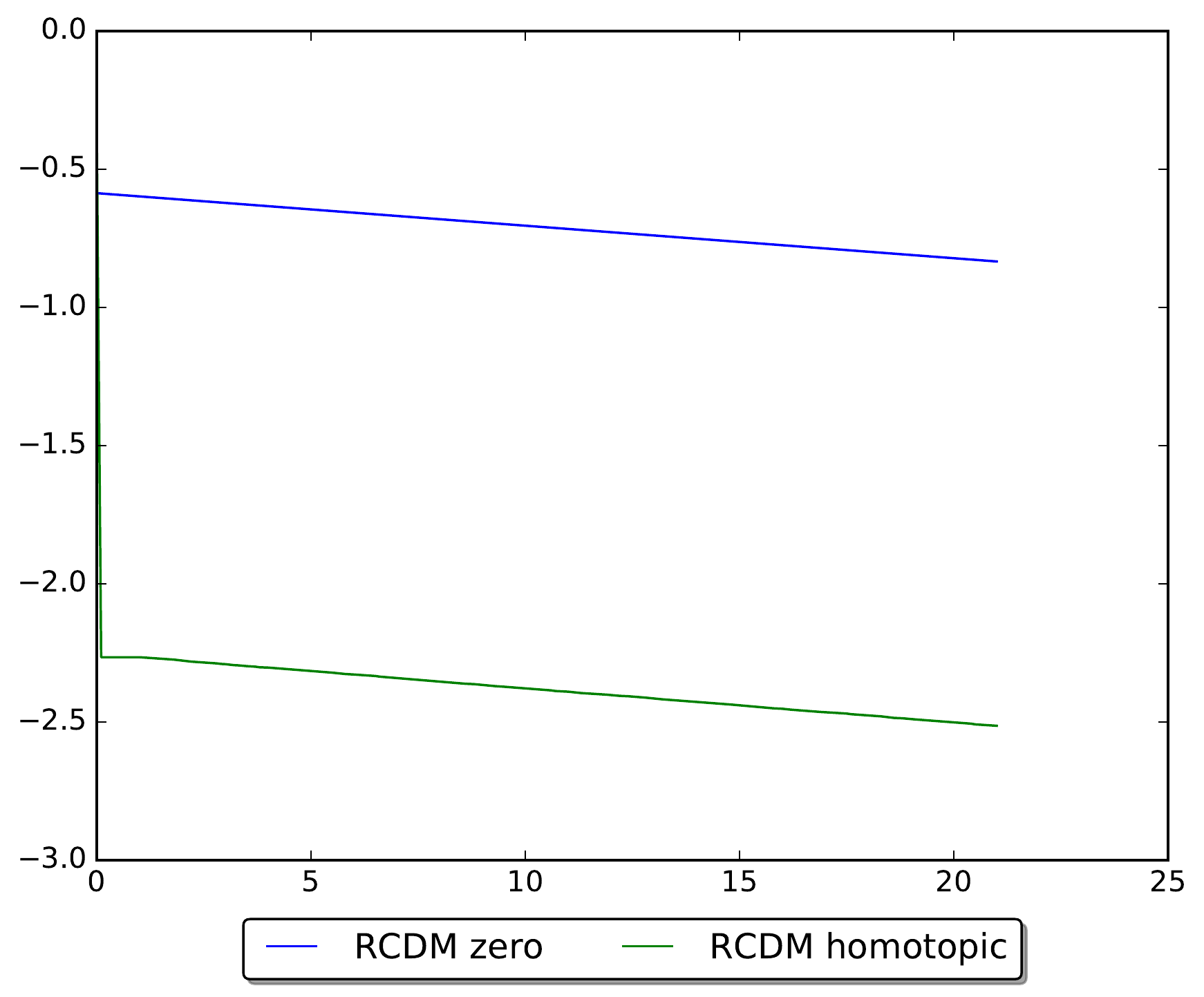}&
   \includegraphics[width=0.9\linewidth]{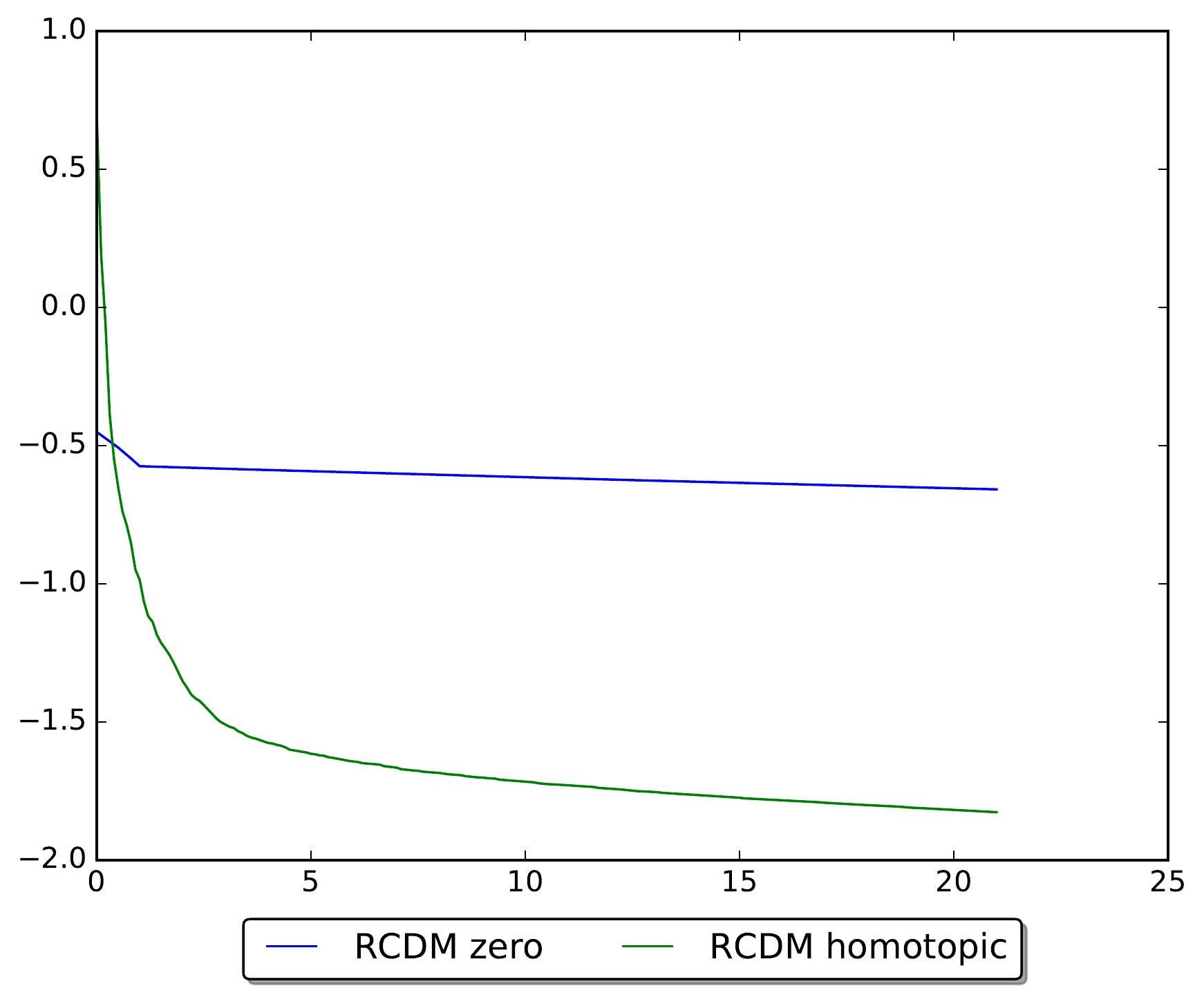}& 
   \includegraphics[width=0.9\linewidth]{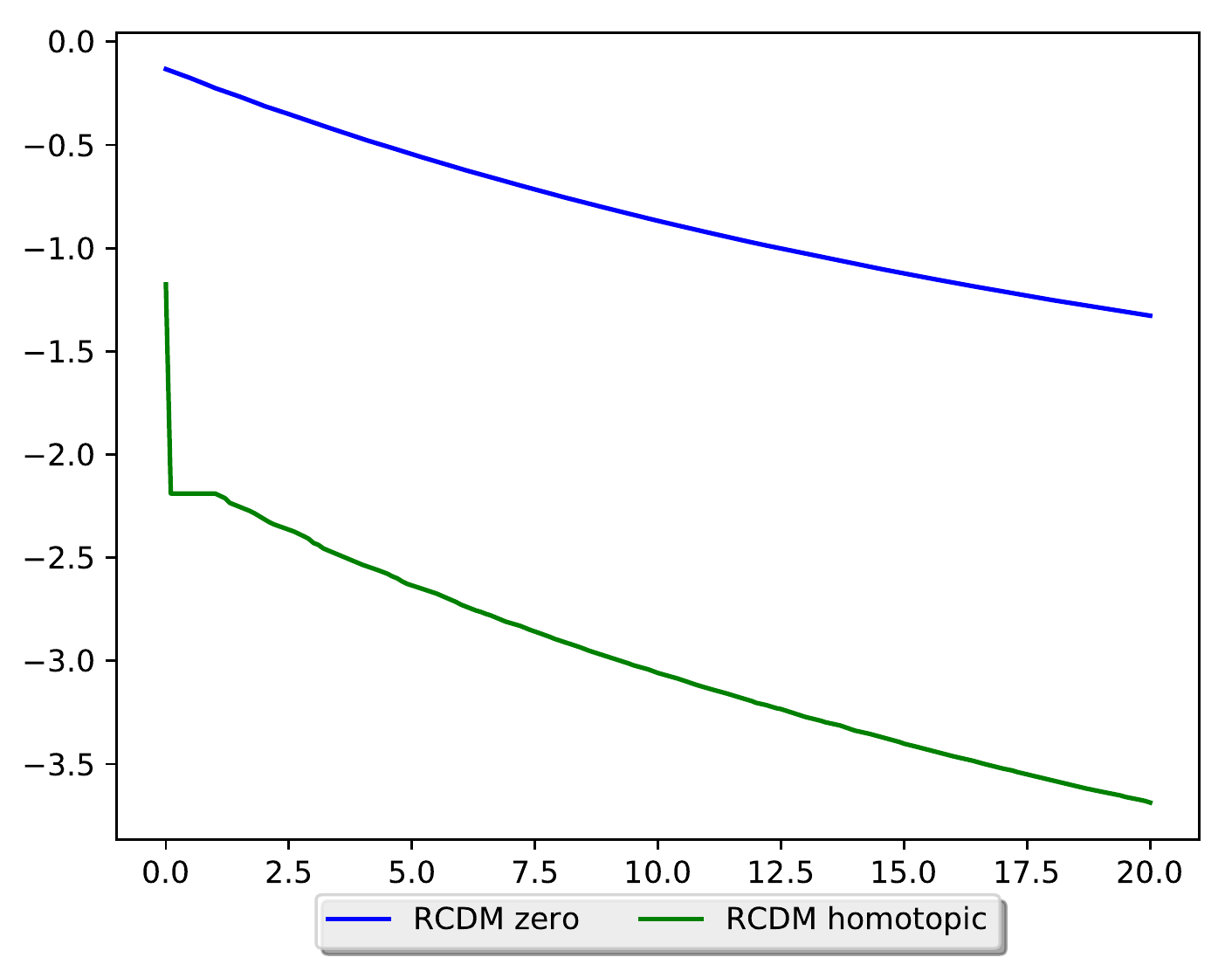}
   \\
   Primal \ & \includegraphics[width=0.9\linewidth]{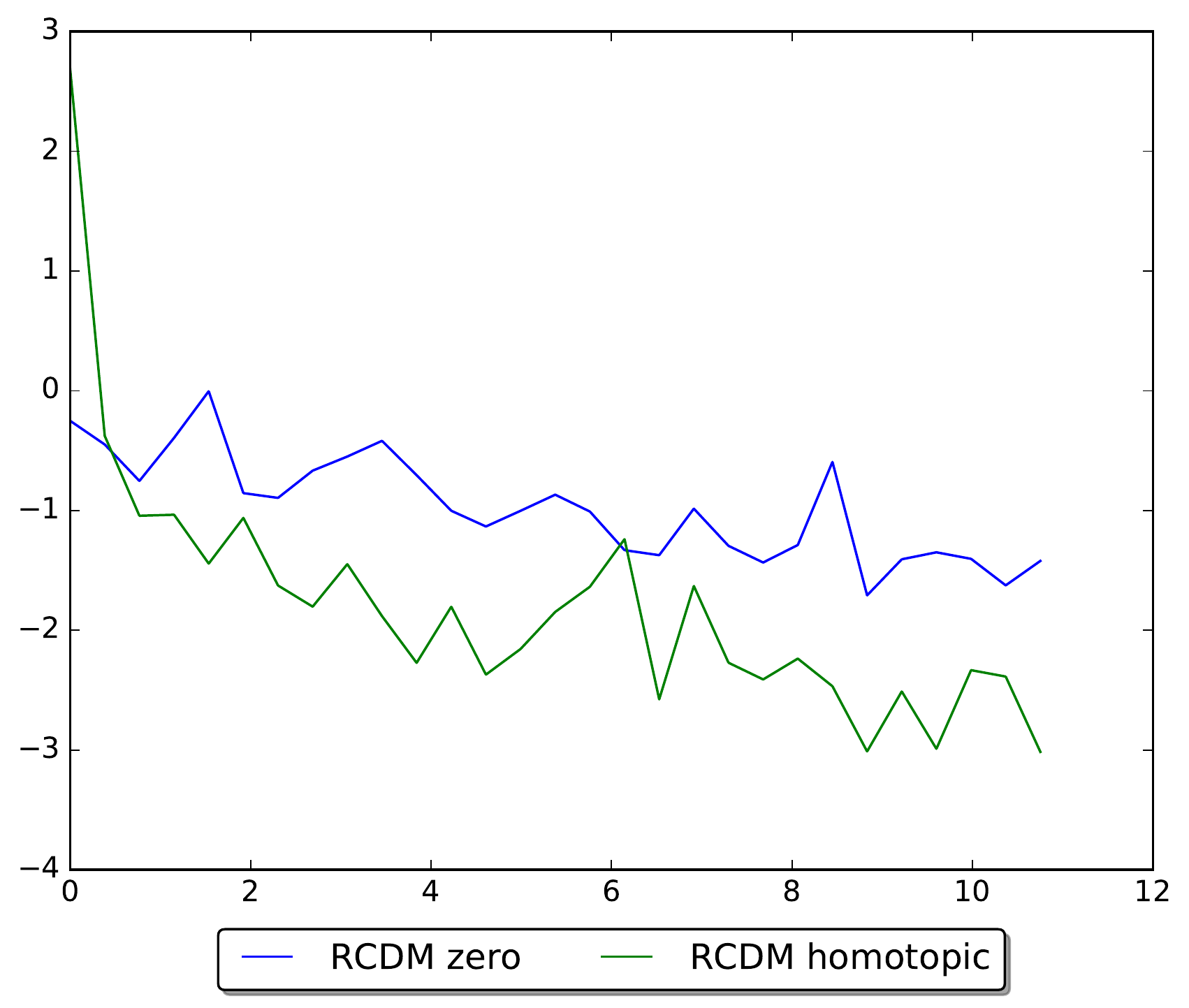}& 
 \includegraphics[width=0.9\linewidth]{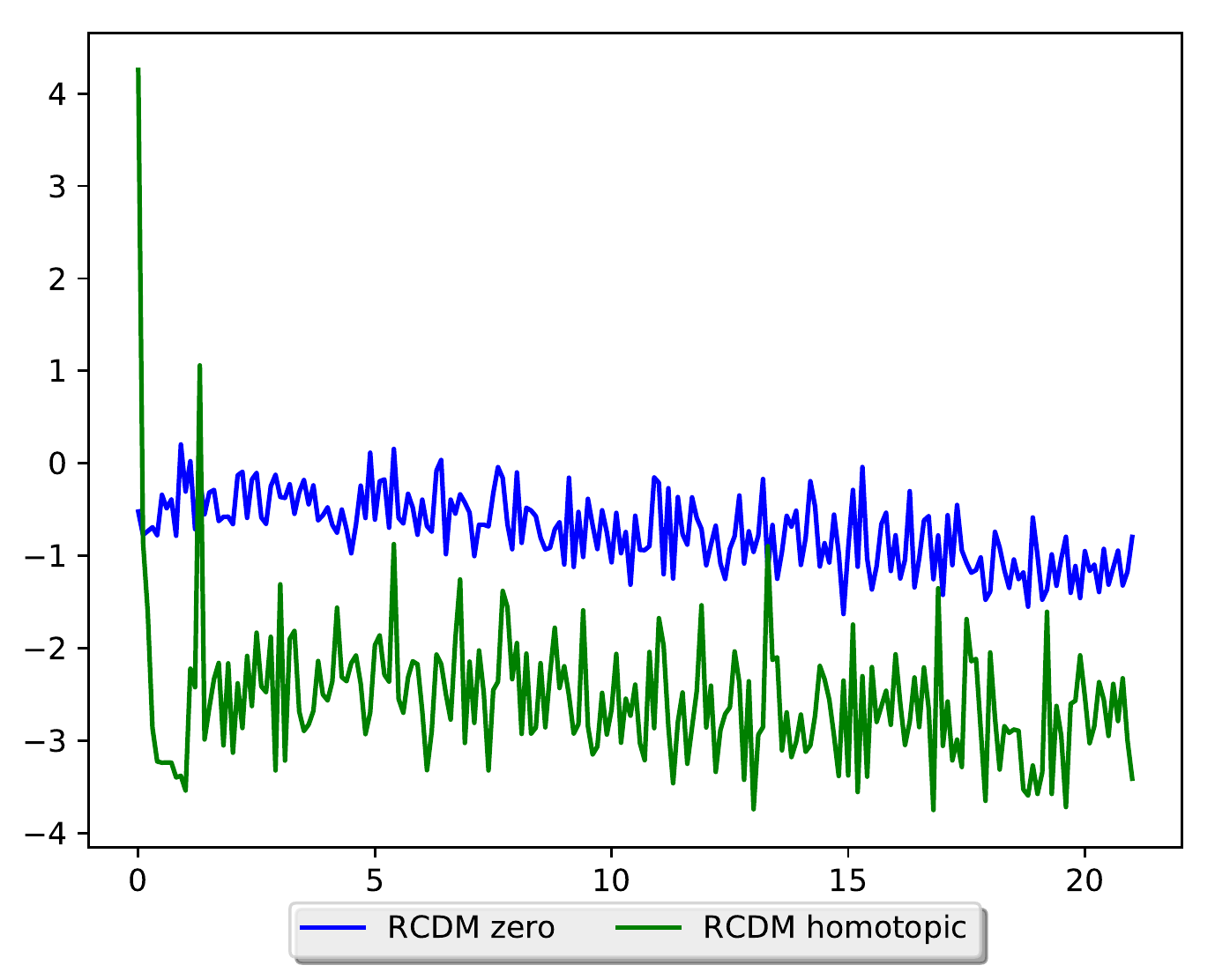} & 
  \includegraphics[width=0.9\linewidth]{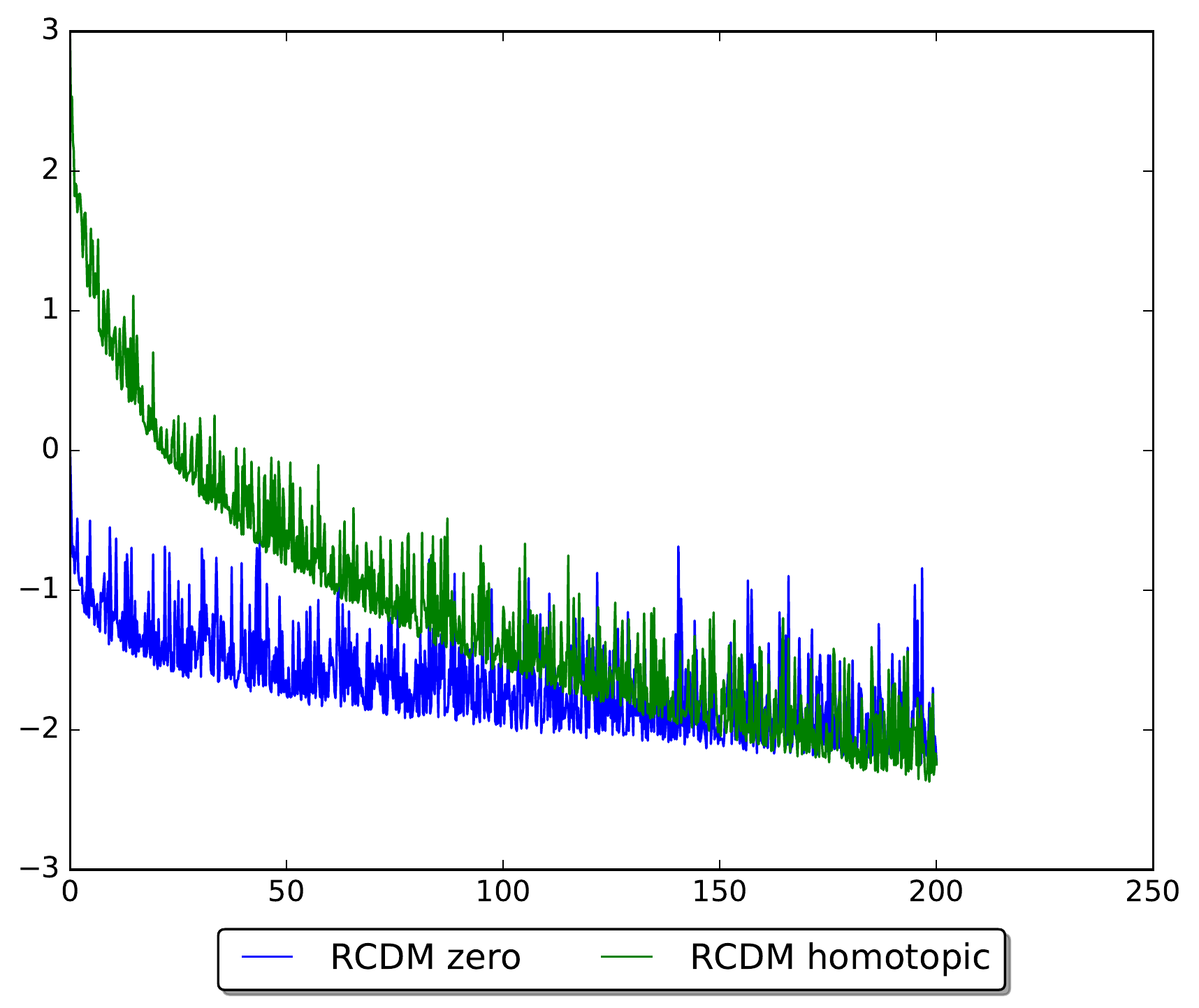}&
   \includegraphics[width=0.9\linewidth]{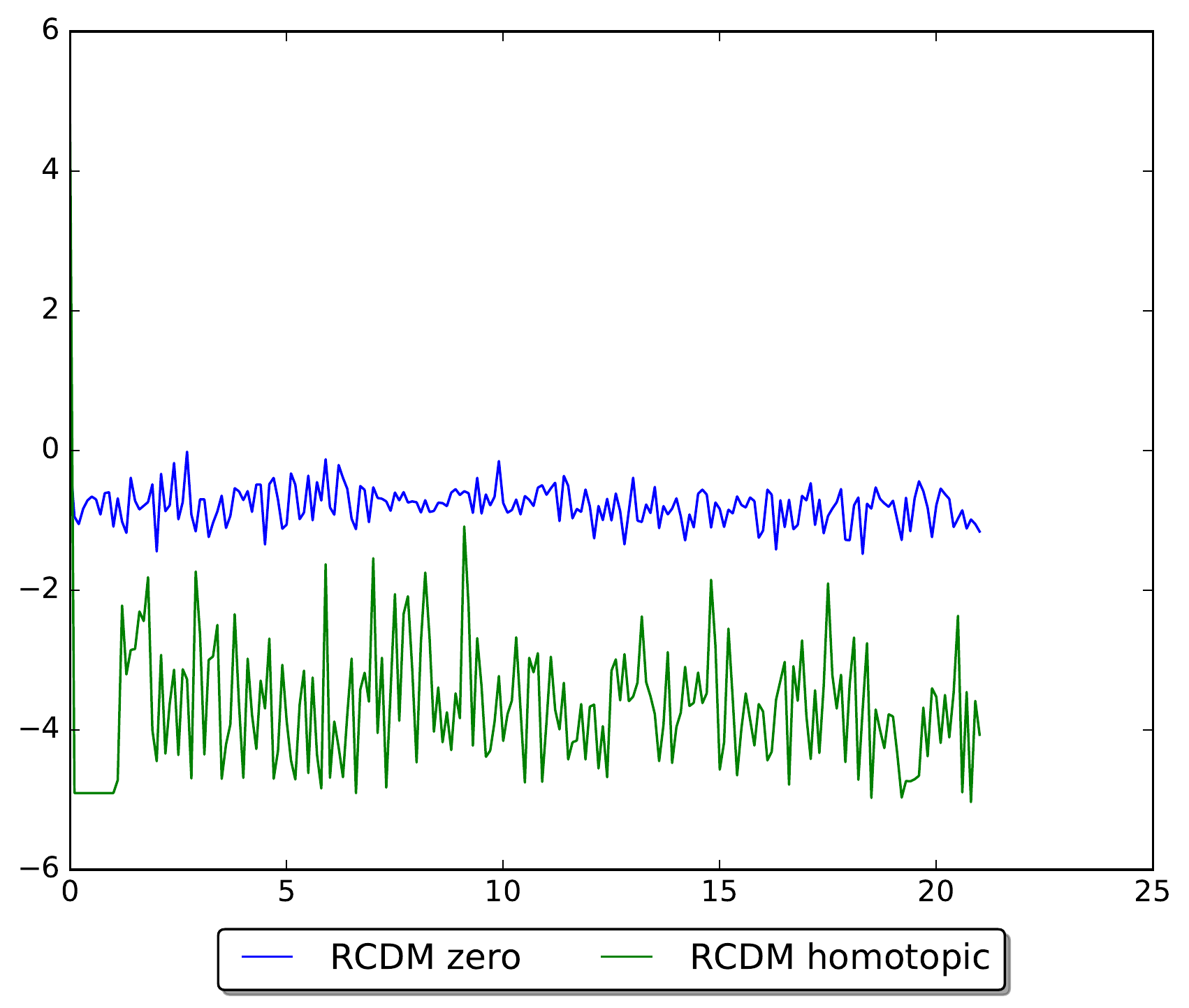}&
   \includegraphics[width=0.9\linewidth]{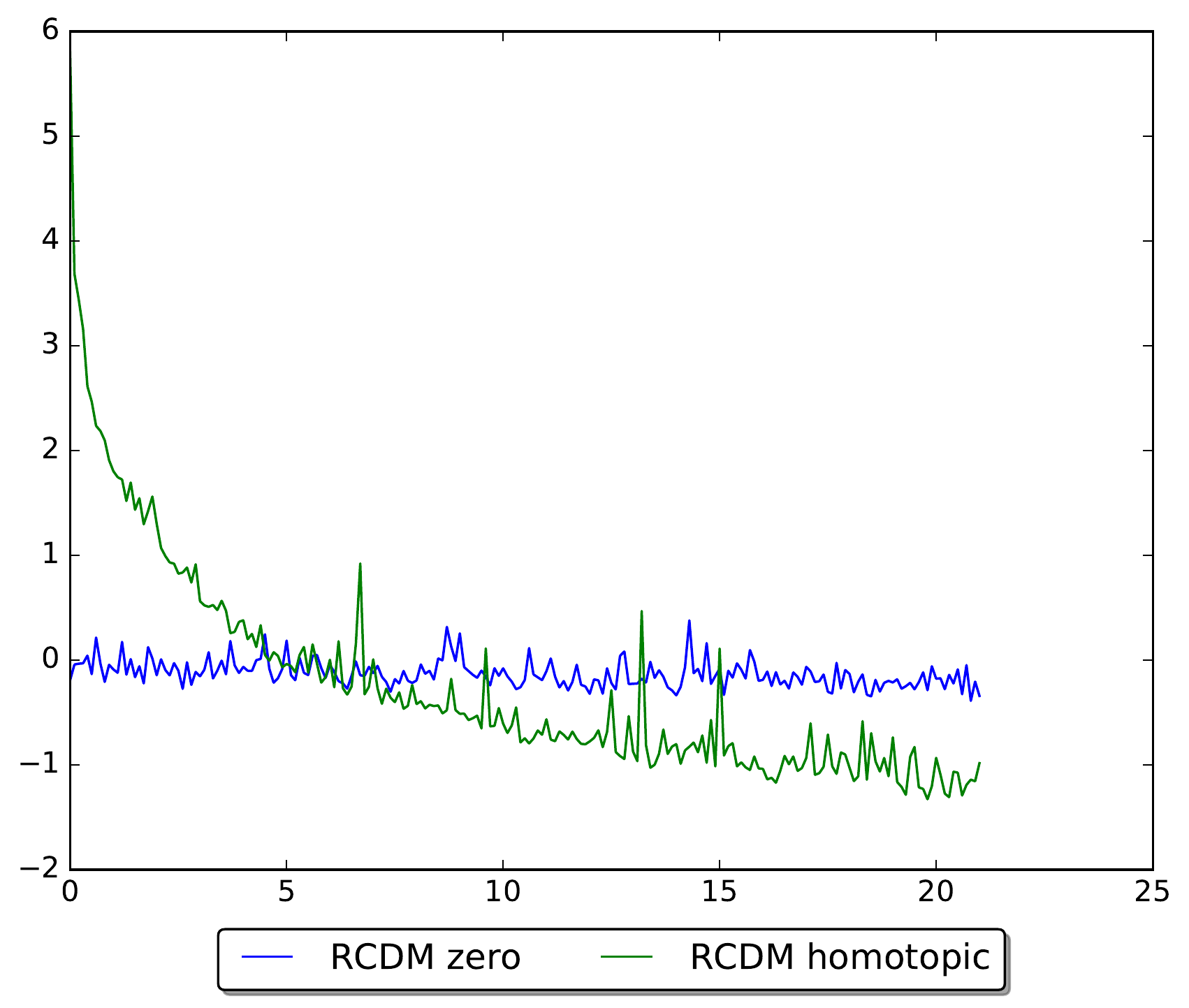}& 
   \includegraphics[width=0.9\linewidth]{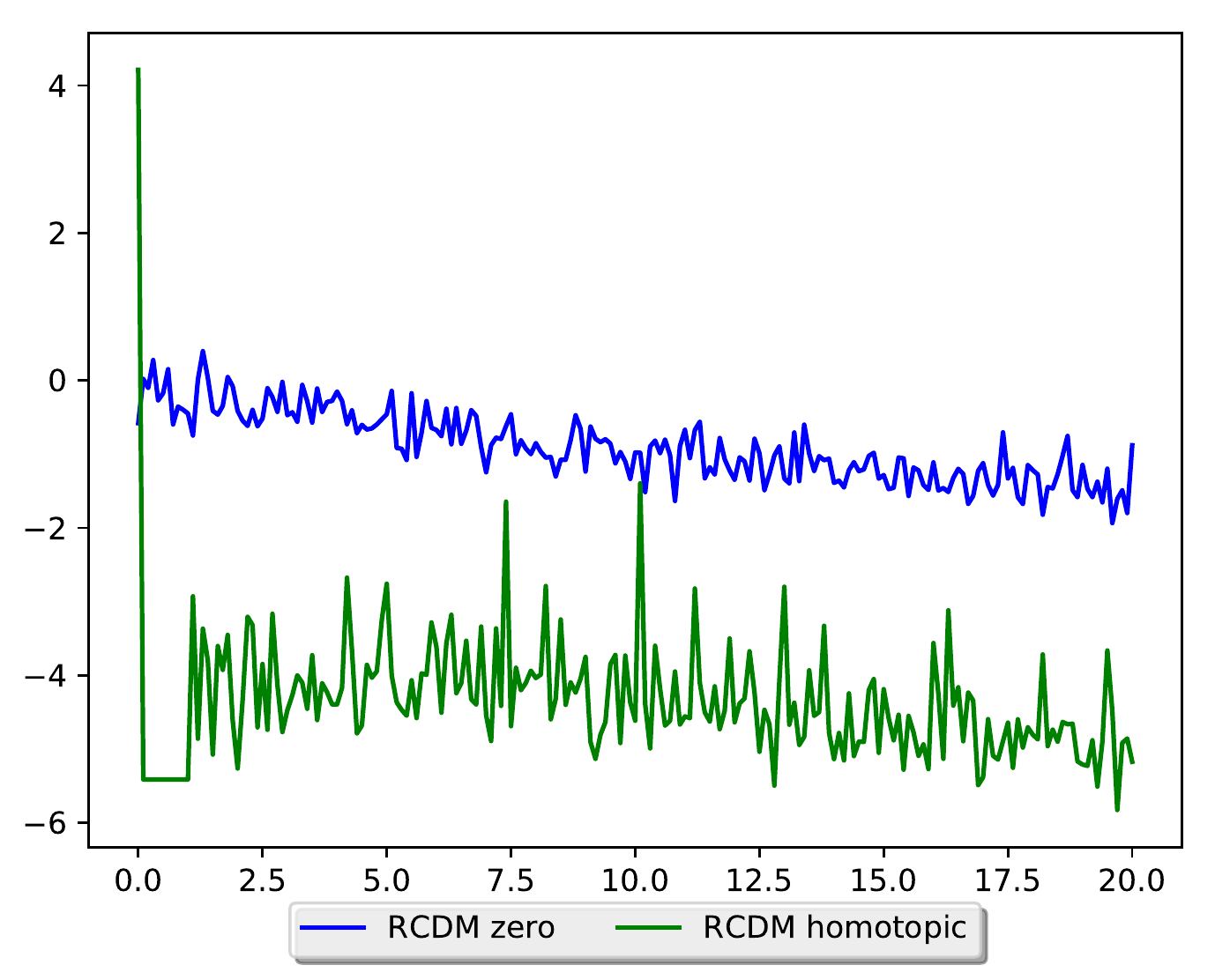}
   \\
   Test Error \ & \includegraphics[width=0.9\linewidth]{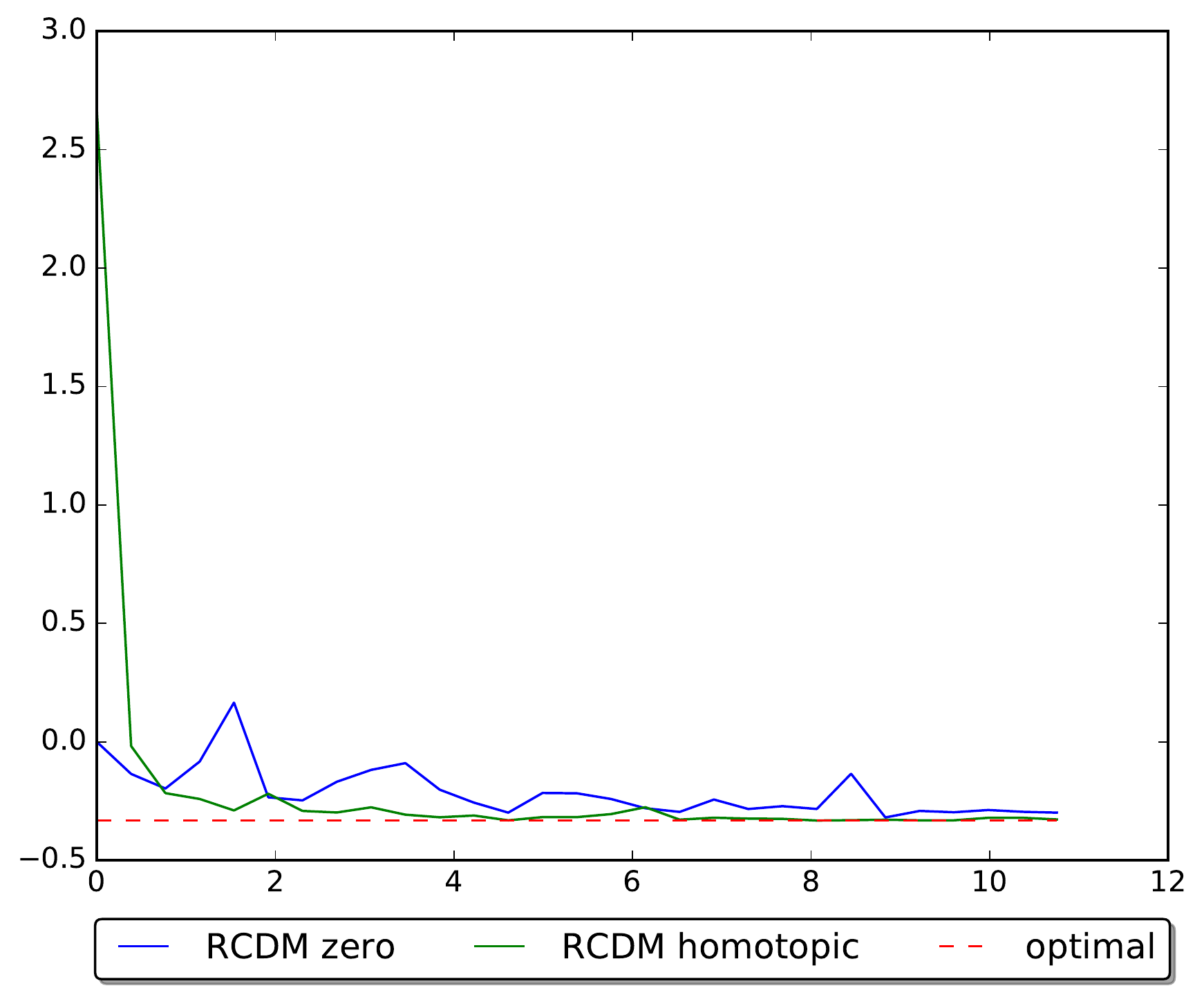}& 
 \includegraphics[width=0.9\linewidth]{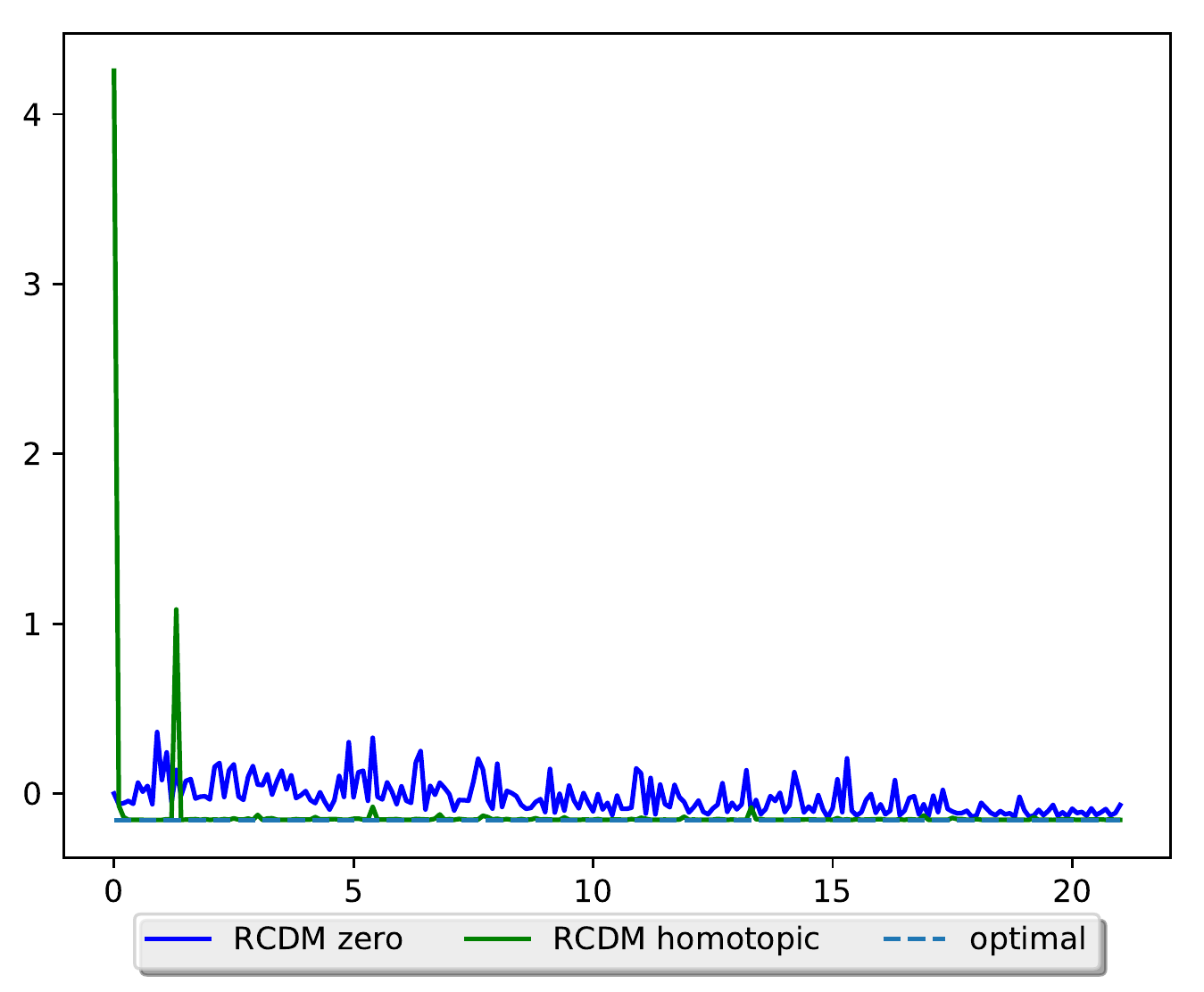} & 
  \includegraphics[width=0.9\linewidth]{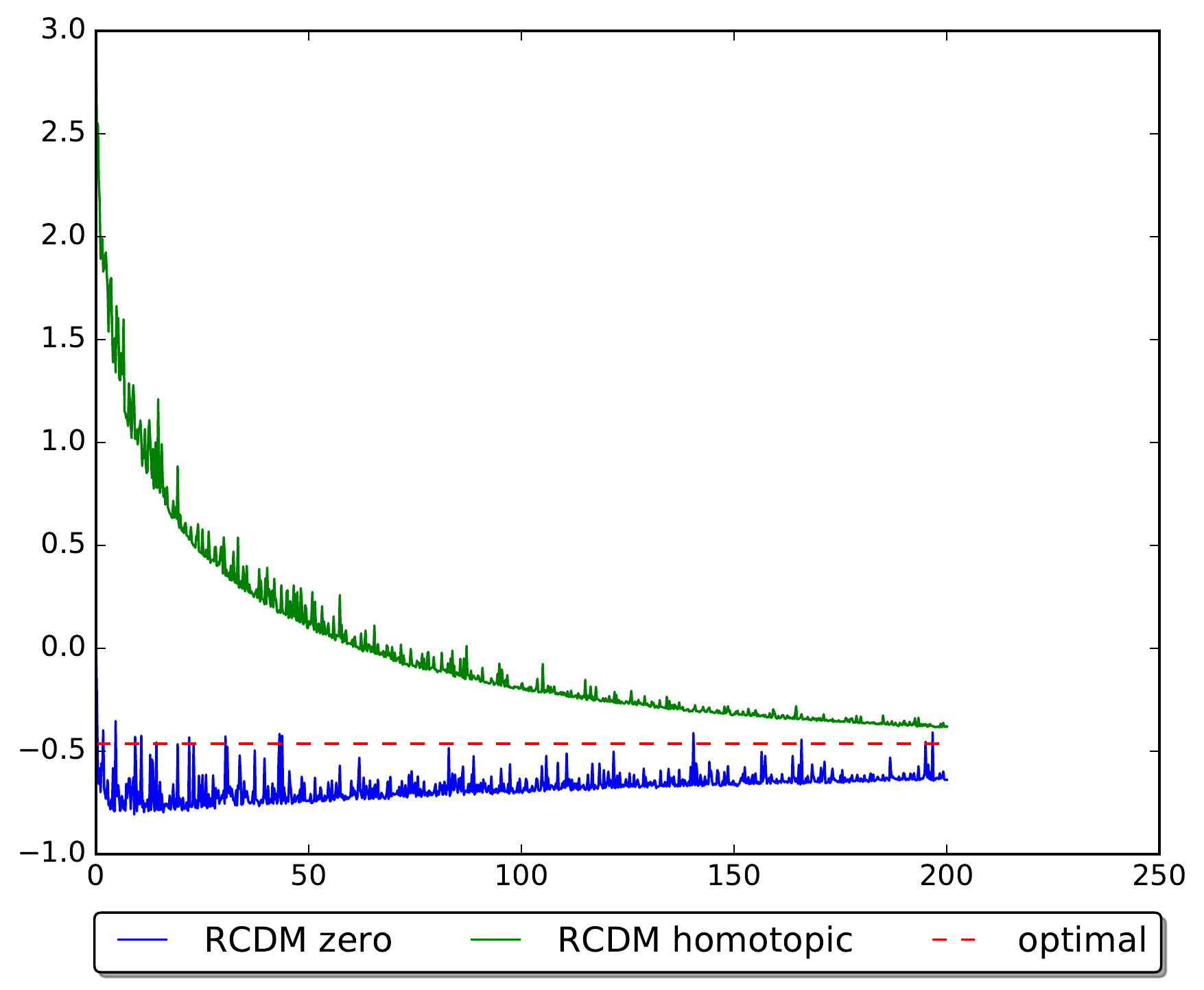}&
   \includegraphics[width=0.9\linewidth]{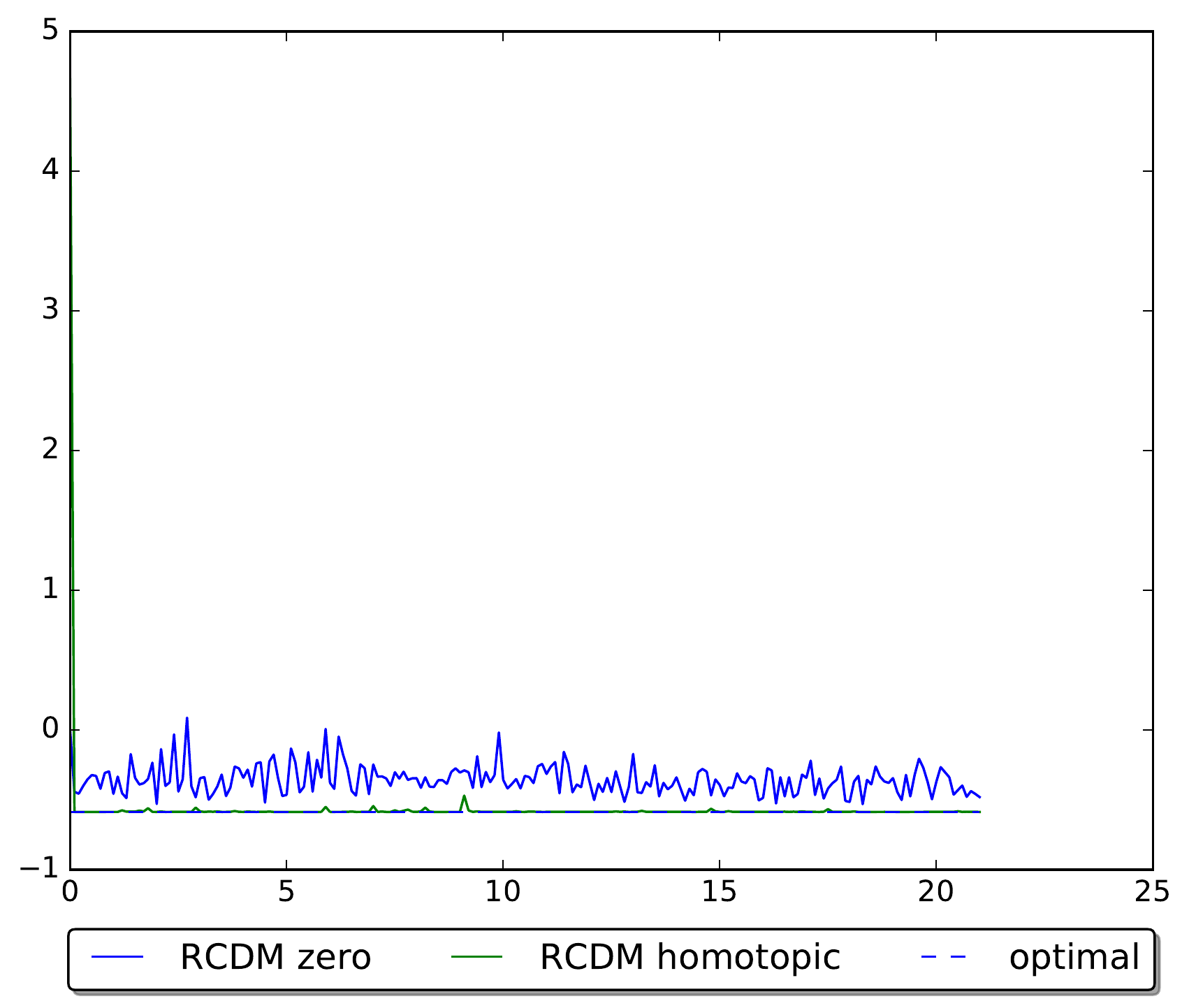}&
   \includegraphics[width=0.9\linewidth]{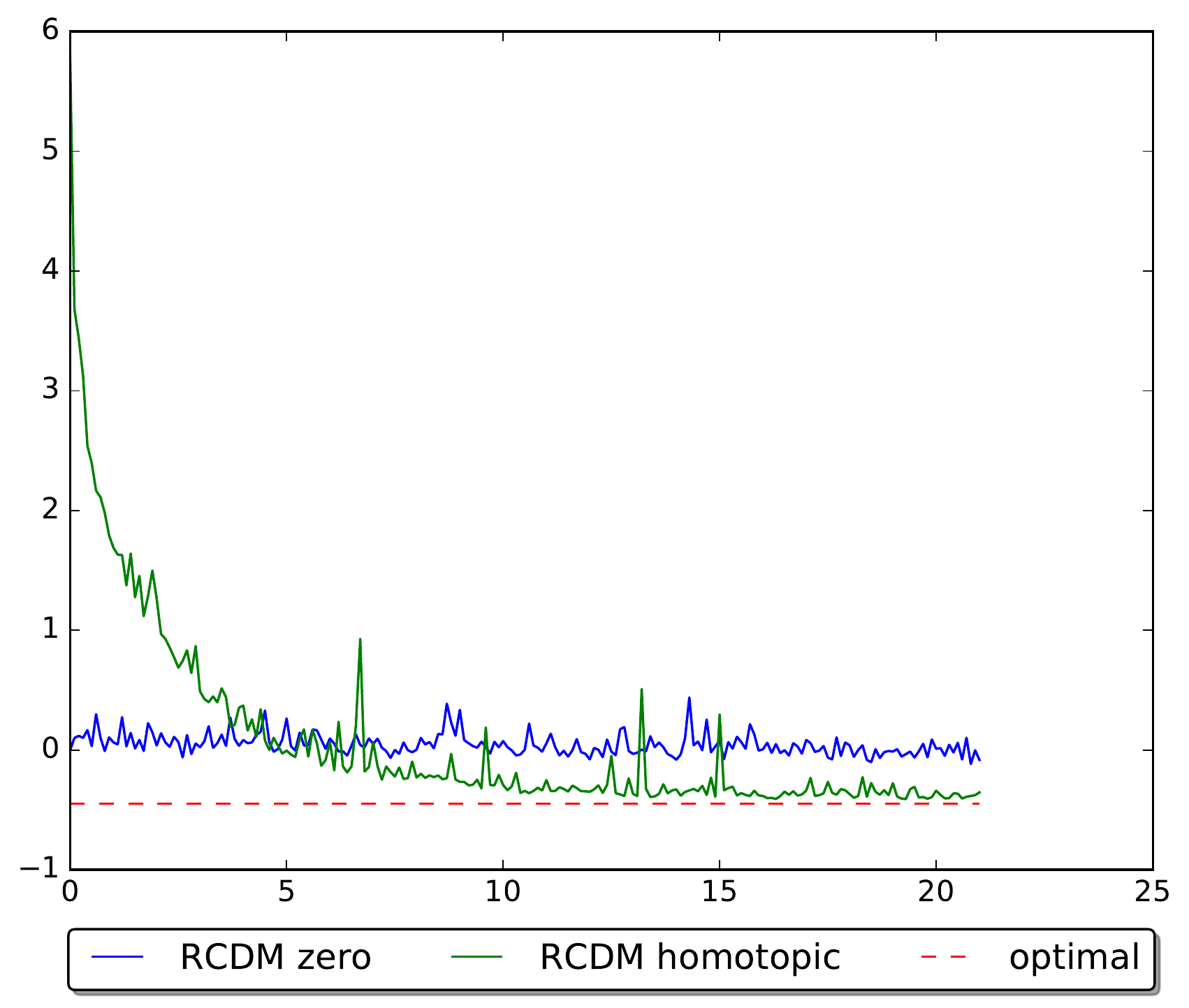}& 
   \includegraphics[width=0.9\linewidth]{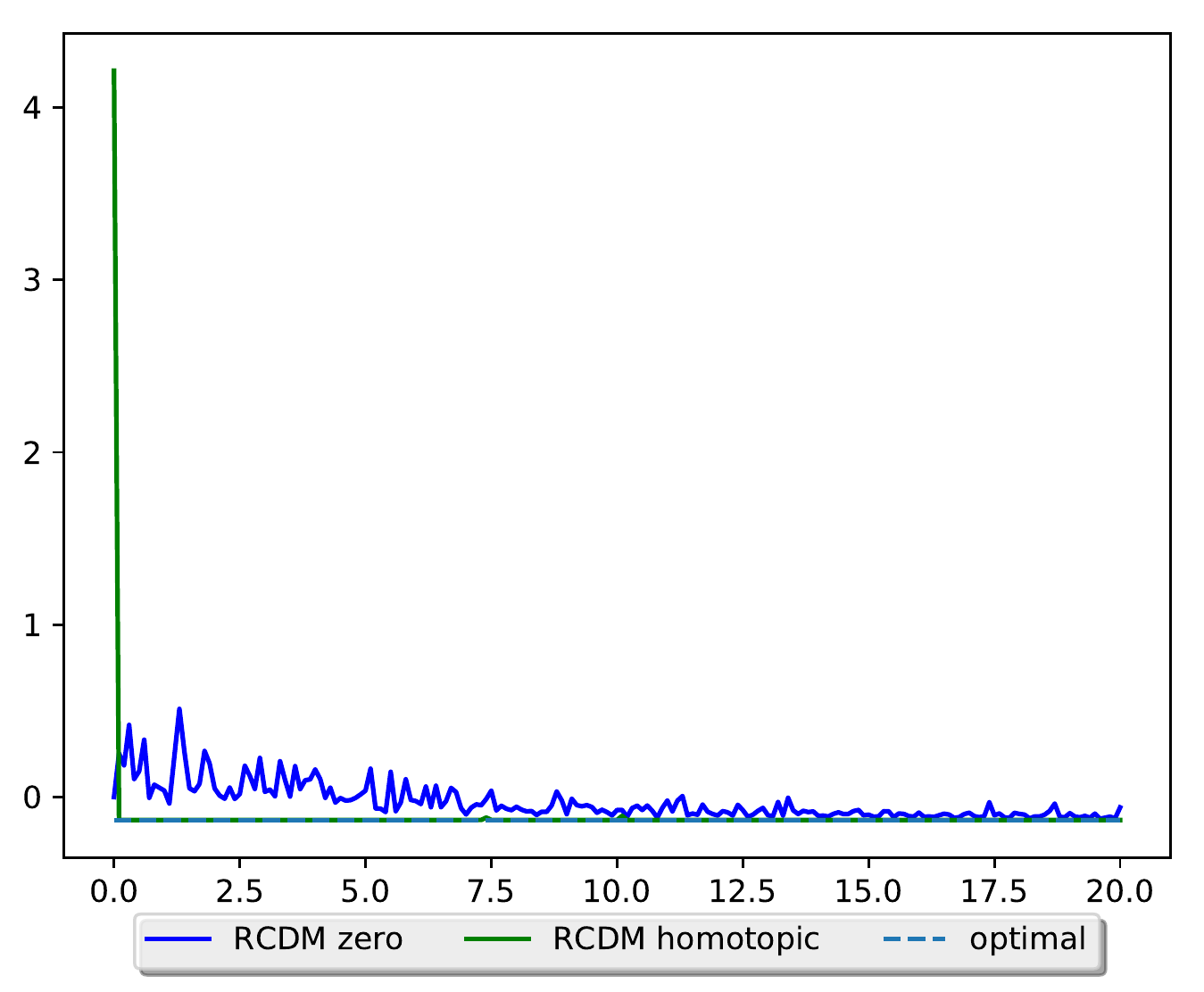}
   \\
\end{tabular}
\end{sc}
\end{footnotesize}

\caption{\label{fig:homotopic_rcdm_ridge}Homotopic initialization for the dual ridge regression.
The vertical axis represents the logarithm of dual
suboptimality ($\log_{10}( \quadratic(\dparam^{(t)})- \quadratic(\dparam^{*})$),
primal suboptimality ($\log_{10}(\quadratic(\param^{(t)})-
\quadratic(\param^*)$), and logarithm of the average of squared loss on the
test set. The horizontal axis shows the number of epochs. The horizontal dashed lined in the test error plot shows the test
error of the minimizer of ridge regression on the training set. The
training set including 80\% of the data.}

\end{figure}

 \begin{figure}
 \label{fig:deep_convergence}
 \begin{sc}
 
 \begin{tabular}{@{\hspace{0.1cm}}c@{\hspace{0.1cm}}c@{\hspace{0.1cm}}c@{\hspace{0.1cm}}c}
      \includegraphics[width=0.25\linewidth]{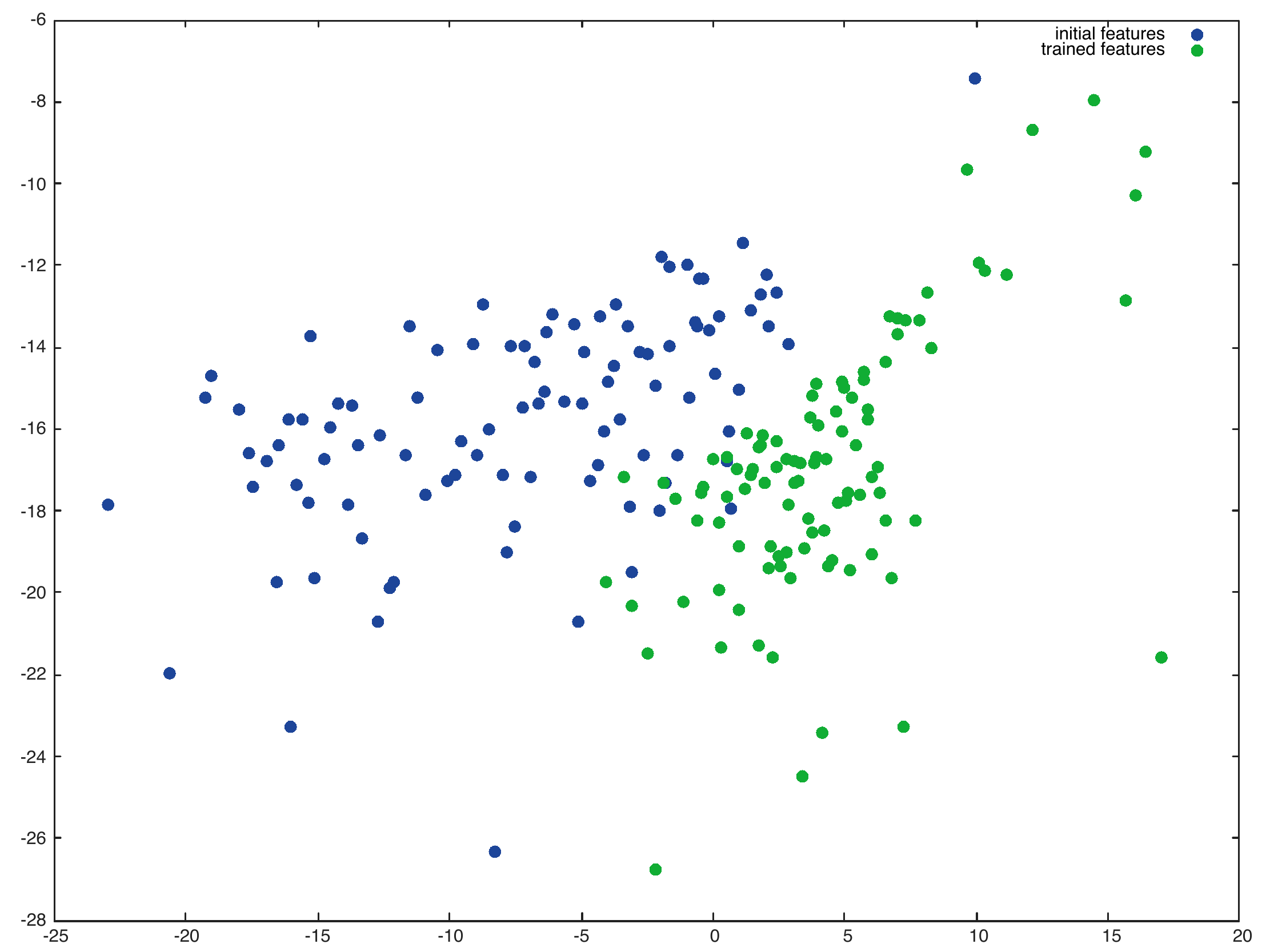} &
   \includegraphics[width=0.25\linewidth]{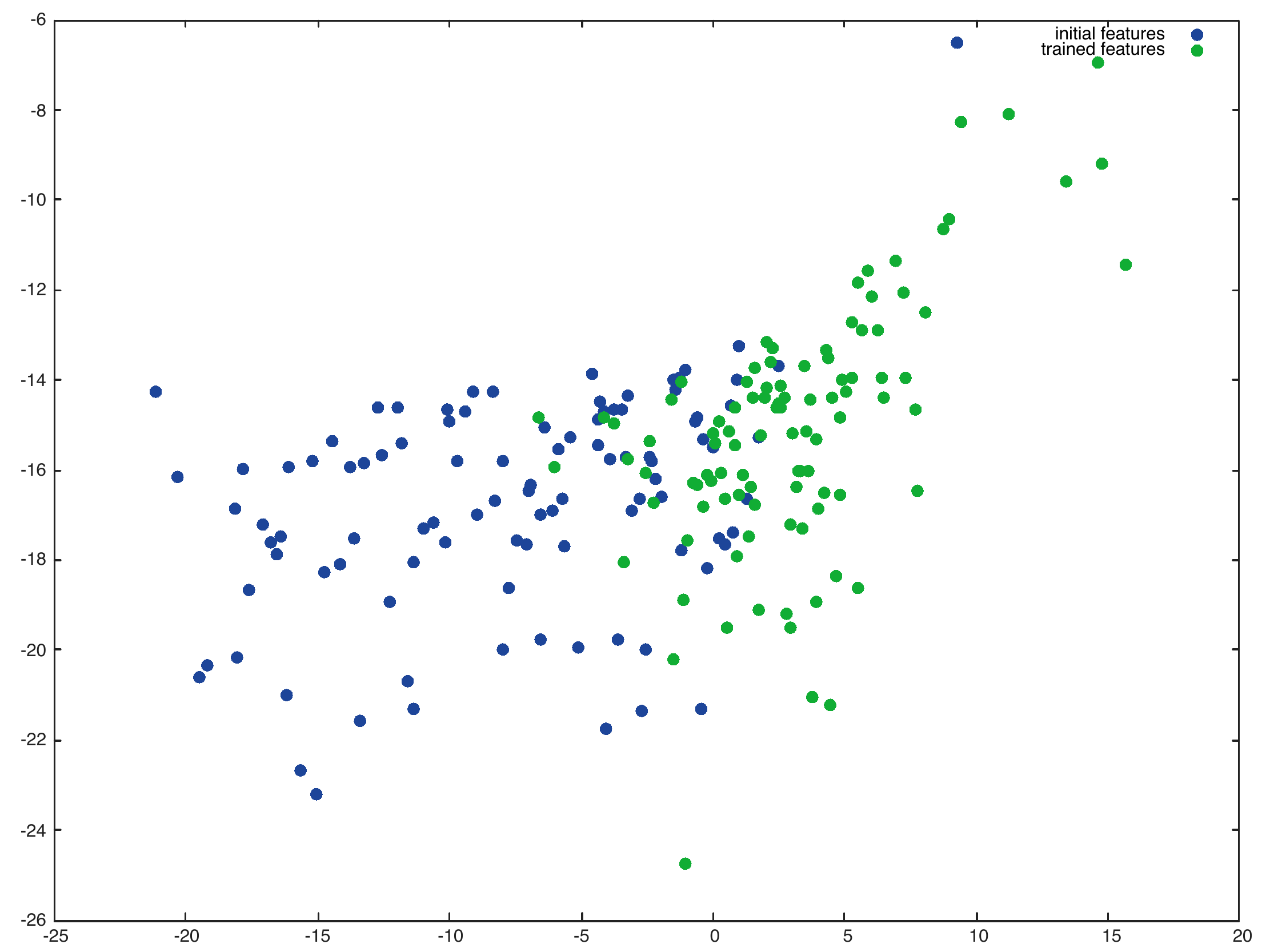} &
    \includegraphics[width=0.25\linewidth]{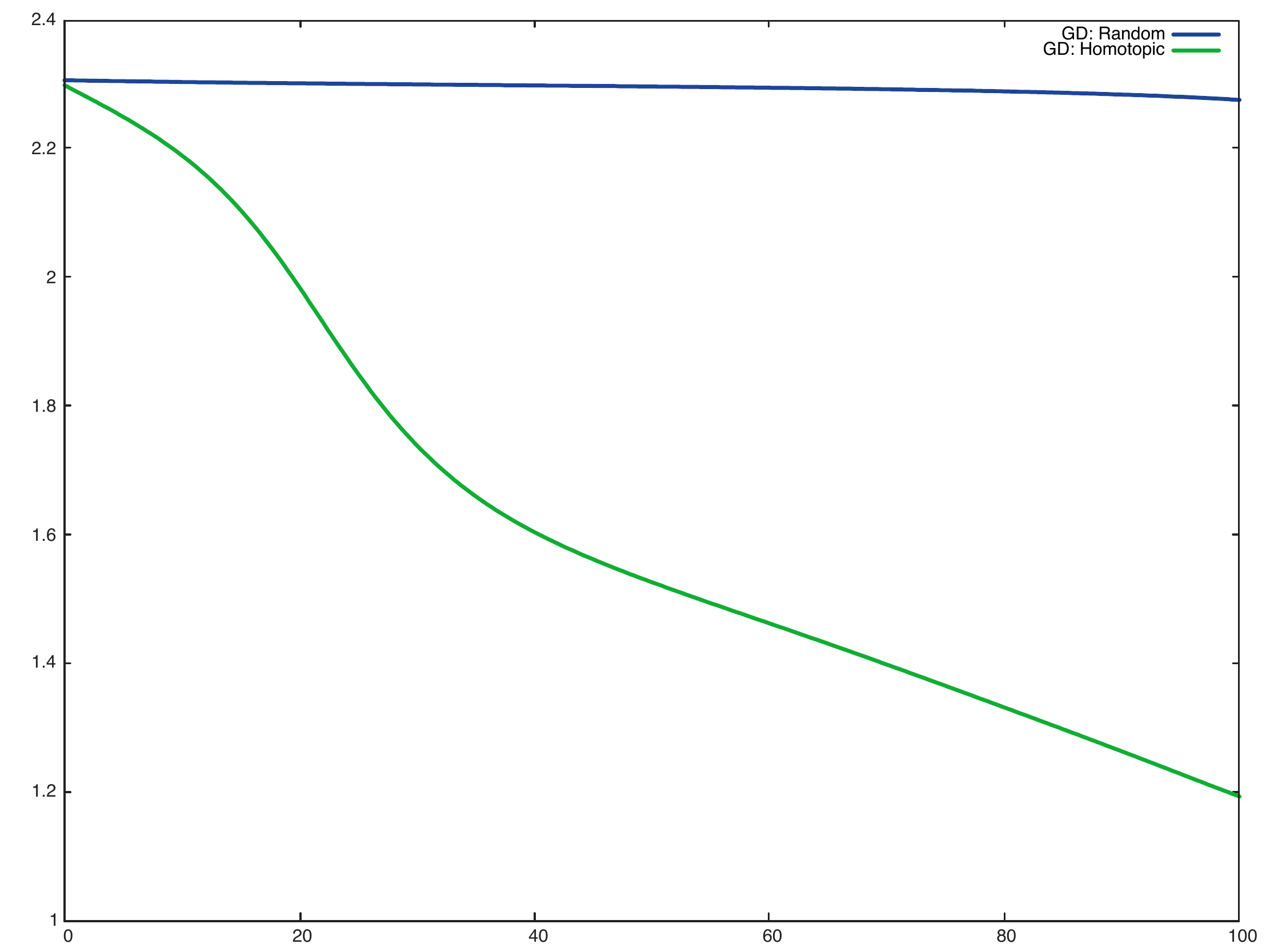}&
    \includegraphics[width=0.25\linewidth]{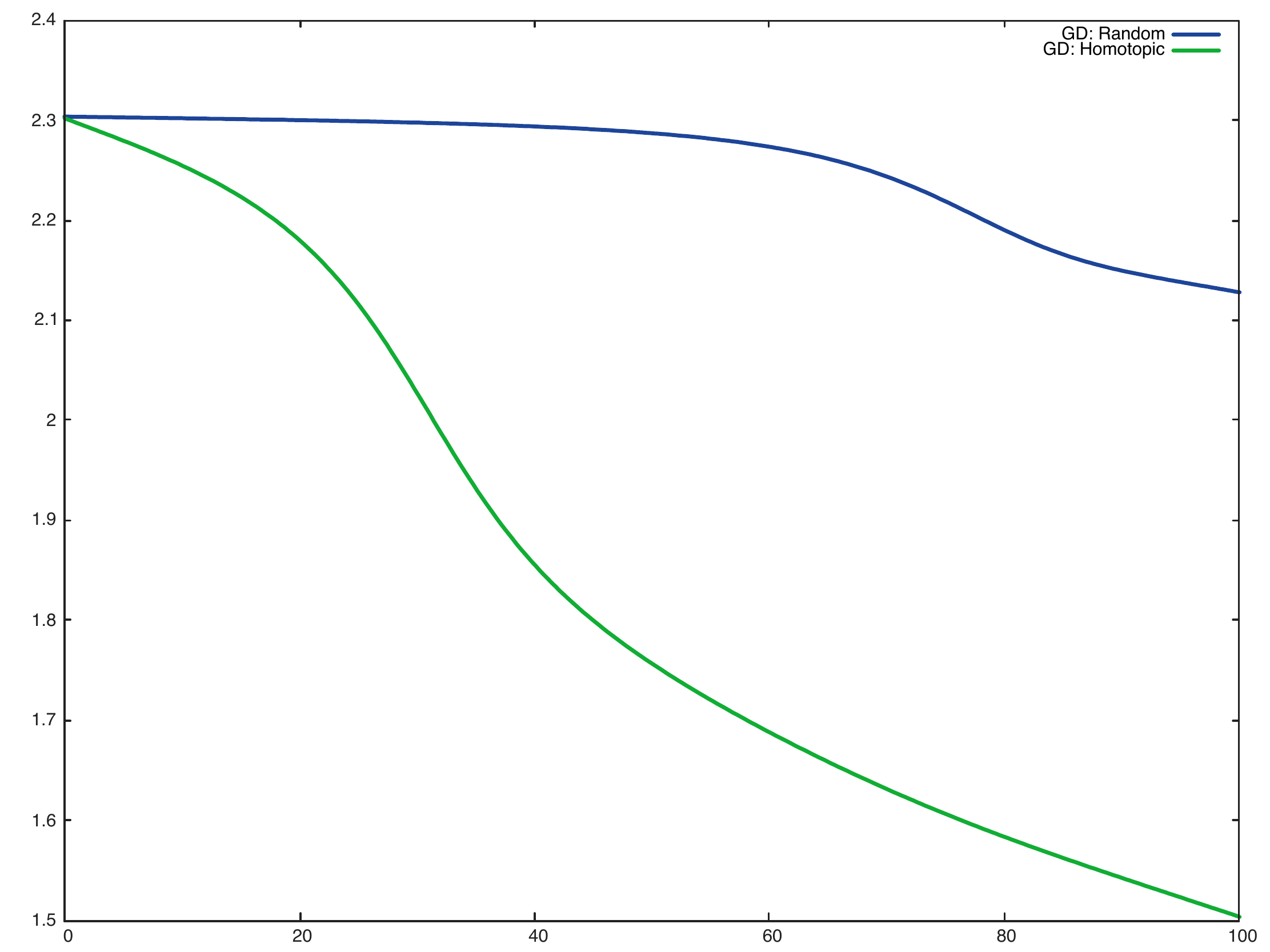}  
  \\ 
 a. mnist & b. cifar & c. mnist & d. cifar
\end{tabular}
 \end{sc} 
\caption{Initialization for neural networks. Figures a. and b. illustrate the dependency of
labels to eigenfeatures of non-linear features obtained by the last hidden layer
of MLP. The vertical axis represents $\log_{10}\left(\E[Y Z_j]^2/(\E[Z_j^2]^2)\right)$ and the
horizontal axis represents $\log_{10}(\E[Z_j^2]^2)$. Figures c. and b. show convergence of GD under two
different initialization schemes: random
initialization (blue graph), layer-wise initialization (green graph).
The vertical axis represents the training error. The horizontal axis is the
number of gradient steps. 
}
\label{fig:neural_net}
\end{figure}

\newpage
\begin{footnotesize}
\bibliography{refs}
\bibliographystyle{plain}
\end{footnotesize}

\newpage
\section{Appendix}  
Here, we present the proof of lemmas and theorems of the manuscript with our experiments on the SVM problem. We start with the analysis of random coordinate descent method, then we provide the proof of the claim on generalized linear model.  Finally, we represent the experimental result.
\subsection{Analysis of RCDM}
We proved that homotopic initialization accelerates convergence of gradient descent, up to a suboptimal solution, on the dual objective. In this section, we extend the result to the random coordinate descent method. For gradient descent, iterates could be tracked in a closed form, and the role of the initialisation was closely reflected in the convergence. The analysis of random coordinate descent, however, is more involved due to the noisy estimation of the gradient.  
In the next lemma, we decompose the convergence bound of RCDM to two terms: a term that depends on the initial sub-optimality on the dual objective, the second term is determined by the initial Euclidean distance to the dual minimizer.  Later, we will use this result to prove that homotopic initialization provides an acceleration up to a suboptimal. 
\begin{theorem*} [\ref{the:RCDM_convergence}]
 Let $\G_{i,j}$ denote the $(i,j)$-element of the Hessian matrix $\G$. Let the
 parameter vector $\dparam^{(t)}$ is obtained by $t$ RCDM-steps on
 $\quadratic$ starting from $\dparam_0$ with coordinate-wise step sizes
	\begin{align} \label{eq:step-size-lossbound-app}
			\gamma_r = \left(\G_{r,r} +
		\sum_{j} |\G_{r,j}|\right)^{-1}, \; \gamma_{\min} = \min_r \gamma_r, \; \gamma_{\max} = \max_r \gamma_r
	\end{align}
   For the obtained parameter, either the norm of gradient is bounded as
	\begin{align}
		\E \|\quadratic'_\mu(\dparam^{(t)})\|^2 \leq 2 \rho^2
		\left(\frac{\gamma_{\max}}{\gamma_{\min}}\right) \| \dparam_0 -
		\dparam^*\|^2
	\end{align} 
	or suboptimality is bounded as  
	\begin{equation}
		\E\left[\quadratic_\mu(\dparam^{(t)}) - \quadratic^*_\mu\right] \leq
		\frac{1}{2}\left(1 - \frac{\gamma_{\min}\rho
		}{n}\right)^t\left(\quadratic_\mu(\dparam_0) - \quadratic^*_\mu\right) +
		\frac{1}{2} \rho \left(\frac{\gamma_{\max}}{\gamma_{\min}}\right)\| \dparam_0 - \dparam^*\|^2
	\end{equation}
	for every $\frac{1}{n} \leq \rho \leq \| \G \| $ (expectation is
	over the random choice of coordinates).
\end{theorem*}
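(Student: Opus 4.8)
The plan is to combine a one-step coordinate-descent progress estimate with a spectral threshold argument governed by the free parameter $\rho$, and to control the Euclidean distance to the optimum through the diagonally-scaled norm induced by the step sizes. Throughout I write $g=\quadratic'_\mu(\dparam)$, $v=\dparam-\dparam^*$, and abbreviate the suboptimality by $S_t:=\quadratic_\mu(\dparam^{(t)})-\quadratic^*_\mu$.

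First I would establish the per-step progress inequality. Since $\quadratic_\mu$ is quadratic with Hessian $\G$, a single RCDM update on coordinate $r$ satisfies exactly $\quadratic_\mu(\dparam^{+})-\quadratic_\mu(\dparam)=-\gamma_r g_r^2+\tfrac12\gamma_r^2\G_{r,r}g_r^2$. The role of the step size $\gamma_r^{-1}=\G_{r,r}+\sum_j|\G_{r,j}|$ is that it forces $\gamma_r\G_{r,r}\le 1$, hence $1-\tfrac12\gamma_r\G_{r,r}\ge\tfrac12$; averaging over the uniformly chosen coordinate then gives the descent bound $\E[S_{t+1}\mid\dparam^{(t)}]\le S_t-\tfrac{\gamma_{\min}}{2n}\|g^{(t)}\|^2$.

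Next I would prove a spectral threshold inequality. Diagonalizing $\G=\sum_i\lambda_i u_iu_i^\top$, I note that in the dual eigensystem of Eq.~\eqref{eq:dual-eigen} the smallest eigenvalue is exactly $1/n$ (the $n-d$ kernel directions) and the largest is $\|\G\|$, which is precisely why $\rho$ is allowed to range over $[1/n,\|\G\|]$. Splitting the spectrum at $\rho$ and using $\lambda_i(\lambda_i-\rho)\ge-\rho^2$ on the low modes yields $\|\G v\|^2=\sum_i\lambda_i^2(u_i^\top v)^2\ge 2\rho\,S-\rho^2\|v\|^2$, because $S=\tfrac12 v^\top\G v$. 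Substituting this into the descent bound converts it into the contraction-with-residual recursion $\E[S_{t+1}]\le\bigl(1-\tfrac{\gamma_{\min}\rho}{n}\bigr)S_t+\tfrac{\gamma_{\min}\rho^2}{2n}\E\|v_t\|^2$.

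The third ingredient, which I expect to be the main obstacle, is the distance bound $\E\|\dparam^{(t)}-\dparam^*\|^2\le\tfrac{\gamma_{\max}}{\gamma_{\min}}\|\dparam_0-\dparam^*\|^2$, exactly the quantity appearing in both conclusions. The natural device is the diagonally-scaled norm $\|v\|_D^2:=\sum_r\gamma_r^{-1}v_r^2$: one coordinate step changes it by $\gamma_r^{-1}\bigl(\gamma_r^2 g_r^2-2\gamma_r v_r g_r\bigr)$, and the extra mass $\sum_j|\G_{r,j}|$ baked into $\gamma_r^{-1}$ is precisely what is needed, via a Gershgorin/diagonal-dominance estimate, to dominate the off-diagonal cross terms and make $\|v_t\|_D^2$ a supermartingale, i.e. $\E\|v_t\|_D^2\le\|v_0\|_D^2$. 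The sandwich $\gamma_{\min}^{-1}\|v\|^2\ge\|v\|_D^2\ge\gamma_{\max}^{-1}\|v\|^2$ then upgrades this to the Euclidean bound and explains the factor $\gamma_{\max}/\gamma_{\min}$. Making the cross terms cancel under the weighted norm — rather than only under uniform step sizes — is the delicate part; this is where the specific step-size choice does its real work.

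Finally I would assemble the dichotomy. Unrolling the recursion and summing the geometric series $\sum_{k\ge 0}\bigl(1-\tfrac{\gamma_{\min}\rho}{n}\bigr)^k=\tfrac{n}{\gamma_{\min}\rho}$ against the distance bound collapses the accumulated residual into $\tfrac{\rho}{2}\tfrac{\gamma_{\max}}{\gamma_{\min}}\|v_0\|^2$, delivering the second (suboptimality) conclusion up to the bookkeeping constants supplied by the descent and spectral lemmas. The first (gradient) conclusion is the sharper alternative available in the complementary regime: once the high-eigenvalue ($\lambda_i\ge\rho$) energy of the iterate has been annihilated, the gradient is concentrated on the sub-$\rho$ modes, where $\|\G v\|^2\le\rho^2\|v\|^2$ directly, so the distance bound gives $\E\|g_t\|^2\le 2\rho^2\tfrac{\gamma_{\max}}{\gamma_{\min}}\|v_0\|^2$ (the factor $2$ absorbing the modes near the threshold). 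Since at each $t$ either the gradient has reached this residual scale or the spectral inequality forces the geometric progress used above, at least one of the two bounds must hold, which is exactly the stated either/or.
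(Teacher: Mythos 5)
Your proposal is correct in substance, but it takes a genuinely different route from the paper's proof. The paper splits the Hessian orthogonally as $\G = \L_\rho + \S_\rho$ (eigendirections above/below the threshold $\rho$) and tracks the two seminorms $\|\dparam^{(t)}-\dparam^*\|^2_{\L_\rho}$ and $\|\dparam^{(t)}-\dparam^*\|^2_{\S_\rho}$ separately: the $\L_\rho$-part contracts at rate $1-\gamma_{\min}\rho/n$ \emph{conditionally} on the gradient-norm lower bound $\|\quadratic'(\dparam)\|^2 \ge 2\|\S_\rho(\dparam-\dparam^*)\|^2$, and the failure of that condition is exactly what produces the gradient alternative in the dichotomy. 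You instead combine the standard one-step coordinate-descent progress estimate with the threshold inequality $\|\G v\|^2 \ge 2\rho\, S - \rho^2\|v\|^2$ (a Polyak--{\L}ojasiewicz-type bound obtained by splitting the spectrum at $\rho$), which yields an \emph{unconditional} contraction-with-residual recursion; unrolling it against the distance bound gives the suboptimality conclusion always, so the stated either/or becomes trivially true and your hand-wavy treatment of the gradient alternative (``once the high-eigenvalue energy has been annihilated'' is not a well-defined event) is harmless because that alternative is never needed. The third ingredient is shared: your supermartingale argument for $\|v\|_D^2=\sum_r \gamma_r^{-1}v_r^2$, with diagonal dominance of $\mathrm{diag}(\gamma_r^{-1})-\G$ supplied by the extra $\sum_j|\G_{r,j}|$ in the step sizes and the sandwich giving the factor $\gamma_{\max}/\gamma_{\min}$, is precisely the paper's distance lemma. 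What each approach buys: yours is cleaner probabilistically (no mixing of per-realization conditions with expectations, which the paper's dichotomy does somewhat loosely) and delivers the second bound unconditionally; the paper's decomposition gives a sharper structural picture (the fast modes genuinely contract while the condition holds, making the gradient alternative a meaningful stopping criterion rather than a vacuous branch). One bookkeeping note: your transient term is $(1-\gamma_{\min}\rho/n)^t S_0$ rather than the theorem's $\tfrac12(1-\gamma_{\min}\rho/n)^t S_0$; this is immaterial, and indeed the paper's own factor $\tfrac12$ rests on the step $\|v_0\|^2_{\L_\rho} \le \|v_0\|^2_{\G} = 2S_0$ being written as if it gave $S_0$, so your constant is, if anything, the defensible one.
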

\begin{proof} 
We rewrite the RCDM update in a form that facilitates our future analysis. Consider matrix $\rmat_r$ whose $r$-th diagonal element is one and remaining elements are zero; using this matrix,  RCDM step can be written as 
\begin{align}
		\dparam^{+}  & = \dparam - \gamma_{r}
 \rmat_r \quadratic'(\dparam) \nonumber\\ 
  		& = \dparam - \gamma_{r}
 \rmat_r \left( \quadratic'(\dparam) - \quadratic'(\dparam^*) \right) \nonumber\\ 
        & = (\I - \gamma_r \rmat_r \G) (\dparam -\dparam^*) + \dparam^*.
        \label{eq:rcdm_udpate_matrix}
\end{align}
Note that we used the optimality condition of $\dparam^*$ in the second step. We skipped the subscript $\mu$ for $\quadratic_\mu$ throughout the proof.  To obtain the desired convergence guarantee, we decompose the Hessian matrix, of the dual objective, as $\G = \L_\rho +
	\S_\rho$ where matrices $\L_\rho$ and $\S_\rho$ are obtained from SVD
	decomposition of the data matrix: 
	\begin{align}
	\L_\rho  & = (n \mu)^{-1} \sum_{i:\lambda_i  \geq \rho } \lambda_i  \uv_i \uv_i^\top
	\\
	\S_\rho & = (n \mu)^{-1}\sum_{j:\lambda_j< \rho}  \lambda_j 
	\uv_j \uv_j^\top.
\end{align}
The second smallest eigenvalue of the $\L_\rho$ is $\rho$,
the spectral norm of $\S_\rho$ is bounded by $\rho$, and $\L_\rho \S_\rho = 0
$. Using the above decomposition, the expected suboptimality after one RCD-step can be written as
\begin{align}
	\E_r \left[ \quadratic(\dparam^{+}) - \quadratic^*\right]
	& \stackrel{\eqref{eq:subopt_quad}}{=} \frac{1}{2} \E \left[ \|
	\dparam^{+} - \dparam^* \|_\G \right] \nonumber\\ 
	& = \frac{1}{2}\E \left[ \|\dparam^{+}
	- \dparam^* \|_{\L_\rho} \right]
	+ \frac{1}{2}\E \left[ \| \dparam^{+}
	- \dparam^* \|_{\S_\rho} \right] \label{eq:convergence_decomposition_rcdm}
\end{align}
We prove that the first term, which depends on directions with large eigenvalues, decays in a favourable rate
(in lemma~\ref{lem:rcdm-fast-convergence}) and the second term is bounded (by a
factor of the threshold $\rho$). 
\begin{lemma} \label{lem:rcdm-fast-convergence}
		RCMD with coordinate-wise step sizes 
		\begin{align} \label{eq:stepsize_bound}
			\gamma_r \leq  (2 \G_{r,r})^{-1}
		\end{align}
		guarantees the decrement 
		\begin{align} \label{eq:L_convergence}
			\E \left[ \| \dparam^{+}
	- \dparam^*\|_{\L_\rho} \right]  \leq \left(1-\frac{\gamma_{\min}\rho}{n}
	\right) \E \left[ \| \dparam
	- \dparam^*\|_{\L_\rho} \right]
		\end{align}
		as long as 
		\begin{align} \label{eq:gradient_lower_bound_rcdm}
			 \| \quadratic'(\dparam) \|^2 \geq 2  \| \S_\rho\left(\dparam
	- \dparam^* \right)\|^2.
		\end{align}
	\end{lemma}
	\begin{proof} 
	 Consider the compact notations $\Delta =\dparam - \dparam^*$ and $\Delta_+ =
	 \dparam^{+} - \dparam^*$. We rewrite the expected suboptimality after one setp of
	 RCDM as 
		\begin{align} 
			\E \| \Delta_{+}\|_{\L_\rho} &
			\stackrel{\eqref{eq:rcdm_udpate_matrix}}{=} \E \|  (\I
			- \gamma_{r} \rmat_r \G) \Delta \|^2_{\L_\rho} \nonumber\\
			& =  \E \left[ \Delta^\top \left( \I - \gamma_r \G \rmat_r \right) \L_\rho
			\left( \I - \gamma_r \rmat_r \G \right) \Delta \right] \nonumber \\
			& = T_1 + T_2
		\end{align}
		where terms $T_1$ and $T_2$ are formulated as 
		\begin{align}
			T_1 & := \E \left[ \Delta^\top \L_\rho \left(\I - \gamma_r \rmat_r \G
			\right) \Delta \right]
			\\
			T_2 & := \E \left[ \Delta^\top \left( \gamma_r^2 \G \rmat_r \L_\rho
			\rmat_r \G  - \gamma_r \L_\rho \rmat_r \G \right) \Delta \right] 
		\end{align}
		The first term is bounded as:  
		\begin{align}
			T_1 & = \Delta^\top \L_\rho \left( \I - \E_r \left[ \gamma_r \rmat_r \right]
		\G \right) \Delta \nonumber\\
		& \stackrel{[1]}{\leq} \Delta^\top \L_\rho \left( \I - \frac{\gamma_{\min}}{n}
		\G\right)\Delta
		\nonumber\\
		& \stackrel{[2]}{\leq} \Delta^\top \left(\L_\rho - \frac{\gamma_{\min}}{n}
		\left( \L_\rho^2 + \L_\rho \S_\rho \right)\right) \Delta 
		\nonumber\\ 
		& \leq \Delta^\top \L_\rho \left( \I - (\gamma_{\min}/n)
		\L_\rho\right)
		\Delta
		\nonumber\\
		& \leq \Delta^\top \L_\rho^{\sfrac{1}{2}} \left( \I - (\gamma_{\min}/n) 
		\L_\rho\right) \L_\rho^{\sfrac{1}{2}}
		\Delta
		\nonumber\\
		& \stackrel{[3]}{\leq} \left( 1- (\gamma_{\min}\rho/n)\right)
		\Delta^\top \L_\rho
		\Delta.
		\end{align}
		In steps \footnotesize{[1]--[3]}, we used the fact that $\E\left[
		\rmat_r\right] = \I/n$, $\G = \L_\rho + \S_\rho$, and the second smallest eigenvalue of the matrix $\L_\rho$ is
		greater than $\rho$, respectively. It remains to show that $T_2$ is
		negative:
		\begin{align}
			T_2 & =  \E \left[ \Delta^\top \left( \gamma_r^2 \G \rmat_r \L_\rho
			\rmat_r \G  - \gamma_r \L_\rho \rmat_r \G \right) \Delta \right] \nonumber\\ 
			    & \stackrel{[1]}{=} \E \left[ \Delta^\top \left( \L_{\rho,r}
			    \gamma_r^2 \G \rmat_r \G  - \gamma_r \L_\rho \rmat_r \G \right) \Delta
			    \right] \nonumber\\
			    & \stackrel{[2]}{=}  \E \left[ \gamma_r \Delta^\top \left( \G_{r}
			    \G \rmat_r \G  - \L_\rho \rmat_r \G \right) \Delta \right] \nonumber\\ 
			    & \stackrel{[3]}{\leq} \gamma_{\min} \E \left[ \Delta^\top \left(
			    \frac{1}{2}\G \rmat_r \G  - \L_\rho \rmat_r \G \right) \Delta \right]
			    \nonumber\\
			    & \leq \gamma_{\min}  \Delta^\top \left( \frac{1	}{2} \G \E
			    \left[\rmat_r \right] \G  - \L_\rho \E\left[\rmat_r\right] \G \right)
			    \Delta \nonumber \\ 
			    & \stackrel{[4]}{\leq} \frac{\gamma_{\min}}{n}  \Delta^\top \left(
			    \frac{1 }{2} \G^2 - \L_\rho \G \right)
			    \Delta 
			    \nonumber \\ 
			    & \stackrel{[5]}{\leq} \frac{\gamma_{\min}}{2}  \Delta^\top \left( 
			    \S_\rho^2 - \L_\rho^2 \right)
			    \Delta \nonumber \\ 
			    & \stackrel{[6]}{\leq} 0
		\end{align}
		 where we used following facts in each step: 
		 \begin{itemize}
		   \item[\footnotesize{[1]:}] Note that  $\L_{\rho,r}$ denotes the diagonal element $r$ of matrix $\L_\rho$. Matrix $\rmat_r$ has only one non-zero
		   element: the $r$-th diagonal which is one, hence $\rmat_r \L_{\rho} \rmat_r = \L_{\rho,r} \rmat_r $. 
		   \item[\footnotesize{[2]:}] Recall $ (\L = \G - \S) \rightarrow (\L_{\rho,r} = \G_r - \S_{\rho,r})$.
		   Since $\S_{\rho}$ is positive definite, its diagonal elements are positive.
		   This concludes that $\L_{\rho,r} \leq \G_r $. 
		   \item[\footnotesize{[3]:}] The choice of step sizes leads to $\gamma_r \G_r \leq 1/2$.
		   \item[\footnotesize{[4]:}] We know that $\E_r \left[\rmat_r\right] = \I/n$
		  according to the definition of matrix $\rmat_r$.  
		  \item[\footnotesize{[5]:}] Matrix $\L_\rho$ and $\S_\rho$ are orthogonal.
		  \item[\footnotesize{[6]:}] As long as the norm of the gradient is sufficiently large (the condition in Eq.~\eqref{eq:gradient_lower_bound_rcdm}), this inequality holds. To prove this, we
		 write the norm of the gradient as: 
		 \begin{align}
		 	\| \quadratic'(\dparam) \|^2 &  = \| \quadratic'(\dparam) -
		 	\quadratic'(\dparam^*)\|^2 \nonumber \\ 
		 	& = \| \G \Delta \|^2 \nonumber \\ 
		 	& =  \| \L_\rho \Delta \|^2 + \| \S_\rho \Delta \|^2
		 \end{align}
		 Now the condition of Eq.~\eqref{eq:gradient_lower_bound_rcdm} implies 
		 \begin{align}
		  & \| \quadratic'(\dparam) \|^2 \geq 2 \| \S_\rho \Delta \|^ 2  \nonumber \\ 
		 \leftrightharpoons & \| \L_\rho \Delta \|^2 + \| \S_\rho \Delta \|^2 \geq 2
		 \| \S_\rho \Delta \|^ 2 \nonumber \\ 
		 \leftrightharpoons & \| \L_\rho \Delta \|^2 - \| \S_\rho \Delta \|^2 \geq 0
		 \end{align}
		 \end{itemize}  
	\end{proof}
	The last lemma implies that the convergence of $\E\left[ \| \dparam^+ -
	\dparam^*\|_{\L_\rho}\right]$ is dominated by $\rho$ up to a suboptimal solution in term of the norm of the gradient.  
	It remains to prove that the second term of suboptimality, i.e. $\E \left[  \| \dparam^+ -
	\dparam^*\|_{\S_\rho}\right] $, scales with a factor of $\rho$: 
	\begin{align}
		\E \left[  \| \dparam^+ -
	\dparam^*\|^2_{\S_\rho}\right] & \leq \| \S_\rho\| \E \left[ \| \dparam^+ - \dparam^* \|^2
	\right] \nonumber \\ 
	& \leq \rho \E \left[ \| \dparam^+ - \dparam^* \|^2
	\right]. \label{eq:upperbound_on_slow_convergence_rcmd}
	\end{align}
    Let parameter vector $\dparam^{(t)}$ denote the obtained parameter vector
    after $t$ RCDM steps. We know that RCMD with step size of $\gamma_r \leq
    (\G_{r,r})^{-1}$ monotonically convergences to the minimizer in terms of the
    objective value. Furthermore, $(1/n)$-strong convexity of the dual
    function implies that $\| \dparam^{(t)}-\dparam^*\|^2 \leq
    n(\quadratic^{(t)} - \quadratic^*)$. Therefore, $\| \dparam^{(t)} - \dparam^*
   \| $ asymptotically convergences to zero; however, this convergence is not
   monotone. In other words, the distance might diverge. In the
   next lemma, we bound this distance by a factor of the initial distance, i.e. $\| \dparam_0 - \dparam^* \|^2$.
	\begin{lemma} \label{lem:distance_bound_rcdm}
		The distance of RCD iterates $\dparam^{(t)}$ from the minimizer are bounded by a constant factor of the initial distance from the minimizer:
    \begin{align}
    \E \| \dparam^{(t)} - \dparam^*\|^2 \leq \frac{\gamma_{\min}}{\gamma_{\max}}
    \| \dparam^{(0)} - \dparam^* \|^2
    \end{align}
    for the choice of step sizes in
    Eq.~\eqref{eq:step-size-lossbound-app}. 
	\end{lemma}
	\begin{proof}
		We introduce the diagonal matrix $\Lambda$ whose $r$ diagonal element is the
		coordinate-wise step size $r$. The definition of $\Lambda$ implies $\gamma_{\min}\I \preceq \Lambda \preceq
		\gamma_{\max}\I$. Furthermore, the choice of step sizes of
		Eq.~\eqref{eq:step-size-lossbound-app} ensures matrix $\Lambda-\G$ is diagonal
		dominant, therefore it is positive definite. Using these facts, we prove that
		$\E \|\dparam^{(t)}-\dparam^*\|^2_{\Lambda}$ does not diverge.
		Some straightforward algebra leads to 
		\begin{align}
			\E \|\dparam^{(t)}-\dparam^*\|^2_{\Lambda}
			\leq \E \|\dparam^{(t-1)}-\dparam^*\|^2_{\Lambda} + (\dparam^{(t-1)}-\dparam^*)^\top T_1
			(\dparam^{(t-1)}-\dparam^*) \nonumber
		\end{align} 
		where $T_1$ is a negative definite matrix: 
		\begin{align}
			T_1 & = \E\left[\gamma_r^2 \G \rmat_r \Lambda \rmat_r \G - \gamma_r \G
			\rmat_r \Lambda\right] \nonumber \\ 
			& \stackrel{[1]}{\leq} \E\left[\gamma_r \G \rmat_r \G - \gamma_r \G
			\rmat_r \Lambda\right] \nonumber\\ 
			& \leq \gamma_{\min} \left(\G \E \left[ \rmat_r\right]\G - \G \E \left[
			\rmat_r\right] \Lambda \right) \nonumber\\ 
			& \stackrel{[2]}{\leq} \frac{\gamma_{\min}}{n}\G \left( \G -
			\Lambda\right)\nonumber \\
			& \stackrel{[3]}{\leq} 0, 
		\end{align}
		where steps [1]--[3] are obtained by the choice of step size, $\E
		\left[ \rmat_r\right] = \I/n$, and $0 \preceq \Lambda - \G$, respectively.
		Using this result, we bound the desired distance as 
		\begin{align}
			\E\| \dparam^{(t)}- \dparam^*\|^2 & = \E \| \Lambda^{\sfrac{-1}{2}}
			\Lambda^{\sfrac{1}{2}}(\dparam^{(t-1)}-\dparam^*)\|^2 \nonumber \\ 
			& \leq \| \Lambda^{-1}\| \E
			\|\Lambda^{\sfrac{1}{2}}(\dparam^{(t-1)}-\dparam^*)\|^2 \nonumber \\ 
			& \stackrel{[1]}{\leq} \| \Lambda^{-1}\|
			\|\Lambda^{\sfrac{1}{2}}(\dparam_0-\dparam^*)\|^2 \nonumber \\ 
			& \leq  \| \Lambda^{-1}\| \| \Lambda \| \| \dparam_0 - \dparam^*
			\|^2 \nonumber\\ 
			& \stackrel{[2]}{\leq} \left( \frac{\gamma_{\max}}{\gamma_{\min}}\right) \|
			\dparam_0 - \dparam^* \|^2. 
		\end{align}
		Step \footnotesize{[1]} is obtained by recursion on $t$. Step [2] is derived
		by the definition of the matrix $\Lambda$.
	\end{proof}
	Using last two lemmas, we prove the theorem: 
	\begin{align}
		\E\left[ \quadratic(\dparam^{(t)}) - \quadratic^*\right]
		& \stackrel{[1]}{=} \frac{1}{2}\E \left[ \|
		\dparam^{(t)} - \dparam^* \|^2_{\L_\rho} \right]
	+ \frac{1}{2}\E \left[ \| \dparam^{(t)}
	- \dparam^* \|^2_{\S_\rho} \right] \nonumber \\ 
	& \stackrel{[2]}{\leq} \frac{1}{2}  \left(1-\frac{\gamma_{\min}\rho}{n}
	\right)^t \left( \quadratic(\dparam_0) -\quadratic^*\right) 
	+ \frac{1}{2}\E \left[ \| \S_\rho^{\sfrac{1}{2}}\left(\dparam^{(t)}
	- \dparam^* \right)\|^2 \right] \nonumber\\ 
	& \stackrel{[3]}{\leq} \frac{1}{2}  \left(1-\frac{\gamma_{\min}\rho}{n}
	\right)^t \left( \quadratic(\dparam_0) -\quadratic^*\right) 
	+ \frac{1}{2} \left(\frac{\gamma_{\max}}{\gamma_{\min}}\right) \rho \| \dparam_0
	- \dparam^* \|^2 
	\end{align}
	where we used following facts: 
	\begin{itemize}
	  \item[\footnotesize{[1]:}] This step is obtained from spectral decomposition
	  of the hessian matrix (see Eq.~\eqref{eq:convergence_decomposition_rcdm}). 
	  \item[\footnotesize{[2]:}] Recursion on the result
	  of lemma~\ref{lem:rcdm-fast-convergence} yields the inequality.
	  \item[\footnotesize{[3]:}] This inequality is a direct result of
	  lemma~\ref{lem:distance_bound_rcdm}.
	\end{itemize}
\end{proof}
The suboptimality bound of the last theorem provides a fast convergence rate up to a suboptimal solution that is dominated by the initial distance $\| \dparam_0 - \dparam^*\|^2$. In the next lemma, we bound this distance using the homotopic parameter $\nu$.
\begin{lemma*}[\ref{lem:homotopic_initial_path_bound}]
		Assume that the data set is $\tau$-bounded.  Then the distance between two
		minimizers of the dual objective with different regularizers is bounded as
		\begin{align}
			\| \dparam^*_{\nu} - \dparam^*_{\mu} \|^2  \leq  \left((\mu-\nu)^2/\nu\right) | \{ j: \lambda_j <\rho\}|
			  n  \tau \rho.
		\end{align}
\end{lemma*}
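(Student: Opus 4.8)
The plan is to reduce the statement to a coordinatewise computation in the eigenbasis of the dual objective \eqref{eq:dual-eigen} and then to control a single scalar function of each eigenvalue. First I would expand $\| \dparam^*_\nu - \dparam^*_\mu \|^2 = \sum_{i=1}^n (\dparam^{*\nu}_i - \dparam^{*\mu}_i)^2$ and apply the first identity of Lemma~\ref{lemma:dual_init}: every kernel coordinate $j>d$ satisfies $\dparam^{*\nu}_j = \dparam^{*\mu}_j$, so the whole $(n-d)$-dimensional kernel block contributes nothing and the sum collapses onto the $d$ image coordinates. This is the decisive structural effect of homotopic initialization — it fixes the many low-variance kernel directions exactly, irrespective of the regularizer — and it is what prevents an $\bigo(1)$ floor. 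For the surviving coordinates I would substitute the second identity of Lemma~\ref{lemma:dual_init}, obtaining
\begin{align}
\| \dparam^*_\nu - \dparam^*_\mu \|^2 = n(\mu-\nu)^2 \sum_{i=1}^{d} \left( \frac{\sigma_i^2}{(\sigma_i^2+\mu)(\sigma_i^2+\nu)} \right)^2 \E[YZ_i]^2 .
\end{align}

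Next I would insert the regularity hypothesis. Multiplying \eqref{eq:eigen-reg} through by $\sigma_i^2 = \E[Z_i^2]$ gives $\E[YZ_i]^2 \le \tau \sigma_i^4$, so each summand is at most $n(\mu-\nu)^2 \tau\, h(\sigma_i^2)^2$ with $h(s) := s^2 / ((s+\mu)(s+\nu))$. The only real estimate needed is the elementary pair $h(s) \le 1$ and $h(s) \le s/\nu$, the first from $s+\mu \ge s$, $s+\nu \ge s$, the second from $s+\mu \ge s$, $s+\nu \ge \nu$. Combining them yields, for any threshold $\rho \ge \nu$, the uniform bound $h(s)^2 \le \rho/\nu$ for all $s \ge 0$: when $s \le \rho$ use $h^2 \le h \cdot (s/\nu) \le \rho/\nu$, and when $s > \rho$ use $h^2 \le 1 \le \rho/\nu$. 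Hence every image coordinate obeys the common budget $(\dparam^{*\nu}_i - \dparam^{*\mu}_i)^2 \le n(\mu-\nu)^2 \tau \rho / \nu$.

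Finally I would sum the $d$ image contributions and convert the prefactor $d$ into the claimed cardinality. This last step is where I expect the main obstacle: the count $|\{j : \lambda_j < \rho\}|$ ranges over all $n$ dual eigendirections, not merely the low-variance image ones, so it already contains the $\ge n-d$ kernel directions (each with $\lambda_j = \mu < \rho$) and is therefore at least $d$ in the dual regime $n \ge 2d$. The argument thus relies on $\rho \ge \nu$ together with $n \gg d$, so that the zero-contribution kernel directions pad the count enough to absorb the $d$ image coordinates that actually move; concretely one charges each sub-threshold image coordinate to itself and each high-variance one to a spare counted kernel direction, giving $\| \dparam^*_\nu - \dparam^*_\mu \|^2 \le d \cdot n(\mu-\nu)^2\tau\rho/\nu \le |\{j:\lambda_j<\rho\}|\, n \tau \rho\, (\mu-\nu)^2/\nu$. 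Making this bookkeeping precise, and checking that the uniform per-coordinate budget is legitimate across both eigenvalue regimes rather than naively summing only the sub-threshold directions, is the delicate part; the coordinatewise reduction and the rational-function estimate are routine once Lemma~\ref{lemma:dual_init} is in hand.
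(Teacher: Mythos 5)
Your core route is, at bottom, the paper's route: the appendix proof of Lemma~\ref{lem:homotopic_initial_path_bound} is the single line ``Lemma~\ref{lemma:dual_init} with some straightforward algebra conclude the result,'' and your coordinatewise reduction (the $(n-d)$ kernel coordinates cancel exactly, the $d$ image coordinates are given by the closed form), the substitution $\E[YZ_i]^2 \le \tau\sigma_i^4$ from the $\tau$-bound, and the elementary estimates on $h(s)=s^2/\left((s+\mu)(s+\nu)\right)$ are precisely the algebra being gestured at. Where you genuinely depart from the paper is the final bookkeeping, and your instinct that this is the delicate part is correct. The paper's intended accounting --- visible in the main-text twin of this lemma, which carries the count $(d-r(\zeta))$ --- charges only the low-variance image coordinates and silently drops those with $\sigma_i^2>\zeta$, even though these do move between the two minimizers (each contributes up to $n(\mu-\nu)^2\tau$, by your $h\le 1$ bound). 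Read literally, that main-text version even fails: since it claims the bound ``for any $\zeta$,'' taking $\zeta$ below $\min_j\sigma_j^2$ makes the right-hand side zero while the left-hand side is generically positive. Your uniform per-coordinate budget $h(s)^2\le\rho/\nu$ (valid for all $s$ once $\rho\ge\nu$), combined with padding the count $|\{j:\lambda_j<\rho\}|$ by the $n-d$ kernel directions (each has $\lambda_j=\mu<\rho$, since $\nu>\mu$ in the homotopic regime, e.g.\ $\nu=\tfrac14\sqrt{\mu}$), is a legitimate repair that proves the appendix statement as written --- at the price of two hypotheses absent from the lemma, $\rho\ge\nu$ and $n\ge 2d$, both harmless in the paper's setting where $n\gg d$.

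Two caveats on what your repair buys. First, it yields the literal appendix inequality only because that version's count ranges over all $n$ dual eigendirections and is therefore at least $n-d$; the resulting bound is far weaker than what the paper evidently intends, namely a count over low-variance image directions only (the appendix count should be read as $|\{j\le d:\lambda_j<\rho\}|$, matching $d-r$ in the main text). Second, your argument does not --- and cannot, without adding a separate term of order $r(\rho)\,n\tau(\mu-\nu)^2$ for the high-variance block --- recover that intended sharper statement. So: correct proof, same key lemma and same core algebra as the paper, but the counting step is your own patch, and it is exactly what is needed to make the statement, as printed, actually true.
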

\begin{proof} 
Lemma~\ref{lemma:dual_init} with some straight forward algebra conclude the result. 
\end{proof}
\subsection{Generalized Linear Model}
\paragraph{The gradient of GLM} Recall the objective of GLM: 
\begin{align}
 \risk(\glmp) = \E \left[ \varphi(\x^\top \glmp) - y
 \left( \x^\top \glmp \right)\right].
\end{align}

Suppose that the $\x$ are from a multivariate normal distribution, i.e. $\x \sim N(\vec{0},\Sigma)$. Then one can use integration by parts (Stein's lemma~\cite{erdogdu2016scaled}) to write the gradient of the above objective as 
\begin{align} \label{eq:gd_glm_stein}
 \risk'(\glmp) & = - \E_{\x,y} \left[ y \x\right] 
   			+ \E_{\x} \left[\x \varphi'(\x^\top \glmp)\right] \\ 
		 & \stackrel{\text{[Stein's lemma]}}{=} - \E_{\x,y} \left[ y \x\right] +
   			\E_{\x} \left[\varphi^{(2)}(\x^\top \glmp) \right]  \Sigma \glmp. 
\end{align}
where $\varphi^{(2)}$ denote the second derivate of $\varphi(.)$.  The Hessian matrix of $\risk$ highly depends on the covariance matrix of the distribution of input: 
\begin{align} \label{eq:hessian}
  \risk''(\glmp) & =  \E \left[\varphi^{(2)}(\x^\top \glmp)\right] \Sigma + 
   		 	\E \left[ \varphi^{(3)} (\x_i^\top \glmp) \x \right] \glmp^\top\Sigma \\ 
		& \stackrel{\text{[Stien's lemma]}}{=} \E \left[\varphi^{(2)}(\x^\top \glmp)\right] \Sigma + 
   		 	\E \left[ \varphi^{(4)} (\x^\top \glmp) \right] \Sigma \glmp \glmp^\top\Sigma.\end{align}  
Using the above the Hessian and the gradient, we provide a useful expression of the gradient that facilitates the our convergence analysis. 
\begin{lemma} \label{lem:gd_glm_alt}
	There exists constants $\xi_1$, and $\xi_2$, which depend on $\glmp$, and $\glmp^*$ such that 
	\begin{align} 
		\risk'(\glmp) = \xi_1 \Sigma ( \glmp - \glmp^*) + \xi_2 \E_{y\x} \left[ y \x \right] 
	\end{align}
	holds. 
\end{lemma}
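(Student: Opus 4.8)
The plan is to obtain the two constants by a direct algebraic rearrangement of the Stein's-lemma form of the gradient in Eq.~\eqref{eq:gd_glm_stein}, using only the first-order optimality condition at $\glmp^*$. To lighten notation, write the scalar $a(\glmp) := \E_{\x}\left[ \varphi^{(2)}(\x^\top \glmp)\right]$, so that Eq.~\eqref{eq:gd_glm_stein} becomes $\risk'(\glmp) = -\E_{y\x}\left[ y\x \right] + a(\glmp)\,\Sigma\glmp$. Evaluating this at the minimizer and using $\risk'(\glmp^*)=0$ yields the stationarity identity $\E_{y\x}\left[ y\x \right] = a(\glmp^*)\,\Sigma\glmp^*$, which is precisely the relation that ties the data moment $\E_{y\x}[y\x]$ to the direction $\Sigma\glmp^*$.

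First I would use this identity to eliminate $\Sigma\glmp^*$. Adding and subtracting $a(\glmp)\,\Sigma\glmp^*$ in the gradient and then substituting $\Sigma\glmp^* = \E_{y\x}\left[ y\x \right]/a(\glmp^*)$ (legitimate once $a(\glmp^*)\neq 0$) gives
\begin{align}
\risk'(\glmp) = a(\glmp)\,\Sigma(\glmp-\glmp^*) + \left( \frac{a(\glmp)}{a(\glmp^*)} - 1 \right)\E_{y\x}\left[ y\x \right],
\end{align}
so the claimed decomposition holds with $\xi_1 = a(\glmp)$ and $\xi_2 = a(\glmp)/a(\glmp^*) - 1$. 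Both are scalars determined by $\glmp$ and $\glmp^*$ alone, as asserted. A one-line back-substitution of the stationarity identity into the right-hand side verifies that it collapses to $a(\glmp)\,\Sigma\glmp - a(\glmp^*)\,\Sigma\glmp^*$, which is exactly $\risk'(\glmp)$.

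The only point needing care is the nonvanishing of $a(\glmp^*)=\E_{\x}[\varphi^{(2)}(\x^\top\glmp^*)]$, which is exactly $c_{\glmp^*}^{-1}$ from Lemma~\ref{lemma:glm_zero_init}; since that constant is already assumed well-defined there, the division is justified. Convexity of $\varphi$ gives $\varphi^{(2)}\ge 0$ and hence $a(\glmp^*)\ge 0$, with strict positivity whenever $\varphi$ is not affine along the support of $\x^\top\glmp^*$. I do not expect a genuine obstacle: the analytic content, namely the application of Stein's lemma that produced Eq.~\eqref{eq:gd_glm_stein}, is already established, and what remains is purely the rearrangement above.
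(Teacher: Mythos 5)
Your proof is correct, but it takes a genuinely different and in fact simpler route than the paper. The paper proves the lemma via the mean-value theorem applied to the gradient: it writes $\risk'(\glmp)-\risk'(\glmp^*)=\risk''(\bar\glmp)(\glmp-\glmp^*)$ for an intermediate point $\bar\glmp=(1-\theta)\glmp+\theta\glmp^*$, expands the Hessian using Stein's lemma (which brings in $\varphi^{(4)}$), collects constants $c_1,c_2,c_3$, substitutes the gradient identity to eliminate $\Sigma\glmp$, and finally solves for $\risk'(\glmp)$ to get $\xi_1=c_2/(1-c_3)$ and $\xi_2=c_3/(1-c_3)$. You instead never differentiate twice: you take the exact Stein form $\risk'(\glmp)=-\E_{\x,y}[y\x]+a(\glmp)\Sigma\glmp$ with $a(\glmp)=\E_\x[\varphi^{(2)}(\x^\top\glmp)]$, use stationarity to get $\E_{\x,y}[y\x]=a(\glmp^*)\Sigma\glmp^*$, and rearrange, yielding the fully explicit constants $\xi_1=a(\glmp)$ and $\xi_2=a(\glmp)/a(\glmp^*)-1$. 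Your version buys several things: (i) it avoids the vector-valued mean-value theorem, which as the paper uses it (a single scalar $\theta$ for all coordinates) is not literally valid and would need an integral-form repair; (ii) it requires only two derivatives of $\varphi$, not four; (iii) it avoids the implicit division by $1-c_3$, whose nonvanishing the paper never checks, replacing it with the milder and already-assumed condition $a(\glmp^*)=c_{\glmp^*}^{-1}\neq 0$; and (iv) the explicit constants are more useful downstream, since $0<\xi_1\le\phi$ follows directly from the assumption $|\varphi^{(2)}|\le\phi$ and the step sizes $\gamma_t=(L\xi_1)^{-1}$, $\eta_t=-\gamma_t\xi_2$ of Lemma~\ref{lem:gd_glmp_iterates} become computable in terms of $\glmp^{(t)}$ rather than an unknown intermediate point $\bar\glmp$. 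The only thing the paper's heavier route offers in exchange is that it does not need $\x$ to be exactly Gaussian at the point where the gradient identity is invoked beyond what Eq.~\eqref{eq:gd_glm_stein} already requires --- but since both arguments rest on that same equation, this is no real loss, and your argument is the cleaner proof of the stated lemma.
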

\begin{proof} 
According to mean-value theorem, there is a $\bar{\glmp} = (1-\theta) \glmp+ \theta \glmp^*,\theta \in \left( 0, 1\right)$ such that
\begin{align} 
		 \risk'(\glmp) - \risk'(\glmp^*) 
		& ={\risk''(\bar{\glmp})} (\glmp- \glmp^*) \\ 
	& \stackrel{\eqref{eq:hessian}}{=} \E \left[ \varphi^{(2)}(\x^\top \bar{\glmp})\right]  \Sigma (\glmp - \glmp^*) + \E \left[ \varphi^{(4)} (\x^\top \bar{\glmp}) \right] \Sigma \bar{\glmp} \left( \bar{\glmp}^\top\Sigma (\glmp - \glmp^*) \right) \\ 
	& =  \E \left[ \varphi^{(2)}(\x^\top \bar{\glmp})\right]  \Sigma (\glmp - \glmp^*) + c_1 \Sigma \bar{\glmp} \\
	& = c_2  \Sigma (\glmp - \glmp^*) + c_1 \Sigma \glmp\\ 
	& \stackrel{\eqref{eq:gd_glm_stein}}{=} c_2 \Sigma (\glmp - \glmp^*) + c_3 \left( \risk'(\glmp) + \E_{\x,y} \left[ y \x\right]  \right), \label{eq:gd_glm_alt}
	\end{align}
	where constants are 
	\begin{align} 
		c_1 & = \E \left[ \varphi^{(4)} (\x^\top \bar{\glmp}) \right] \left( \bar{\glmp}^\top\Sigma (\glmp - \glmp^*) \right) \\
		c_2 &=  \E \left[ \varphi^{(2)}(\x^\top \bar{\glmp})\right] - \theta c_1 \\
		c_3 & = c_1\left(\E \left[ \varphi^{(2)}(\x^\top \bar{\glmp})\right]\right)^{-1}.
	\end{align}
	The Eq.~\ref{eq:gd_glm_alt} with the optimality condition $\risk'(\glmp^*) = 0 $ conclude the proof for $\xi_1 = c_2/(1-c_3)$, and $ \xi_2 = c_3/(1-c_3)$. 
\end{proof} 
\paragraph{A biased gradient step}
 We suggest our modified gradient step as 
\begin{align} 
 \glmp^{(t)} = \glmp^{(t-1)} - \gamma_t \risk'(\glmp^{(t-1)}) - \eta_t \E\left[ y \x\right]
\end{align}
where constant $\gamma_t$ and $\eta_t$ are two step sizes. In the next lemma, we prove that the convergence of such a modified gradient descent depends on the covariance matrix of the distribution. 
\begin{lemma} \label{lem:gd_glmp_iterates}
	Let iterate $\glmp^{(t)}$ is obtained by $t$ GD steps on GLM. There is a schedule for $\gamma_t$ and $\eta_t$ such that modified GD steps can be written as  
	\begin{align} 
		\glmp^{(t+1)} - \glmp^* & =  (\I - \Sigma/L) \left( \glmp^{(t)} - \glmp^* \right),
	\end{align}
	where $L = \| \Sigma\|$.
\end{lemma}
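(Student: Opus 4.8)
The plan is to reduce the modified update to the claimed one-step recursion by \emph{substituting the alternative gradient expression of Lemma~\ref{lem:gd_glm_alt}} and then choosing the two step sizes so that the $\E[y\x]$ contribution is annihilated exactly while the surviving term reproduces the operator $\I - \Sigma/L$. Concretely, Lemma~\ref{lem:gd_glm_alt} gives $\risk'(\glmp^{(t)}) = \xi_1 \Sigma(\glmp^{(t)} - \glmp^*) + \xi_2 \E[y\x]$, where the scalars $\xi_1,\xi_2$ depend on the current iterate $\glmp^{(t)}$ and on $\glmp^*$. Plugging this into the definition of the modified step $\glmp^{(t+1)} = \glmp^{(t)} - \gamma_{t+1}\risk'(\glmp^{(t)}) - \eta_{t+1}\E[y\x]$ and subtracting $\glmp^*$ from both sides yields
\begin{align}
\glmp^{(t+1)} - \glmp^* = \left(\I - \gamma_{t+1}\xi_1 \Sigma\right)\left(\glmp^{(t)} - \glmp^*\right) - \left(\gamma_{t+1}\xi_2 + \eta_{t+1}\right)\E[y\x].
\end{align}

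From here I would read off the step-size schedule by matching this expression against the target. First, to turn the leading operator into $\I - \Sigma/L$, set $\gamma_{t+1} = 1/(L\,\xi_1)$, which forces $\gamma_{t+1}\xi_1 \Sigma = \Sigma/L$. Second, to eliminate the bias direction entirely, choose $\eta_{t+1} = -\gamma_{t+1}\xi_2 = -\xi_2/(L\,\xi_1)$, so that the coefficient $\gamma_{t+1}\xi_2 + \eta_{t+1}$ vanishes. With these two choices the recursion collapses to $\glmp^{(t+1)} - \glmp^* = (\I - \Sigma/L)(\glmp^{(t)} - \glmp^*)$, exactly as claimed. This makes transparent the purpose of the extra bias term $\E[y\x]$ in the modified step: it supplies precisely the degree of freedom needed to cancel the $\xi_2\E[y\x]$ component that an ordinary gradient step cannot remove.

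The only real obstacle is to verify that this schedule is \emph{well defined}, i.e.\ that $\xi_1 \neq 0$ at every iterate so the division is legitimate (and, correspondingly, that the denominator $1-c_3$ arising in $\xi_1 = c_2/(1-c_3)$ does not vanish). I would trace through the constants of Lemma~\ref{lem:gd_glm_alt} and invoke the standing assumptions --- $\varphi$ convex and smooth with $|\varphi^{(2)}|\le\phi$, and Gaussian inputs --- to argue that $\E[\varphi^{(2)}(\x^\top\bar\glmp)] > 0$ controls the magnitude of $c_2$ and keeps $1-c_3$ bounded away from zero along the trajectory, so that $\gamma_{t+1}$ and $\eta_{t+1}$ stay finite. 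Note that because $\xi_1,\xi_2$ are evaluated at $\glmp^{(t)}$, the schedule is necessarily iterate-dependent rather than a fixed sequence, which is consistent with the existential phrasing of the statement; once well-definedness is secured, the recursion itself is immediate from the coefficient matching above.
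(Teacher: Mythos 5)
Your proposal is correct and follows essentially the same route as the paper's proof: substitute the gradient expression of Lemma~\ref{lem:gd_glm_alt} into the modified step and choose $\gamma_t = (L\xi_1)^{-1}$, $\eta_t = -\gamma_t \xi_2$ so that the $\E[y\x]$ term cancels and the leading operator becomes $\I - \Sigma/L$. Your additional remark on well-definedness (that $\xi_1 \neq 0$, i.e.\ $1-c_3$ bounded away from zero) is a point the paper's own proof passes over silently, so flagging it is a strictly more careful reading, not a different argument.
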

\begin{proof} 
We prove the above bound by induction on $t$.
	\begin{align} 
		\glmp^{(t+1)} - \glmp^* & = \glmp^{(t)} - \glmp^* -\gamma_t \risk'(\glmp^{(t)}) - \eta_t \E \left[ y \x \right] \\ 
		& \stackrel{\text{Lemma}~\ref{lem:gd_glm_alt}}{=}  \glmp^{(t)} - \glmp^* - \gamma_t \xi_1 \Sigma ( \glmp^{(t)} - \glmp^*) - \gamma_t \xi_2 \E \left[ y \x \right] - \eta_t \E \left[ y \x\right]
	\end{align}
The above equation with step size $\gamma_t = (L\xi_1)^{-1}$ and $\eta = -\gamma_t \xi_2$ completes the proof. 
	
holds. Replacing the above equation in the convergence bound with the step size $\gamma_t = ( L \E_{\x} \left[ \varphi^{(2)}(\x^\top \bar{\glmp}^{(t)})\right] )^{-1}$ concludes the proof. \end{proof}
Although the last lemma does not specify step sizes, one can find this schedule by line-search.  The result allows us to track iterates of GD in a closed form, which provides an accelerated convergence up to a suboptimal solution for GLM -- similar to ridge regression.  The next lemma proves this. 
 \begin{lemma*} [\ref{lemma:glm_zero_init}]
 Suppose that inputs are drawn i.i.d from a multivariate normal
	distribution with mean $0$ and covariance matrix $\Sigma$, i.e. $\x_i \sim
	N(\vec{0},\Sigma)$ where $\| \Sigma \| = L$.   Assume that $|\varphi^{(2)}(a)|\leq \phi$. If the dataset is $\tau$-bounded, then there is a step size schedule for modified gradient descent such that iterates of GD starting from zero obtains the following suboptimality bound for all $0<\zeta<L$:
	\begin{align}
		& \risk(\glmp^{(t)	}) - \risk^* \leq c_{\glmp^*}\tau \phi \left(  (1-\zeta/L)^{2t} \left(
		1 - r(\zeta)\right) + r(\zeta) \zeta \right), \\ 
		& r(\zeta) := | \{ j: \E[Z_j^2] > \zeta\}|,  \; c_{\glmp^*} := ( \E_{\x} \left[ \varphi^{(2)}(\x^\top \glmp^*)\right])^{-1}.
	\end{align}
\end{lemma*}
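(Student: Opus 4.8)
The plan is to combine the closed-form iterate tracking from Lemma~\ref{lem:gd_glmp_iterates} with the $\tau$-boundedness assumption, mirroring the structure of the primal ridge regression analysis in Lemma~\ref{lemma:primal-sub}. First I would diagonalize the covariance matrix $\Sigma = \vm \mSigma^2 \vm^\top$ and work in the eigenfeature coordinate system $\glmp \leftarrow \vm^\top \glmp$, so that $\I - \Sigma/L$ becomes diagonal with entries $1 - \sigma_j^2/L$. Applying Lemma~\ref{lem:gd_glmp_iterates} recursively from the zero initialization gives, coordinate-wise,
\begin{align}
(\glmp^{(t)}_j - \glmp^*_j)^2 = (1 - \sigma_j^2/L)^{2t}\, (\glmp^*_j)^2,
\end{align}
since $\glmp^{(0)} = 0$. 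The factor $(1-\sigma_j^2/L)^{2t}$ handles the exponential decay in the coarse directions, while the multiplier $(\glmp^*_j)^2$ is exactly the quantity that $\tau$-boundedness will let us control.

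Next I would relate the GLM suboptimality $\risk(\glmp^{(t)}) - \risk^*$ to the weighted parameter distance. Using a second-order Taylor expansion of $\risk$ around $\glmp^*$ together with the Hessian expression in Eq.~\eqref{eq:hessian}, the leading term is $\tfrac12 (\glmp^{(t)} - \glmp^*)^\top \risk''(\bar\glmp)(\glmp^{(t)} - \glmp^*)$, whose dominant piece is $\tfrac12 \E[\varphi^{(2)}(\x^\top\bar\glmp)]\,(\glmp^{(t)} - \glmp^*)^\top \Sigma (\glmp^{(t)} - \glmp^*)$. Bounding $\E[\varphi^{(2)}] \le \phi$ and writing the quadratic form in eigencoordinates yields a sum $\sum_j \sigma_j^2 (\glmp^{(t)}_j - \glmp^*_j)^2$, which after substituting the decay above and recognizing $\sigma_j^2 = \E[Z_j^2]$ becomes $\sum_j \E[Z_j^2](1-\sigma_j^2/L)^{2t}(\glmp^*_j)^2$.

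The crucial step is bounding $\E[Z_j^2](\glmp^*_j)^2$ by $\tau \E[Z_j^2]^2 = \tau \sigma_j^4$ (up to the prefactor $c_{\glmp^*}$). To do this I would use the optimality condition $\risk'(\glmp^*) = 0$, which by Eq.~\eqref{eq:gd_glm_stein} reads $\E[\varphi^{(2)}(\x^\top\glmp^*)]\,\Sigma\glmp^* = \E[y\x]$; in eigencoordinates this gives $\glmp^*_j = c_{\glmp^*}\,\E[Y Z_j]/\sigma_j^2$, so that $\sigma_j^2 (\glmp^*_j)^2 = c_{\glmp^*}^2\,\E[YZ_j]^2/\sigma_j^2$. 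The $\tau$-boundedness definition \eqref{eq:eigen-reg}, namely $\E[YZ_j]^2/\E[Z_j^2] \le \tau\,\E[Z_j^2]$, then upper bounds this by $c_{\glmp^*}^2 \tau \sigma_j^2$. Splitting the coordinate sum by the threshold $\zeta$ exactly as in Lemma~\ref{lemma:primal-sub} --- using the full exponential decay on the $r(\zeta)$ coarse directions and paying the constant $\tau\zeta$ cost on the remaining ones --- assembles the claimed bound.

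I expect the main obstacle to be the careful treatment of the Hessian's second (rank-one) term $\E[\varphi^{(4)}]\,\Sigma\bar\glmp\bar\glmp^\top\Sigma$ in the Taylor expansion, since the stated bound only involves $\phi$ bounding $\varphi^{(2)}$. Controlling or absorbing this extra curvature term --- presumably by arguing it is lower-order, of the correct sign, or folded into the constant $c_{\glmp^*}$ via the mean-value point $\bar\glmp$ --- is the delicate part; the rest is the routine eigencoordinate bookkeeping already established for the primal case.
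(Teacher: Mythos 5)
Your proposal matches the paper's proof essentially step for step: a mean-value/Taylor expansion of the suboptimality bounded via $\E[\varphi^{(2)}]\le\phi$, the closed-form iterates of Lemma~\ref{lem:gd_glmp_iterates} applied from the zero initialization, the Stein-lemma optimality condition $\glmp^* = c_{\glmp^*}\Sigma^{-1}\E[y\x]$ giving $\glmp^*_j = c_{\glmp^*}\E[YZ_j]/\sigma_j^2$ in eigencoordinates, and the $\zeta$-threshold split exactly as in Lemma~\ref{lemma:primal-sub}. The one obstacle you flag --- the rank-one $\varphi^{(4)}$ term in the Hessian --- is in fact silently dropped in the paper's own proof (which writes $\|\cdot\|_{\risk''(\bar{\glmp}^{(t)})} = \E[\varphi^{(2)}(\x^\top\bar{\glmp}^{(t)})]\,\|\cdot\|_{\Sigma}$ as an equality), so your treatment is, if anything, more careful on that point.
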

\begin{proof} 
	Using the mean-value theorem, there is $\bar{\glmp}^{(t)} = \theta \glmp^{(t)} + (1-\theta)\glmp^*, \theta \in \left[0, 1\right]$ such that  
	\begin{align}
		\risk(\glmp^{(t)}) - \risk(\glmp^*) & = \| \glmp^{(t)} - \glmp^* \|_{\risk''(\bar{\glmp}^{(t)})} \\ 
		& \stackrel{\eqref{eq:hessian}}{=} \E \left[ \varphi^{(2)}(\x^\top \bar{\glmp}^{(t)}) \right] \| \glmp^{(t)} - \glmp^* \|_{\Sigma} \\  
		& \leq \phi \| \glmp^{(t)} - \glmp^* \|_{\Sigma}  \\ 
		& \stackrel{\text{Lemma~\ref{lem:gd_glmp_iterates}}}{\leq} \phi \| (\I - \Sigma/L)^t \left( \glmp^{(0)} - \glmp^* \right)\|_{\Sigma}  \label{eq:convergence_glm}
	\end{align}
	It remains to relate the $\glmp^*$ to the boundedness assumption. To this end, we use optimality condition of $\glmp^*$ as 
	\begin{align}
		\risk'(\glmp^*) \stackrel{!}{=} 0 \leftrightharpoons
		 \glmp^* \stackrel{\eqref{eq:gd_glm_stein}}{=} c_{\glmp^*} \Sigma^{-1} \E[\x
		y], \; c_{\glmp^*} := \left( \E_{\x} \left[ \varphi^{(2)}(\x^\top \glmp^*)\right]\right)^{-1}
	\end{align}
The above result, which is also provided in \cite{erdogdu2016scaled}, shows that the minimizer of the GLM can be obtained by scaling the minimizer of ridge regression. The convergence bound of Eq.~\eqref{eq:convergence_glm} is also scaled convergence of the gradient descent on ridge regression. Plugging the minimizer into the Eq.~\eqref{eq:convergence_glm} with the boundedness assumption completes the proof. 
\end{proof}

\subsection{Experiments} 
\begin{figure}[h]
\begin{footnotesize}
\begin{sc}
\begin{tabular}{@{}S@{\hspace{0.075cm}}T@{\hspace{0.075cm}}T@{\hspace{0.075cm}}T}
\\ 
 & Dual Convergence & Primal Convergence & Test Error \\ 
a9a & \includegraphics[width=0.9\linewidth]{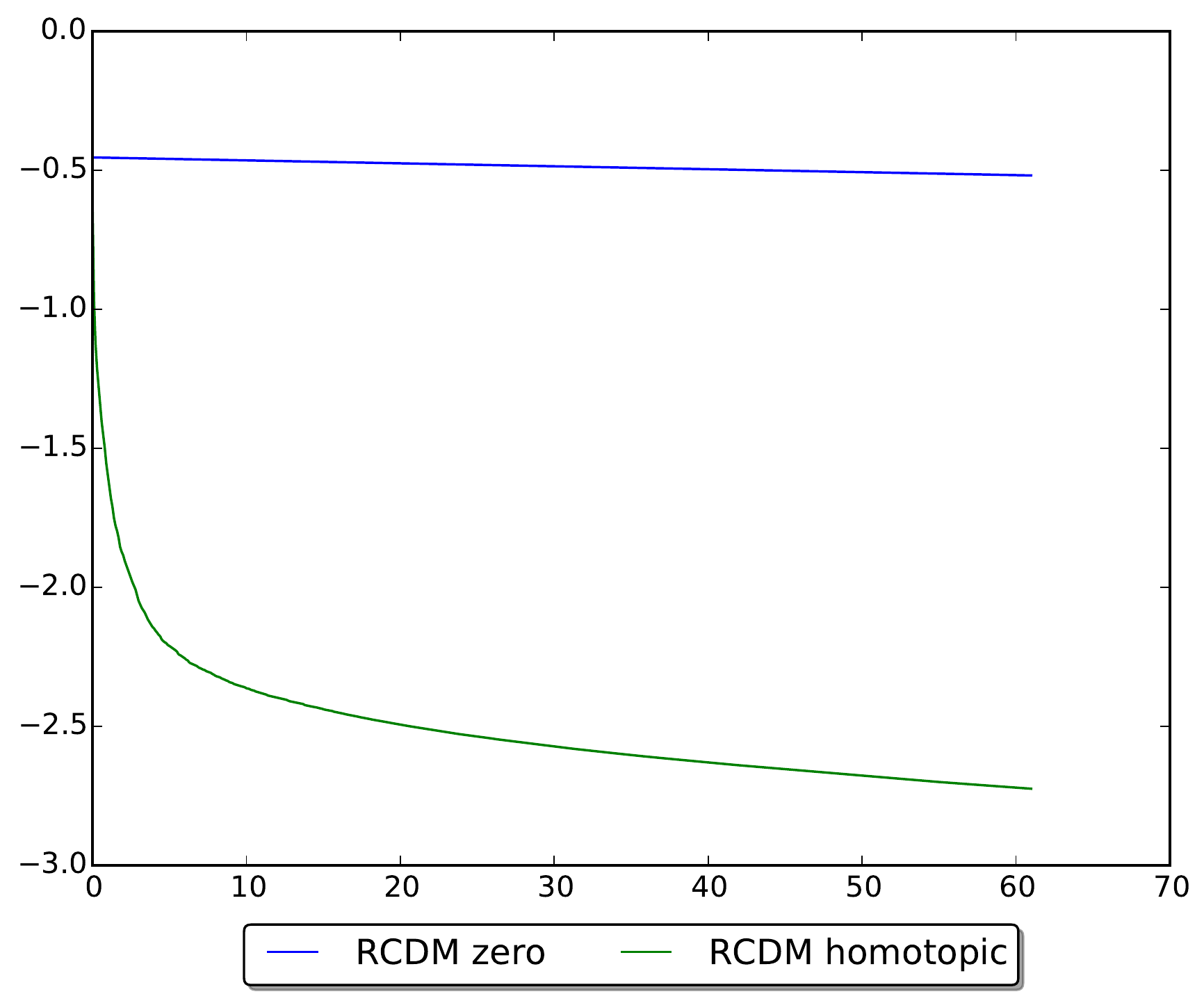}&
  \includegraphics[width=0.9\linewidth]{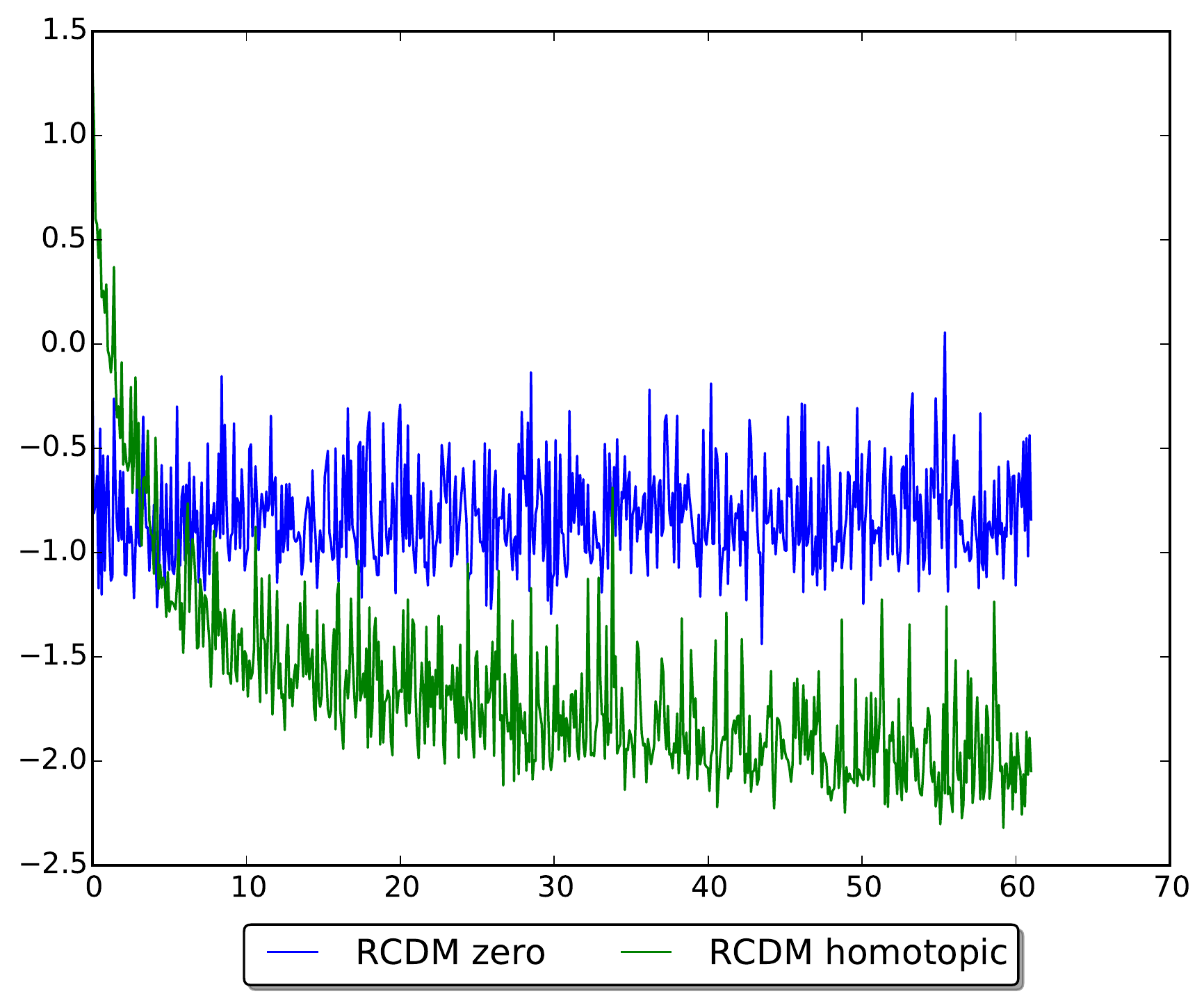} &
  \includegraphics[width=0.9\linewidth]{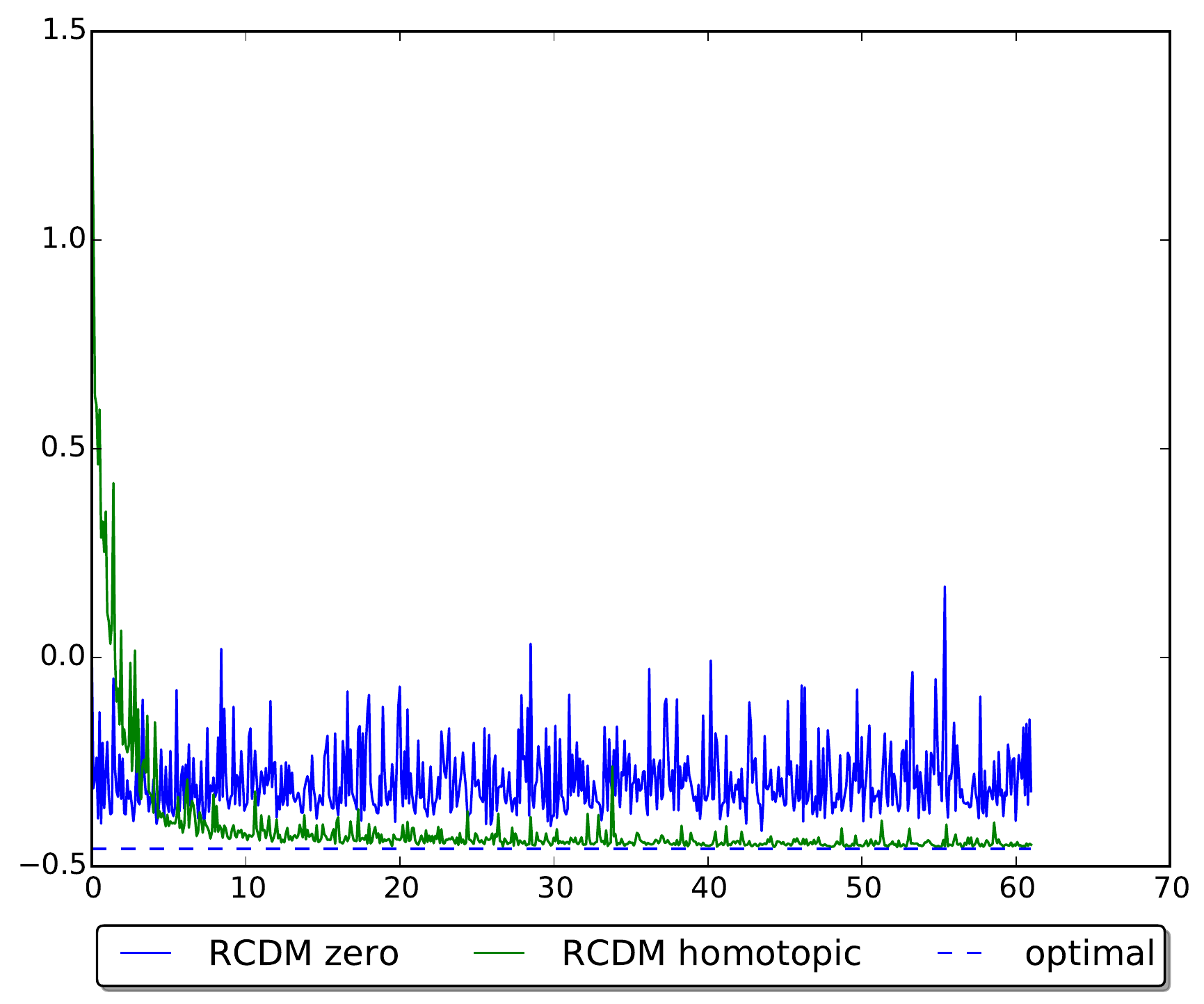}
  \\
  covtype &  \includegraphics[width=0.9\linewidth]{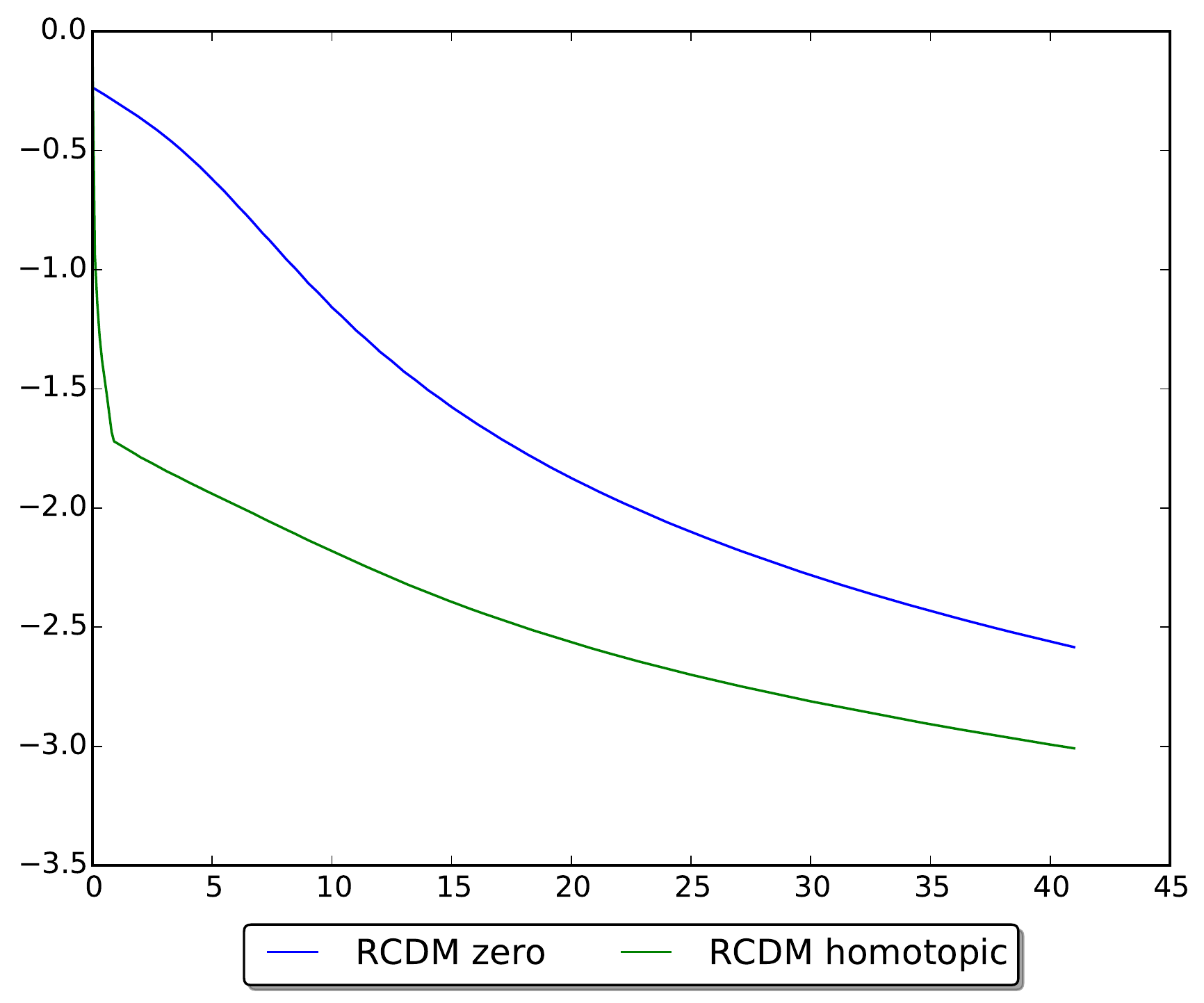}&
  \includegraphics[width=0.9\linewidth]{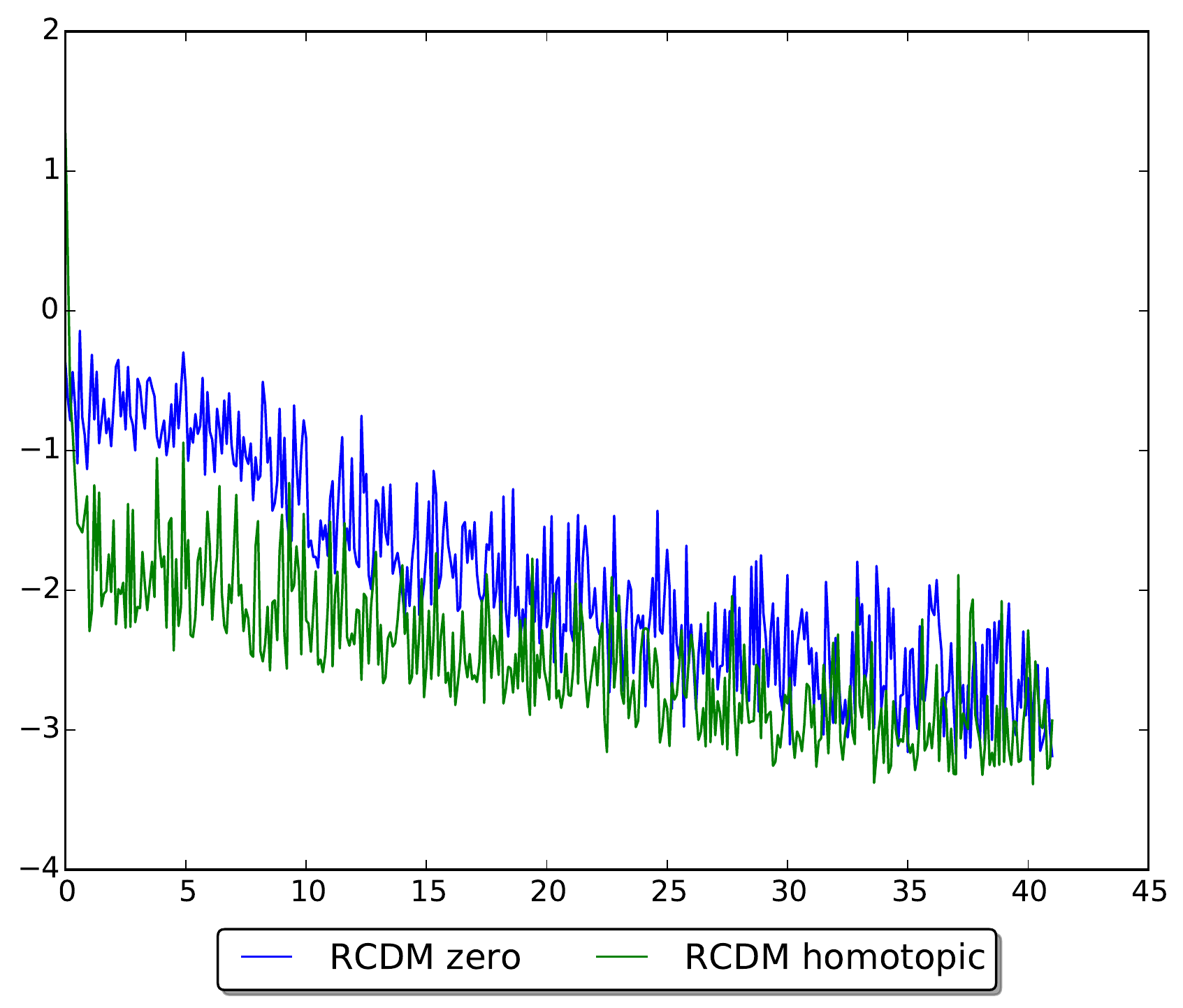} &
  \includegraphics[width=0.9\linewidth]{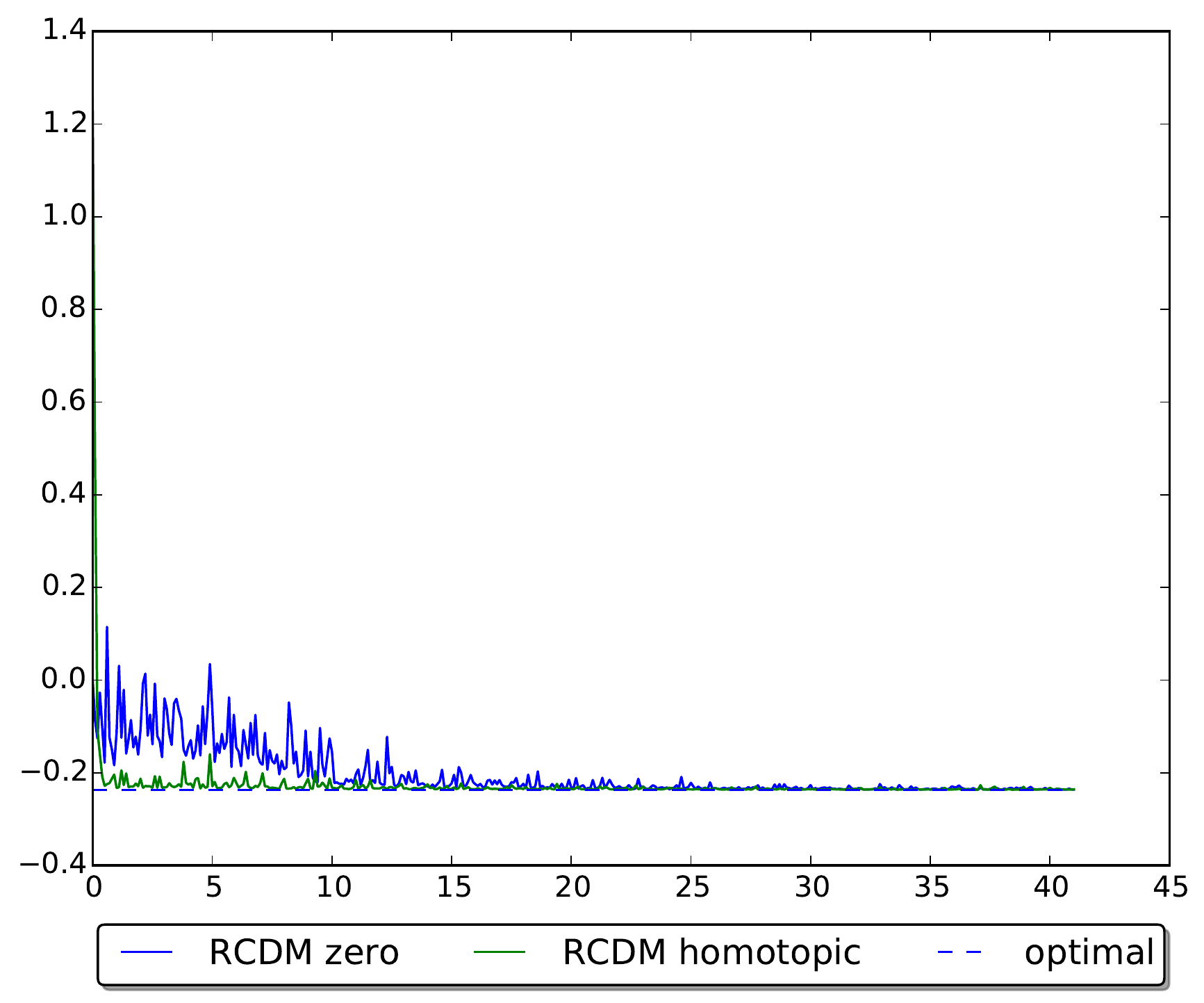} 
   \\ 
   gisette &  \includegraphics[width=0.9\linewidth]{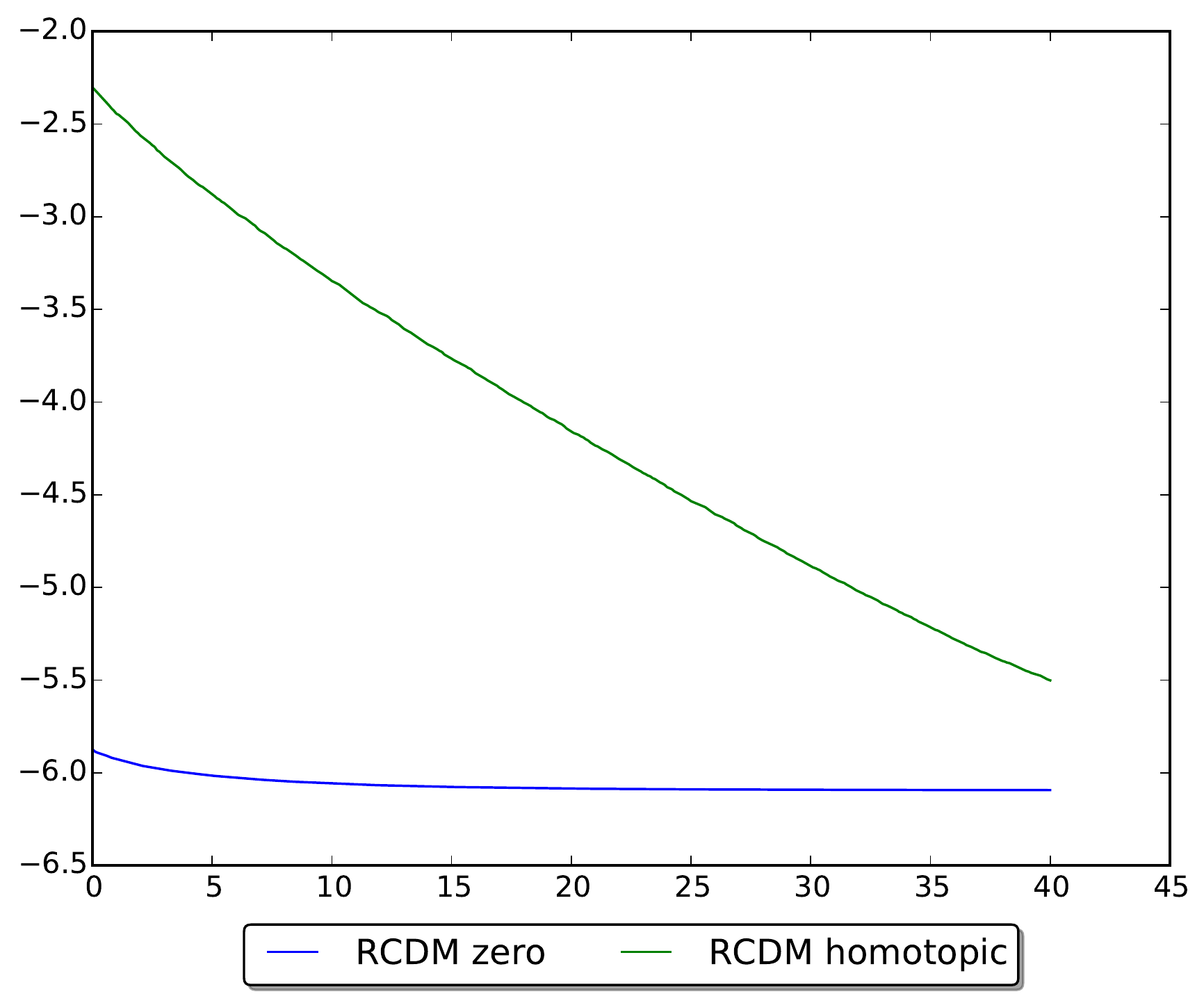}&
  \includegraphics[width=0.9\linewidth]{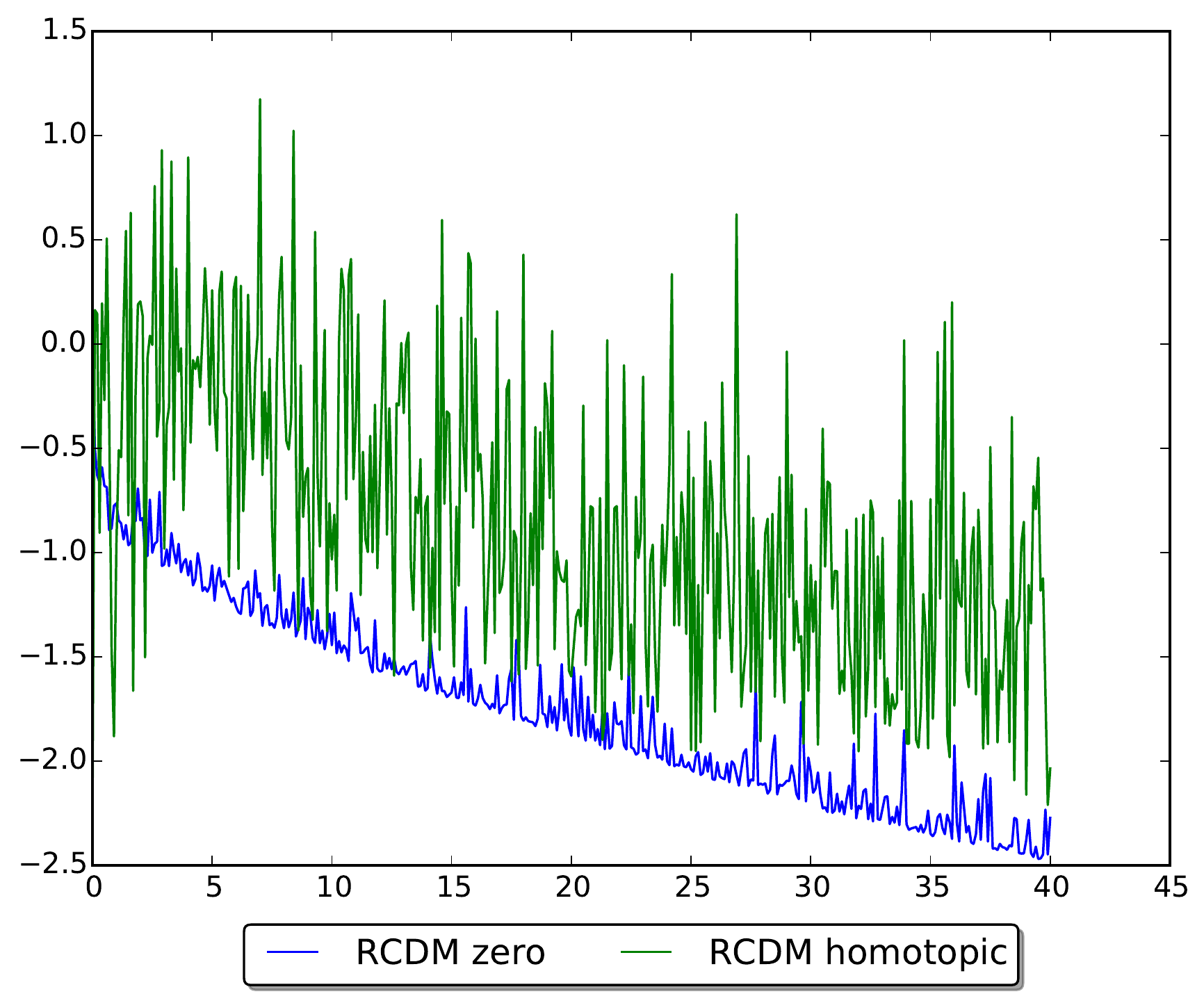} &
  \includegraphics[width=0.9\linewidth]{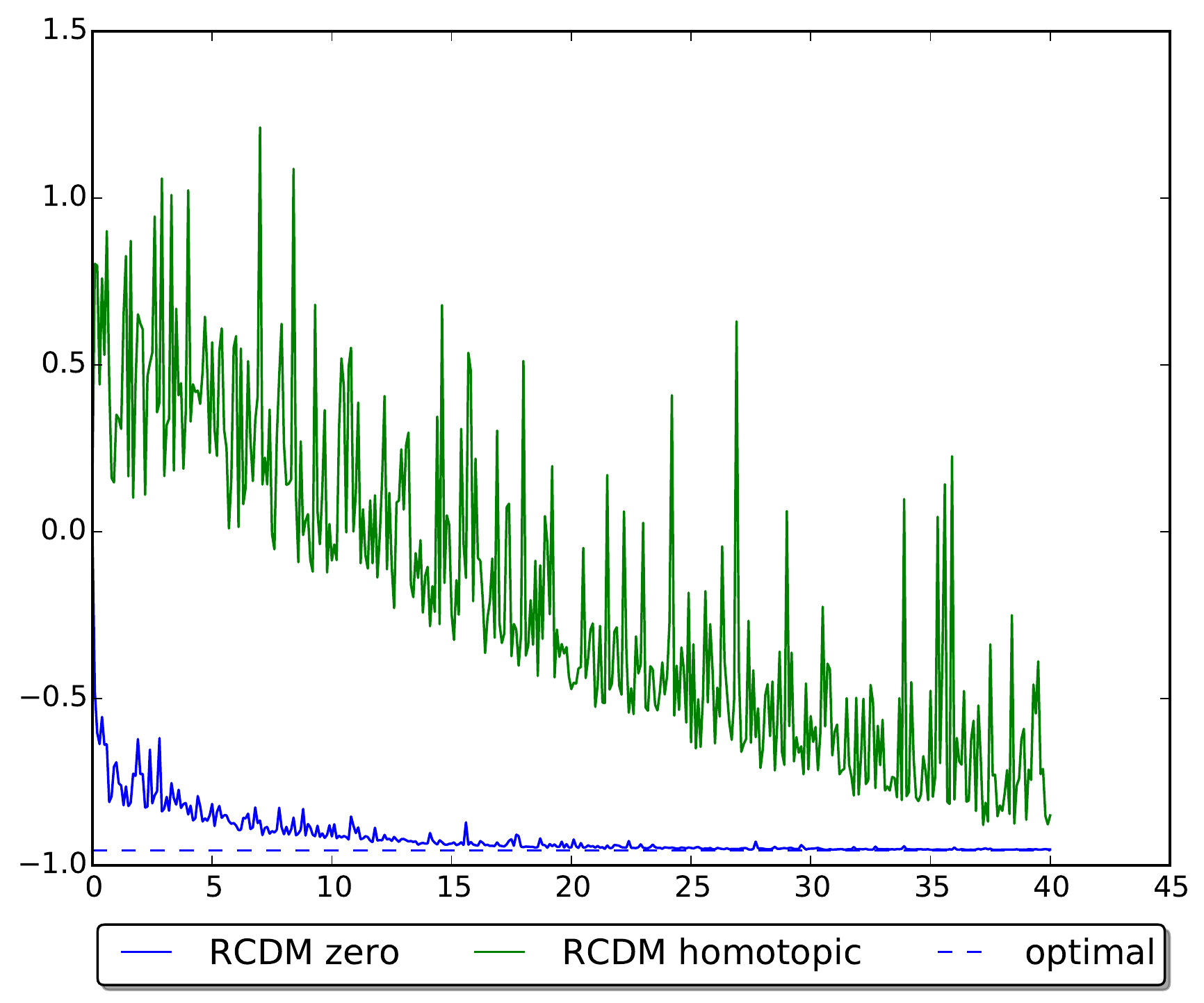} \\ 
  ijcnn1 &  \includegraphics[width=0.9\linewidth]{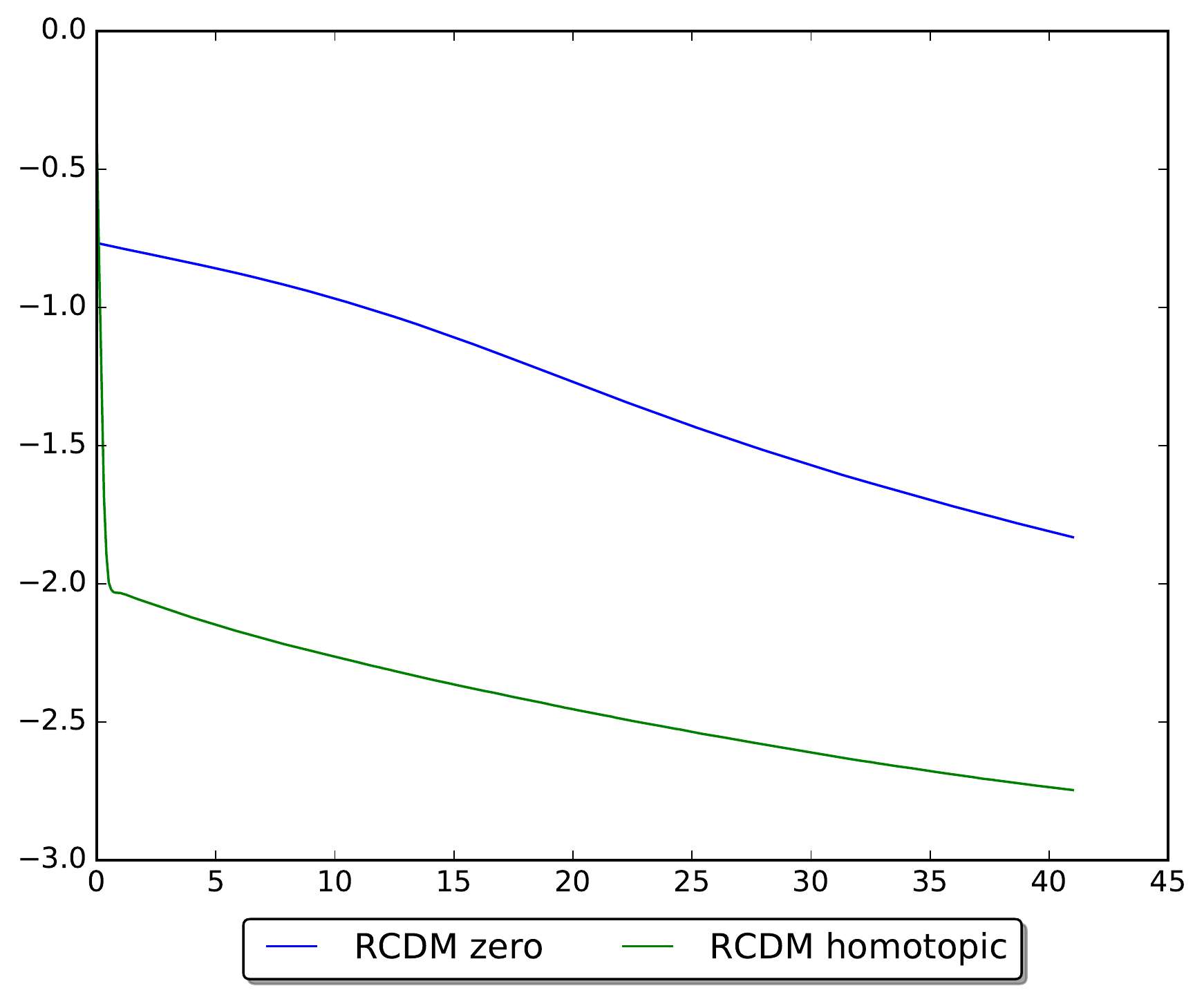}&
  \includegraphics[width=0.9\linewidth]{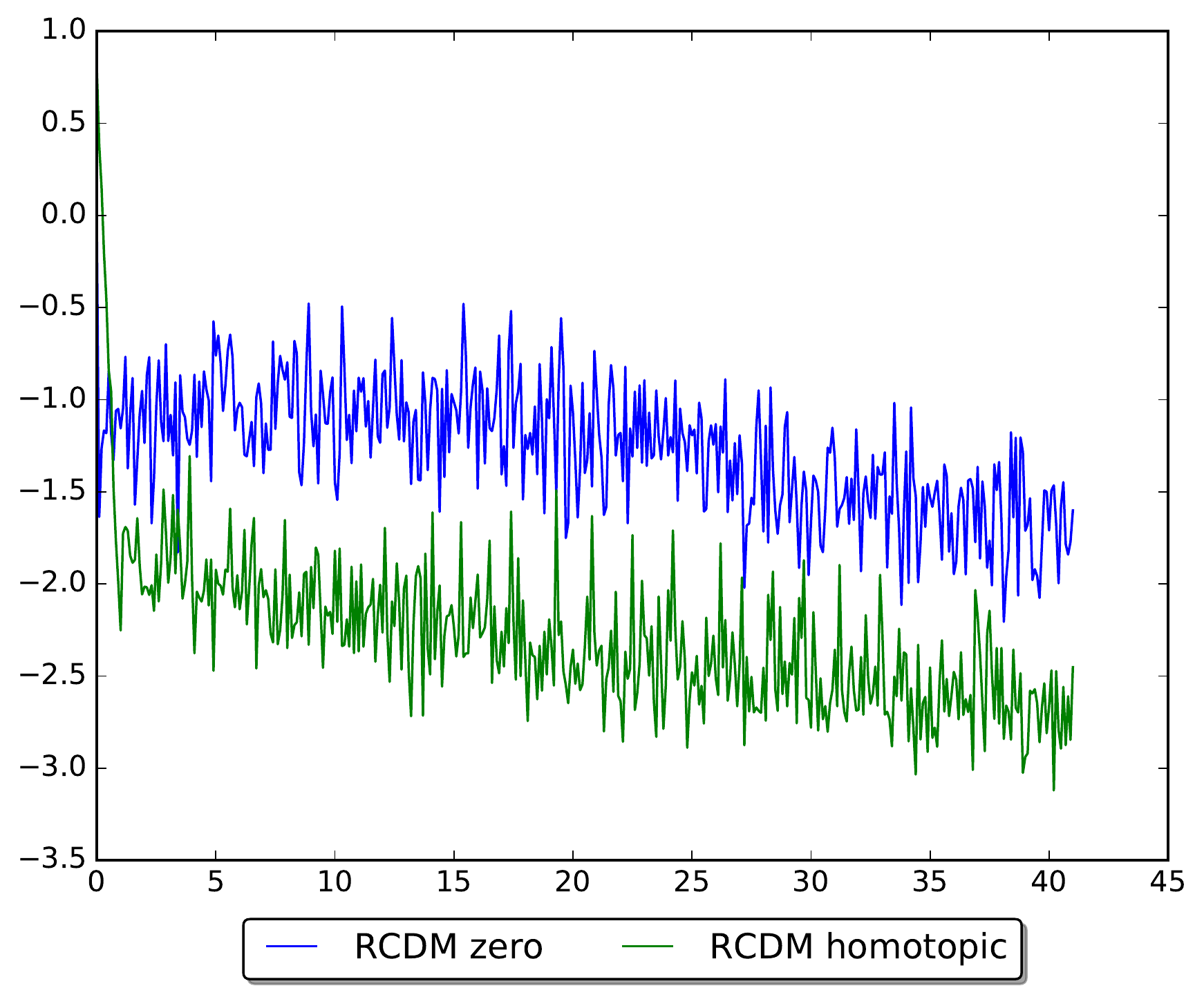} &
  \includegraphics[width=0.9\linewidth]{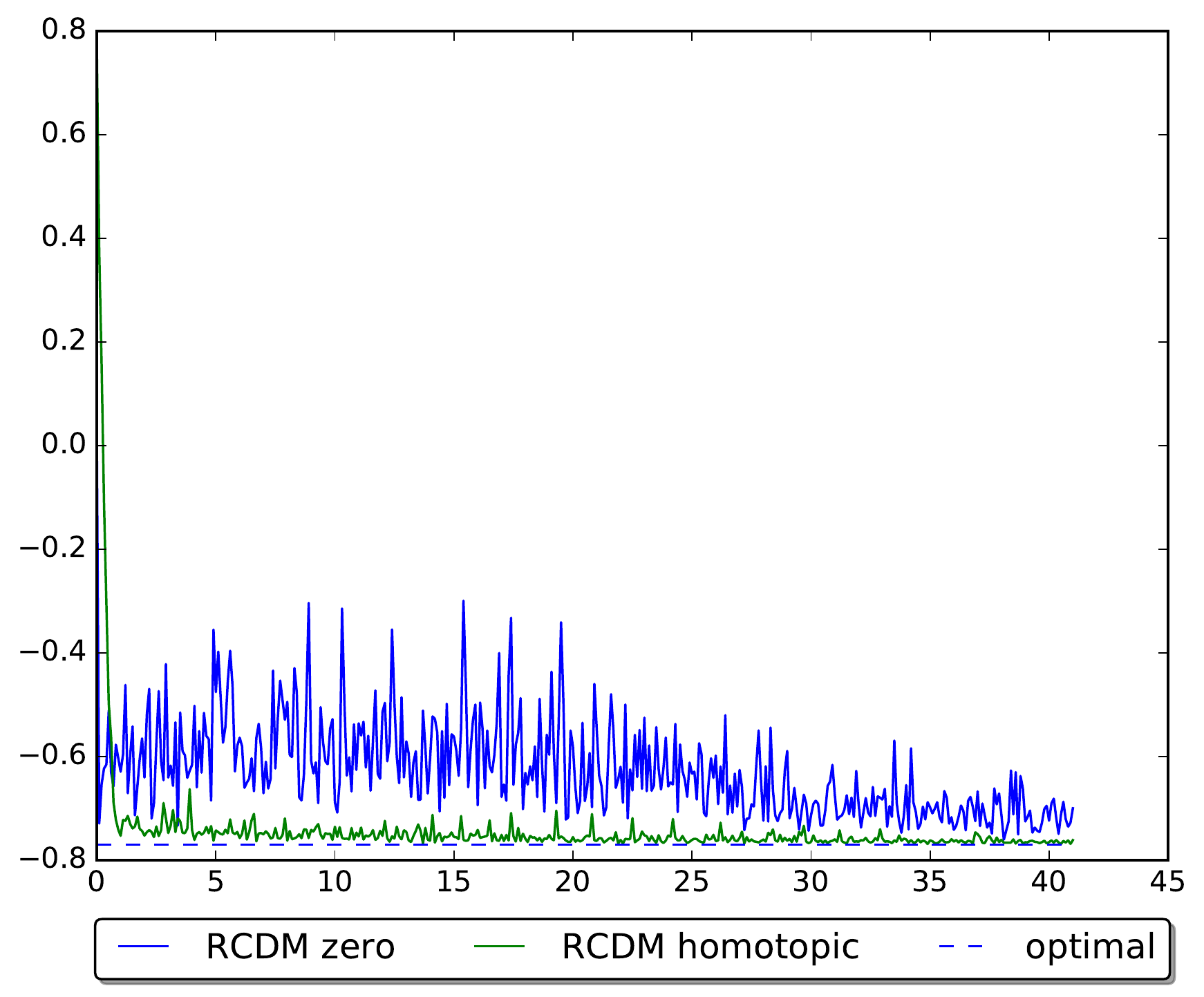} \\ 
  \\ 
  w8a &  \includegraphics[width=0.9\linewidth]{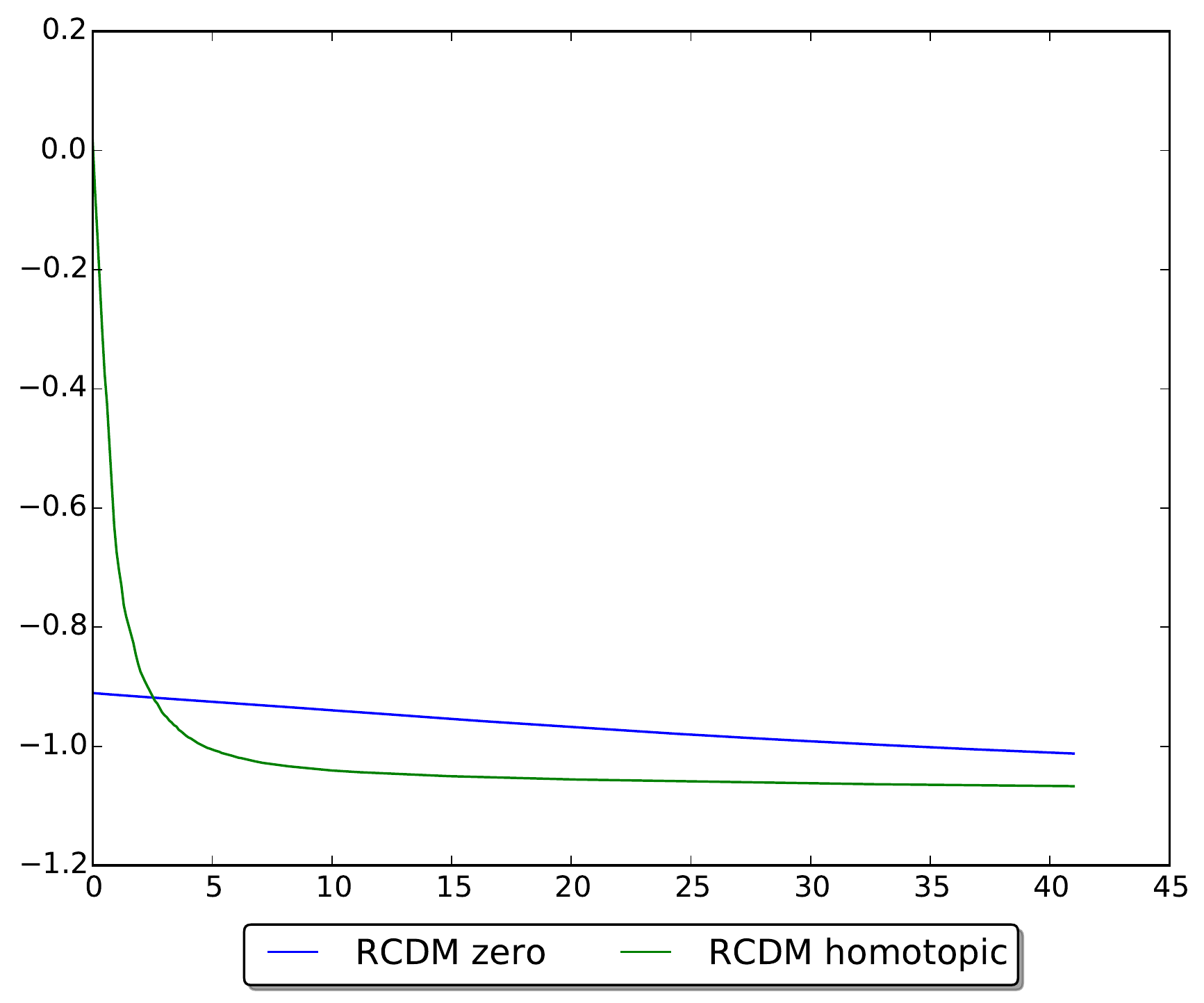}&
  \includegraphics[width=0.9\linewidth]{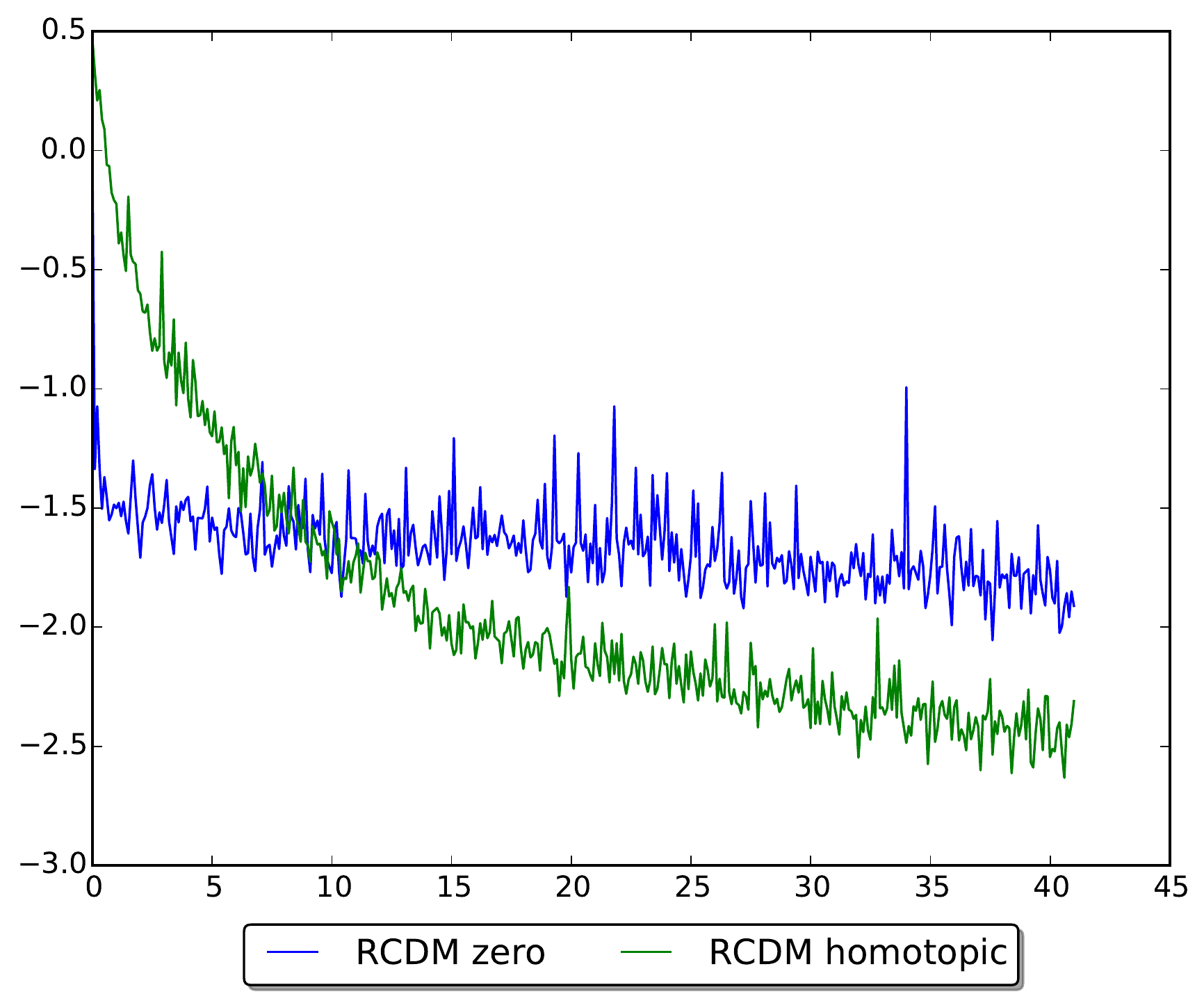} &
  \includegraphics[width=0.9\linewidth]{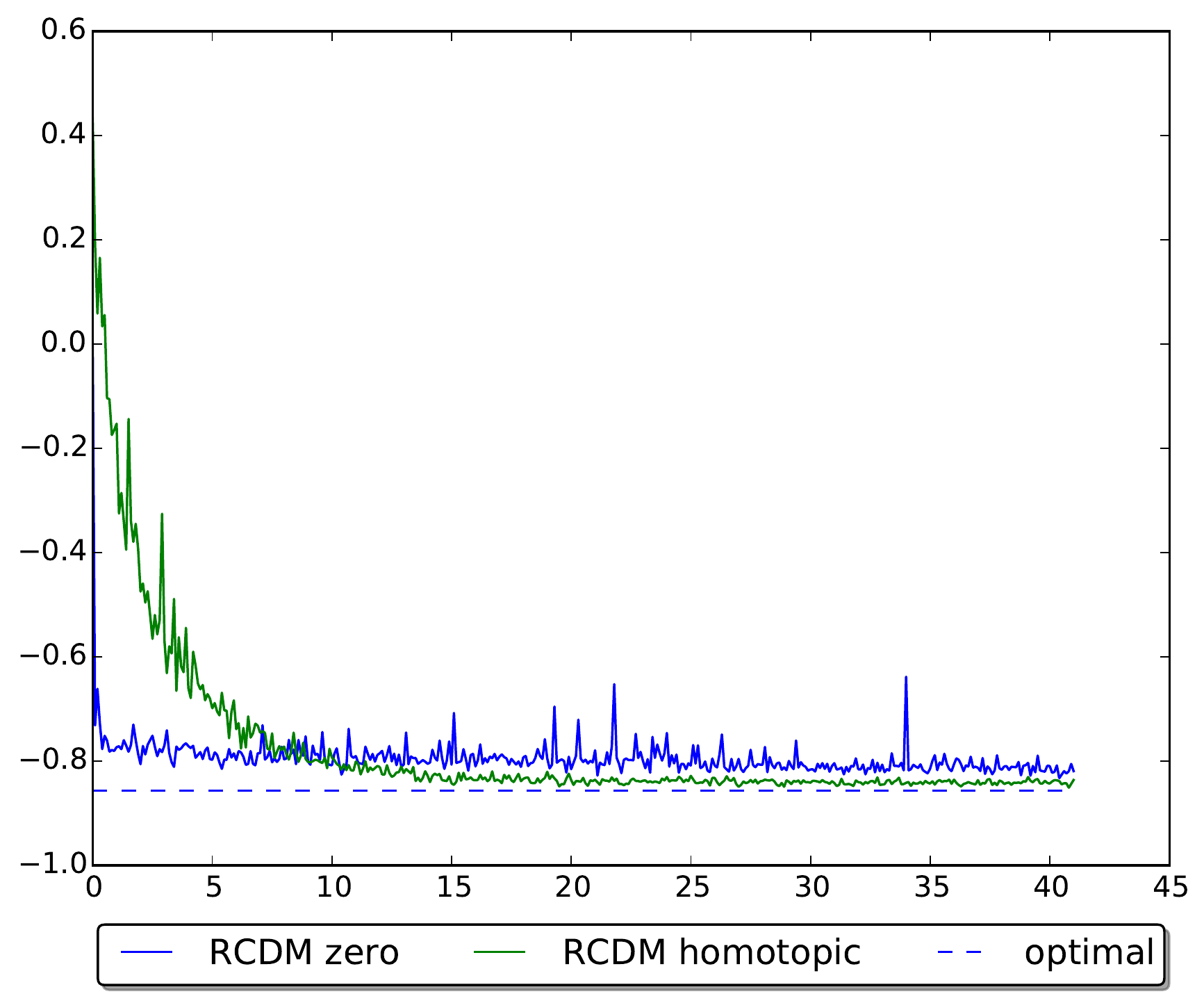}
  \\ 
  SUSY & \includegraphics[width=0.9\linewidth]{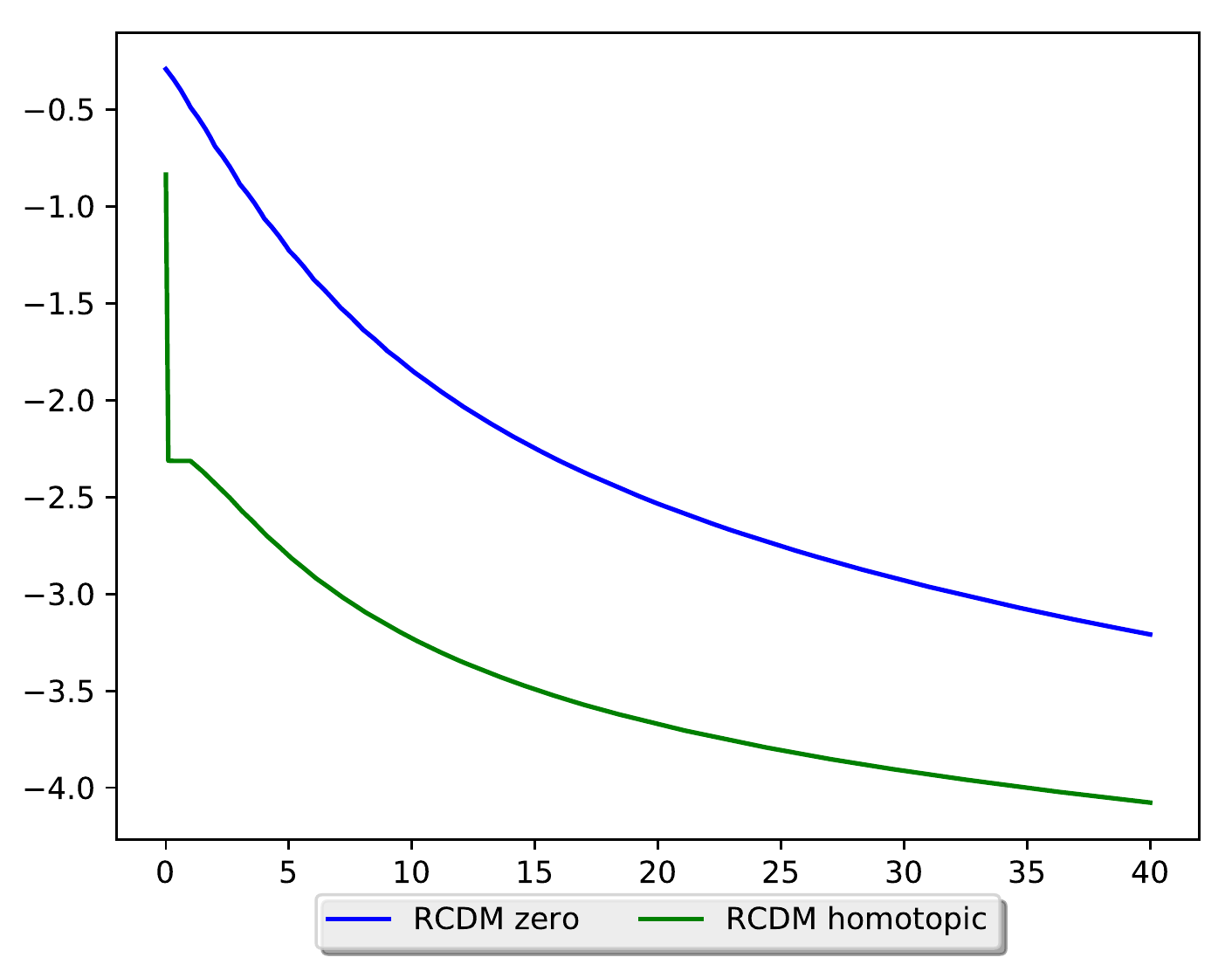}&
  \includegraphics[width=0.9\linewidth]{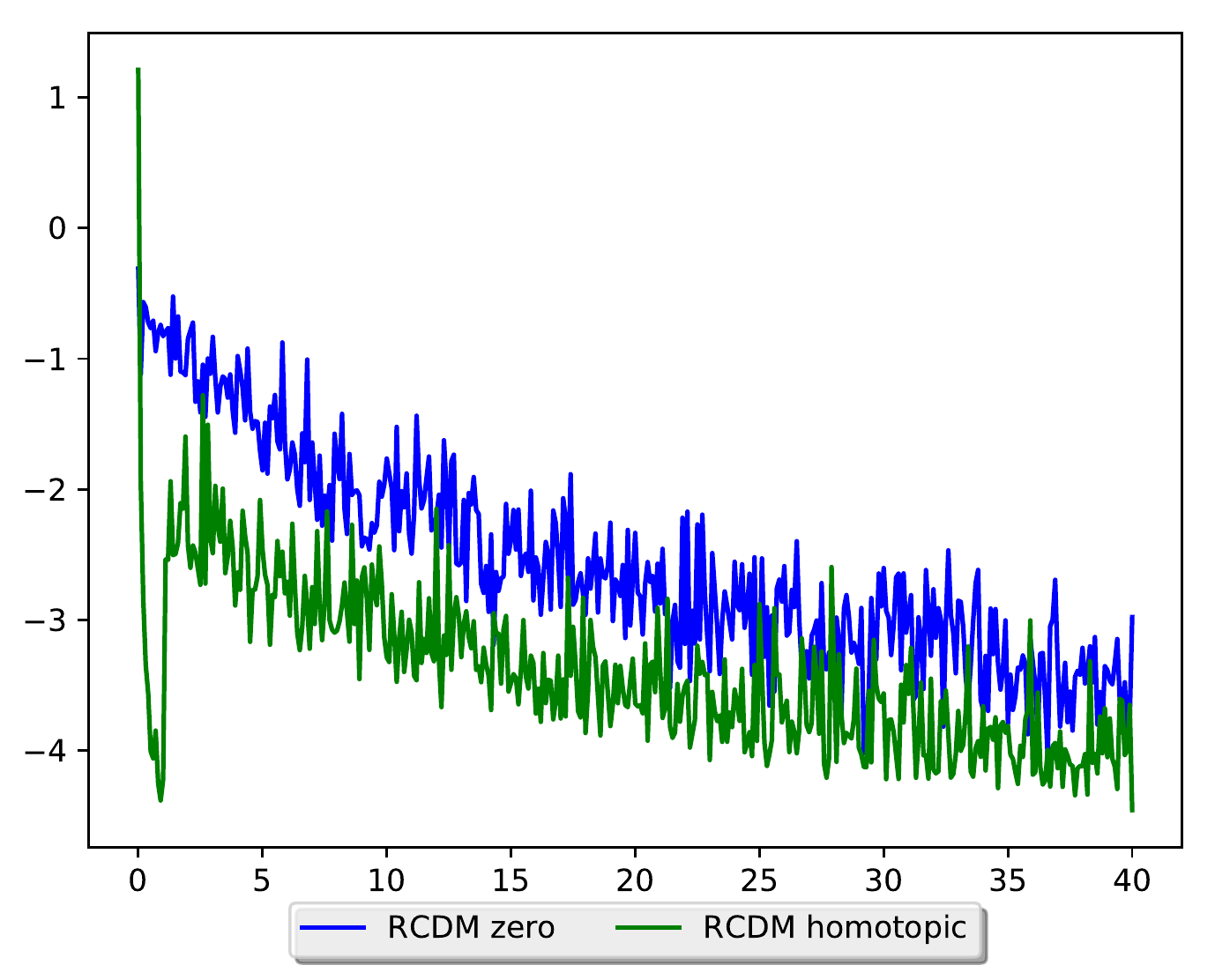} &
  \includegraphics[width=0.9\linewidth]{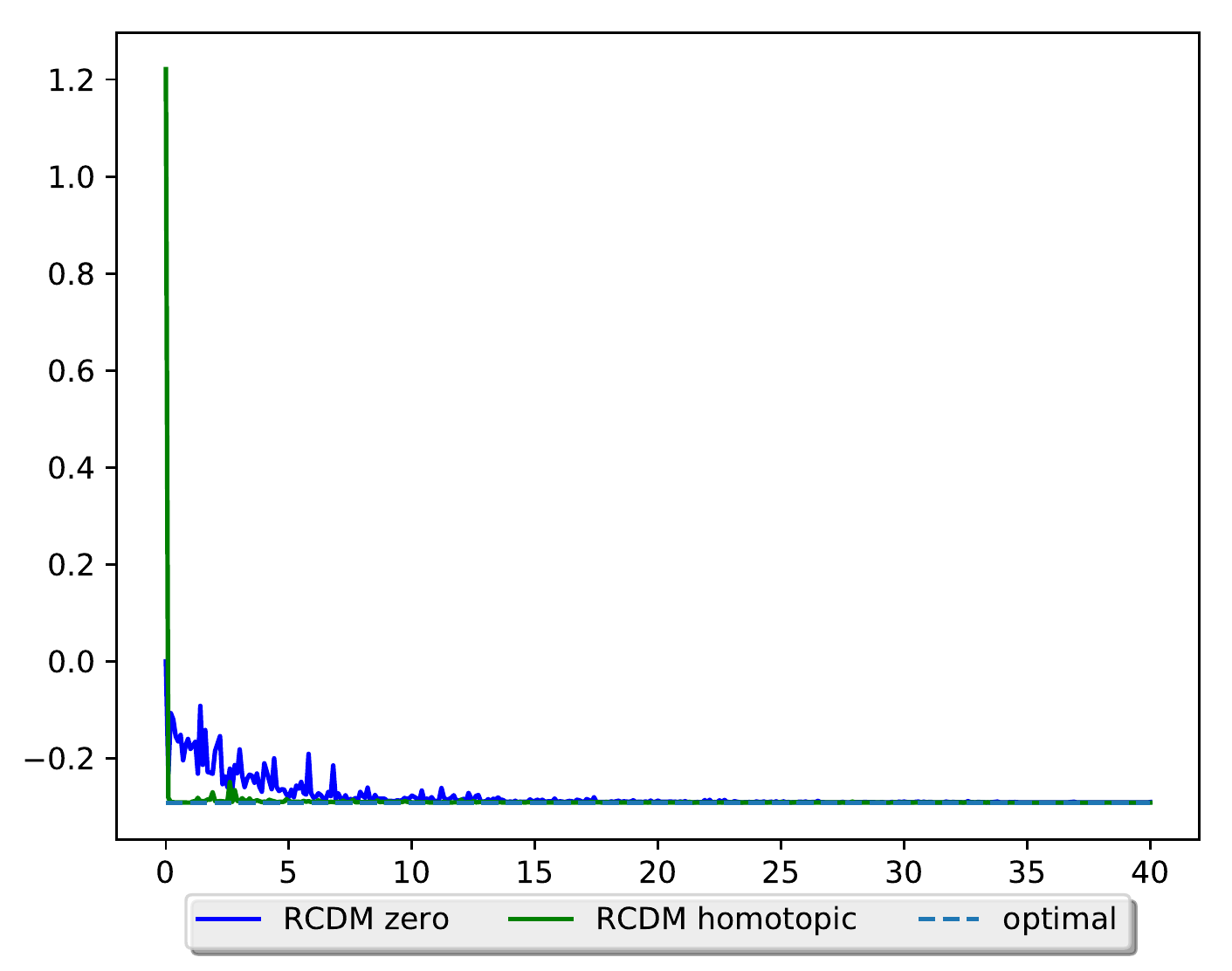}
\end{tabular}
\end{sc}
\end{footnotesize}
\caption{Homotopic initialization for dual SVM.
The vertical axis in plots (from left to right) shows logarithm of dual
suboptimality $\log_{10}( \quadratic(\dparam^{(t)})- \quadratic(\dparam^{*}))$,
primal suboptimality $\log_{10}(\quadratic(\param^{(t)})-
\risk(\param^*))$, and logarithm of the average of hinge loss on the
test set. The horizontal axis shows the number of epochs. The horizontal dashed lined in the test error plot shows test
error of the minimizer of the empirical risk. The
training set includes 80\% of the data.}
\label{fig:homotopic_rcdm_svm}
\end{figure}
\end{document}